\newcommand{\BEAS}{\begin{eqnarray*}}
\newcommand{\EEAS}{\end{eqnarray*}}
\newcommand{\BEA}{\begin{eqnarray}}
\newcommand{\EEA}{\end{eqnarray}}
\newcommand{\BEQ}{\begin{equation}}
\newcommand{\EEQ}{\end{equation}}
\newcommand{\BIT}{\begin{itemize}}
\newcommand{\EIT}{\end{itemize}}
\newcommand{\BNUM}{\begin{enumerate}}
\newcommand{\ENUM}{\end{enumerate}}
\newcommand{\BA}{\begin{array}}
\newcommand{\EA}{\end{array}}
\newcommand{\diag}{\mathop{\rm diag}}
\newcommand{\Diag}{\mathop{\rm Diag}}
\newcommand{\Var}{\mathop{ \rm var{}}}
\newcommand{\argmin}{\mathop{\rm argmin}}
\newcommand{\Tr}{\mathop{ \rm tr}}
\newcommand{\tr}{\mathop{ \rm tr}}
\newcommand{\sign}{\mathop{ \rm sign}}
\newcommand{\idm}{I}
\newcommand{\rb}{\mathbb{R}}
\newcommand\RR{\mathbb{R}} % Ensemble des nombres réels.
\newcommand{\mysec}[1]{Section~\ref{sec:#1}}
\newcommand{\eq}[1]{Eq.~(\ref{eq:#1})}
\newcommand{\myfig}[1]{Figure~\ref{fig:#1}}
\def \E{{\mathbb E}}
\def \P{{\mathbb P}}
\def \E{{\mathbb E}}
\def \P{{\mathbb P}}
\def \t{  ^{ \top} }
\def\half{ \frac{1}{2} }
\def\cplus{ {c_+} }
\def\cplustrans{ {c_+^{\top}} }
\def\cminus{ {c_-} }
\def\cminustrans{ {c_-^{\top}} }
\def \calKM{ {{\mathcal K}{\mathcal M}} }
\def \baru{ {\bar{u}}}
\def \barC{ {\bar{C}}}
\begin{document}
\title{Robust Discriminative Clustering with Sparse Regularizers}

\author{\name Nicolas Flammarion  \email nicolas.flammarion@ens.fr\\ 
\name Balamurugan Palaniappan \email balamurugan.palaniappan@inria.fr\\ 
\name Francis Bach  \email francis.bach@ens.fr \\ 
\addr
INRIA - Sierra Project-team \\
D\'epartement d'Informatique de l'Ecole Normale Sup\'erieure (CNRS - ENS - INRIA) \\
Paris, France}

\editor{}

\maketitle

\begin{abstract}
Clustering high-dimensional data often requires some form of dimensionality reduction, where clustered variables are separated from ``noise-looking'' variables. 
We cast this problem as finding a low-dimensional projection of the data which is well-clustered. This yields a one-dimensional projection in the simplest situation with two clusters, and extends naturally to a multi-label scenario for more than two clusters. 
In this paper, (a) we first show that this  joint clustering and dimension reduction formulation is equivalent to previously proposed discriminative clustering frameworks, thus leading to convex relaxations of the problem; (b) we propose  a novel  sparse extension, which is still cast as a convex relaxation and allows estimation in higher dimensions; (c) we propose a natural extension for the multi-label scenario; (d) we provide a new  theoretical analysis of the performance of these formulations with a simple probabilistic model, leading to scalings over the form $d=O(\sqrt{n})$ for the affine invariant case and $d=O(n)$ for the sparse case, where $n$ is the number of examples and $d$ the ambient dimension;  and finally, (e) we propose an efficient iterative algorithm with running-time complexity  proportional to $O(nd^2)$, improving on earlier algorithms which had quadratic complexity in the number of examples.
\end{abstract}

\section{Introduction}

  Clustering is an important and commonly used pre-processing tool in many machine learning applications, with classical algorithms such as $K$-means \citep{kmeans}, linkage algorithms \citep{single_linkage} or spectral clustering \citep{spectral_clustering}. 
  In high dimensions, these unsupervised learning algorithms typically have problems identifying the underlying optimal discrete nature of the data; for example, they are quickly perturbed by adding a few noisy dimensions.
 Clustering high-dimensional data thus requires some form of dimensionality reduction, where clustered variables are separated from ``noise-looking'' (e.g., Gaussian) variables. 

Several frameworks aim at linearly separating noise from signal, that is finding projections of the data that extracts the signal and removes the noise. They differ in the ways  signals and noise are defined. 
A line of work that dates back to projection pursuit \linebreak \citep{friedman1981projection} and independent component analysis \citep{hyvarinen2004independent} defines the noise as Gaussian while the signal is non-Gaussian \citep{blanchard2006search,roux2011local,diederichs2013sparse}. In this paper, we follow the work of  \citet{de2006discriminative, ding2007adaptive}, along the alternative route where one defines the signal as being clustered while the noise is any non-clustered variable.
 In the simplest situation with two clusters,  we may project the data into a one-dimensional subspace. Given a data matrix $X\in \rb^{n \times d}$ composed of $n$ $d$-dimensional points,  the goal is to find a direction $w \in \rb^d$ such that $X w \in \rb^n$ is well-clustered, e.g., by $K$-means.
This is equivalent to identifying both a direction to project, represented as $w \in \rb^d$ and the labeling $y \in \{-1,1\}^n$ that represents the partition into two clusters.

 Most existing formulations are non-convex and typically perform a form of alternating optimization \citep{de2006discriminative, ding2007adaptive}, where given $y \in \{-1,1\}^n$, the projection $w$ is found by linear discriminant analysis (or any binary classification method), and given the projection $w$, the clustering is obtained by thresholding $Xw$ or running $K$-means on $Xw$. As shown in \mysec{kmeans}, this alternating minimization procedure happens to be equivalent to maximizing the (centered) correlation between $y \in \{-1,1\}^n$ and the projection $X w \in \rb^d$, that is
 $$
\max_{ w \in \rb^d, y \in \{-1,1\}^n } \frac{ (y^\top \Pi_n  X w)^2}{\| \Pi_n  y\|_2^2 \, \| \Pi_n  Xw \|_2^2},
$$
where $\Pi_n = \idm_n - \frac{1}{n} 1_n 1_n^\top$ is the usual centering projection matrix (with $1_n \in \rb^n$ being the vector of all ones, and $\idm_n$ the $n \times n$ identity matrix).
This correlation is equal to one when the projection is perfectly clustered (independently of the number of elements per cluster). Existing methods  are alternating minimization algorithms with no theoretical guarantees.

In this paper, we relate this formulation to discriminative clustering formulations \citep{schuurmans,bach_diffrac_2007}, which consider  the problem 
\BEQ
\label{eq:diffracvb}
\min_{ v \in \rb^d, \ b \in \rb, \ y \in \{-1,1\}^n } \frac{1}{n}\|y - X v - b 1_n\|_2^2, 
\EEQ
with the intuition of finding labels $y$ which are easy to predict by an affine function of the data. In particular, we show that given the relationship between the number of positive labels and negative labels (i.e., the squared difference between the respective number of elements), these two problems are equivalent, and hence discriminative clustering explicitly performs joint dimension reduction and clustering.

While the discriminative framework is based on convex relaxations and has led to interesting developments and applications \citep{zhang2009maximum,joulin2010discriminative,joulin2010efficient,wang2010linear}, it has several shortcomings: 
(a) the running-time complexity of the semi-definite formulations is at least quadratic in $n$, and typically much more, (b) no theoretical analysis has ever been performed, (c) no convex sparse extension has been proposed to handle data with many irrelevant dimensions, (d) balancing of the clusters remains an issue, as it typically adds an extra hyperparameter which may be hard to set. In this paper, we focus on addressing these concerns. 

When there are more than two clusters, one considers either the \emph{multi-label} or the \emph{multi-class} settings. The multi-class problem assumes that the data are clustered into distinct classes, i.e., a single class per observation, whereas the multi-label problem assumes the data share  different labels, i.e., multiple labels per observation. We show in this work that discriminative clustering framework extends more naturally to multi-label scenarios and this extension will have the same convex relaxation. 

A summary of the contributions of this paper follows:

\BIT\setlength{\itemsep}{0pt}

\item In \mysec{kmeans}, we relate discriminative clustering  with the square loss to a joint clustering and dimension reduction formulation.  {The proposed formulation takes care of the balancing hyperparameter implicitly}. 

\item We propose in \mysec{sparse} a novel sparse extension to discriminative clustering and show that it can still be cast through a convex relaxation.

\item When there are more than two clusters, we extend naturally the sparse formulation to a multi-label scenario in \mysec{multilabel}. 

\item  {We then proceed to provide a theoretical analysis of the proposed formulations with a simple probabilistic model in \mysec{theory}, which effectively leads to scalings over the form $d=O(\sqrt{n})$ for the affine invariant case and $d=O(n)$ for the $1$-sparse case}.

\item {Finally, we propose in \mysec{algo} efficient iterative algorithms with running-time complexity for each step equal to $O(nd^2)$, the first to be linear in the number of observations $n$.}

\EIT

 Throughout this paper we assume that $X \in \rb^{n \times d}$ is \emph{centered}, a common pre-processing step in unsupervised (and supervised) learning. This implies that $X^\top 1_n = 0$ and 
$\Pi_n X  =X$.

 \section{Joint Dimension Reduction and Clustering}
 \label{sec:kmeans}

In this section, we focus on the single binary label case, where we first study the usual non-convex formulation, before deriving convex relaxations based on semi-definite programming. 

\subsection{Non-convex formulation}

Following \citet{de2006discriminative,ding2007adaptive,ye2008discriminative}, we consider a cost function which depends on $y \in \{-1,1\}^n$ and $w \in \rb^d$, which is such that alternating optimization is exactly (a) running $K$-means with two clusters on $Xw$ to obtain $y$ given~$w$ (when we say ``running $K$-means'', we mean solving the vector quantization problem exactly), and (b) performing linear discriminant analysis to obtain $w$ given $y$. 

\begin{proposition}[Joint clustering and dimension reduction]
\label{sec:maxcorr}
Given $X \in \rb^{n \times d}$ such that $X^\top 1_n = 0$ and $X$ has rank $d$, consider the optimization problem
\BEQ
\label{eq:maxcorr}
\max_{ w \in \rb^d, y \in \{-1,1\}^n } \frac{ (y^\top  X w)^2}{\| \Pi_n  y\|_2^2 \, \|  Xw \|_2^2}.
\EEQ
Given $y$, the optimal $w$ is obtained as $w = (X^\top X)^{-1} X^\top y$, while given $w$, the optimal $y$ is obtained by  running $K$-means on $Xw$.
\end{proposition}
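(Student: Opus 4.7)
The plan is to exploit the scale-invariance of the objective in $w$ and its combinatorial-over-affine structure in $y$ separately, handling the two alternating updates in turn. First, I would note that since $X^\top 1_n = 0$, one has $X^\top y = X^\top \Pi_n y$, so replacing $y$ by $\tilde{y} = \Pi_n y$ inside the bilinear form changes nothing; likewise, $\|\Pi_n y\|_2^2$ in the denominator is independent of $w$.

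For the optimal $w$ given $y$, I would recast the problem as maximizing the generalized Rayleigh quotient
\[
\frac{(w^\top X^\top y)^2}{w^\top (X^\top X) w}
\]
(dropping the $w$-independent factor $\|\Pi_n y\|_2^2$). With $M = X^\top X \succ 0$ (by full column rank of $X$), the substitution $v = M^{1/2} w$ reduces this to $((M^{-1/2} X^\top y)^\top v)^2 / \|v\|_2^2$, which by Cauchy--Schwarz is maximized (up to an irrelevant scalar) at $v \propto M^{-1/2} X^\top y$, i.e.\ $w \propto (X^\top X)^{-1} X^\top y$. Choosing the natural normalization gives $w = (X^\top X)^{-1} X^\top y$.

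For the optimal $y$ given $w$, I would set $z = Xw$ and observe that $z$ is centered ($1_n^\top z = w^\top X^\top 1_n = 0$), so $\|z\|_2^2$ is constant and the problem becomes $\max_{y \in \{-1,1\}^n} (y^\top z)^2 / \|\Pi_n y\|_2^2$. The main combinatorial step, and the place where one needs to be a bit careful, is to identify this ratio with the $K$-means between-cluster sum of squares on $z$. Writing $C_+ = \{i : y_i = 1\}$, $C_- = \{i : y_i = -1\}$, with sizes $n_+, n_-$ and cluster means $c_+, c_-$ of $z$, a direct calculation using $n_+ c_+ + n_- c_- = 0$ yields $\|\Pi_n y\|_2^2 = 4 n_+ n_- / n$ and $(y^\top z)^2 = 4 n_+^2 c_+^2 = 4 n_-^2 c_-^2$, hence
\[
\frac{(y^\top z)^2}{\|\Pi_n y\|_2^2} \;=\; n_+ c_+^2 + n_- c_-^2,
\]
which is precisely the between-cluster sum of squares of $z$.

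I would then conclude using the standard identity $\sum_i z_i^2 = (n_+ c_+^2 + n_- c_-^2) + \mathrm{WCSS}(y; z)$ between total, between-cluster and within-cluster variance: maximizing the ratio over $y \in \{-1,1\}^n$ is equivalent to minimizing the within-cluster sum of squares of $z$, which is by definition two-means on $z$. The main obstacle is purely the book-keeping in this last identification; once the centering of $z$ is exploited, everything collapses cleanly.
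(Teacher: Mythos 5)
Your proof is correct, and the $w$-step (Rayleigh quotient / Cauchy--Schwarz after whitening by $(X^\top X)^{1/2}$, using rank $d$ for invertibility and scale-invariance to fix the normalization) is the same argument the paper gives. For the $y$-step you reach the same key identity as the paper --- that after optimizing the centroids the $K$-means distortion equals $\frac{1}{n}\|Xw\|_2^2 - \frac{(y^\top Xw)^2}{n\,\|\Pi_n y\|_2^2}$ --- but by a different and cleaner route: you work directly with the scalar projections $z = Xw$, compute $\|\Pi_n y\|_2^2 = 4n_+n_-/n$ and $(y^\top z)^2 = 4n_+^2c_+^2$ from the centering constraint $n_+c_+ + n_-c_- = 0$, identify the ratio with the between-cluster sum of squares, and invoke the total $=$ within $+$ between decomposition. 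The paper instead writes two-means as a rank-two matrix factorization $X \approx \frac{y+1_n}{2}c_+^\top + \frac{1_n-y}{2}c_-^\top$, eliminates the centroids, and grinds through a trace expansion in which the centering of $X$ kills the cross terms $\tr XX^\top 1_n y^\top$, etc. Your ANOVA-style argument is more elementary and makes the role of centering more transparent, at the cost of being specific to the one-dimensional two-cluster case (which is all that is needed here); the paper's matrix computation is what generalizes to its later multi-class discussion. The only point left implicit in both treatments is the degenerate assignment $y = \pm 1_n$, for which $\|\Pi_n y\|_2 = 0$ and the objective is undefined; this should be excluded (it corresponds to an empty cluster).
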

\begin{proof}
Given $y$, we need to optimize the Rayleigh quotient $\frac{ w^\top X^\top y y^\top X w}{ w^\top X^\top X w}$ with a rank-one matrix in the numerator, which leads to $w=(X^\top X)^{-1} X^\top y$. 
Given $w$, we show in Appendix~\ref{app:kmean}, that the averaged distortion measure of $K$-means once the means have been optimized is exactly equal to $ { (y^\top    X w)^2}/{\| \Pi_n  y\|_2^2 }$.
\end{proof}

\paragraph{Algorithm.}
The proposition above leads to an alternating optimization algorithm. Note that $K$-means in one dimension may be run \emph{exactly} in $O(n \log n)$ \citep{bellman1973note}.
Moreover, after having optimized with respect to $w$ in \eq{maxcorr}, we then need to maximize with respect to $y$  the function
$\frac{ y^\top X (X^\top X)^{-1} X^\top y}{\| \Pi_n  y\|_2^2 }$, which happens to be exactly performing $K$-means on the whitened data (which is now in high dimension and not in 1 dimension). 
At first, it seems that dimension reduction is \emph{simply} equivalent to whitening the data and performing $K$-means; while this is a formally correct statement, the resulting $K$-means problem is not easy to solve as the clustered dimension is hidden in noise; for example, algorithms such as $K$-means++ \citep{arthur2007}, which have a multiplicative theoretical guarantee on the final distortion measure, are not provably effective here because the minimal final distortion is then not small, and the multiplicative guarantee is meaningless.

\subsection{Convex relaxation and discriminative clustering}
\label{sec:diffracy}

The discriminative clustering formulation in~\eq{diffracvb} may be optimized for any $y \in \{-1,1\}^n$ in closed form with respect to $b$ as $b=\frac{1_{n}\t(y-Xv)}{n}=\frac{1_{n}\t y}{n}$ since $X$ is centered. Substituting $b$ in~\eq{diffracvb} leads us to   
\BEQ
\label{eq:diffracequiv}
\min\limits_{ v \in \rb^d }  \frac{1}{n} \| \Pi_n  y - X v \|_2^2  =   \frac{1}{n} \| \Pi_n  y\|_2^2  - \max\limits_{ w \in \rb^d } \frac{ (y^\top X w)^2}{  \| Xw \|_2^2},
\EEQ
where $v$ is obtained from any solution $w$ as $v = w \frac{ y^\top X w} {\| X w\|_2^2}$. 
Thus, given 
\BEQ
\frac{(y^\top 1_n)^2}{n^2} =
\frac{1}{n^2}\big(
\#\{i, y_i=1\} - \#\{i, y_i=-1\}
\big)^2
=
 \alpha \in [0,1],
 \EEQ
 which characterizes the asymmetry between clusters and with $\Vert \Pi_n y \Vert ^2=n(1-\alpha)$,  we obtain from \eq{diffracequiv}, an equivalent formulation to \eq{maxcorr} (with the added constraint) as 
\BEQ
\label{eq:diffracyv}
\min_{ y \in \{-1,1\}^n, \ v \in \rb^d}   \frac{1}{n}  \| \Pi_n  y - X v \|_2^2 \ \mbox{ such that } \ \frac{(y^\top 1_n)^2}{n^2} = \alpha.
\EEQ
This is exactly equivalent to a discriminative clustering formulation with the square loss. Following \citet{bach_diffrac_2007}, we may optimize \eq{diffracyv} in closed form with respect to $v$ as $v = (X^\top X)^{-1} X^\top y$. Substituting $v$ in \eq{diffracyv} leads us to 
\BEQ
\label{eq:diffracyv2}
\min_{ y \in \{-1,1\}^n }    \frac{1}{n} y^\top
\big( \Pi_n  - X(X^\top X)^{-1} X^\top 
\big) y  \ \mbox{ such that } \ \frac{(y^\top 1_n)^2}{n^2} = \alpha.
\EEQ
This combinatorial optimization problem is NP-hard in general \citep{Kar72,GarJohSto76}. Hence in practice, it is classical to consider the following convex relaxation of~\eq{diffracyv2} \citep{relaxclas}. For any admissible $y \in \{-1,+1\}^n$, the matrix $Y=yy\t\in \rb^{n \times n}$ is a rank-one symmetric positive semi-definite matrix with  unit diagonal entries and conversely any such~$Y$ may be written in the form $Y=yy\t$ such that $y$ is admissible for~\eq{diffracyv2}. Moreover by rewriting~\eq{diffracyv2} as
$$
\min_{ y \in \{-1,1\}^n }    \frac{1}{n} \tr y y^\top
\big( \Pi_n  - X(X^\top X)^{-1} X^\top 
\big)  \ \mbox{ such that } \  \frac{1_{n}\t(yy^\top)1_{n} }{n^2} = \alpha,
$$
we see that the objective and constraints    are linear in the matrix  $Y=yy\t$ and ~\eq{diffracyv2} is equivalent to
\[
 \min_{ Y \succcurlyeq 0, \ {\rm rank}(Y)=1 \ \diag(Y) = 1 }    \frac{1}{n}\tr Y 
\big( \Pi_n  - X(X^\top X)^{-1} X^\top 
\big)   \mbox{ such that } \frac{1_n^\top Y 1_n }{n^2} = \alpha.
\]
Then  dropping the non-convex rank constraint leads us to the following classical convex relaxation: 
\begin{equation} \label{eq:diffracalpha}
\min_{ Y \succcurlyeq 0, \ \diag(Y) = 1 }    \frac{1}{n}\tr Y 
\big( \Pi_n  - X(X^\top X)^{-1} X^\top 
\big)   \mbox{ such that } \frac{1_n^\top Y 1_n }{n^2} = \alpha.
\end{equation}
This is the standard (unregularized) formulation, which is cast as a semi-definite program. The complexity of interior-point methods is $O(n^7)$, but efficient algorithms in $O(n^2)$ for such problems have been developed due to the relationship with the max-cut problem \citep{journee2010low,wen2012block}. 

Given the solution $Y$, one may traditionally obtain a candidate $y \in \{-1,1\}^n$ by running $K$-means on the largest eigenvector of $Y$ or by sampling  \citep{maxcut}. In this paper, we show in \mysec{theory} that it may be advantageous to consider the first two eigenvectors.
 
\subsection{Unsuccessful full convex relaxation}

The formulation in~\eq{diffracalpha} imposes an extra parameter $\alpha$ that characterises the cluster imbalance. It is tempting to find a direct relaxation of \eq{maxcorr}. It turns out to lead to a trivial relaxation, which we outline below. 

When optimizing \eq{maxcorr} with respect to $w$, we obtain the following optimization problem
$$
\displaystyle \max_{y \in \{-1,1\}^n}    \frac{ y^\top X (X^\top X )^{-1} X^\top y}{ y^\top \Pi_n  y},
$$
leading to a quasi-convex relaxation as
$$ \displaystyle
\max_{Y \succcurlyeq 0, \ \diag(Y) = 1}    \frac{\tr Y  X (X^\top X )^{-1} X^\top }{ \tr \Pi_n  Y},
$$
whose solution is found by solving a sequence of convex problems \citep[Section 4.2.5]{boyd}. 
As shown in Appendix~\ref{app:unsucces}, this may be exactly reformulated as a single convex problem:
$$ \displaystyle
\max_{M \succcurlyeq 0, \ \diag(M) = 1 + \frac{1^\top M 1}{n^2} }     \tr M  X (X^\top X )^{-1} X^\top.$$
Unfortunately, this relaxation always leads to trivial solutions, and we thus need to consider the relaxation in \eq{diffracalpha} for several values of $\alpha =   {1_n^\top Y 1_n }/{n^2}$ (and then the non-convex algorithm can be run from the rounded solution of the convex problem, using \eq{maxcorr} as a final objective). Alternatively, we may solve the following {\emph{penalized}} problem for several values of $\nu \geqslant 0$:
\BEQ
\label{eq:diffracnu}
\min_{ Y \succcurlyeq 0, \ \diag(Y) = 1 }    \frac{1}{n}\tr Y 
\big( \Pi_n  - X(X^\top X)^{-1} X^\top 
\big)   + \frac{\nu}{n^2}  {1_n^\top Y 1_n }.
\EEQ
For $\nu = 0$, $Y = 1_n 1_n^\top$ is always a trivial solution. 
As outlined in our theoretical section and as observed in our experiments, it is sufficient to consider $\nu \in [0,1]$.

\subsection{Equivalent relaxations}

Optimizing \eq{diffracyv} with respect to $v$ in closed form  as in \mysec{diffracy} is feasible with no regularizer or with a quadratic regularizer. However, if one needs to add more complex regularizers, we need a different relaxation. We start from the penalized version of \eq{diffracyv}, 
\BEQ
\label{eq:diffracyvpen}
\min_{ y \in \{-1,1\}^n, \ v \in \rb^d}   \frac{1}{n}  \| \Pi_n  y - X v \|_2^2+ \nu \frac{(y^\top 1_n)^2}{n^2},
\EEQ
which we expand as:
\BEQ
\label{eq:noncvx}
\min_{ y \in \{-1,1\}^n, \ v \in \rb^d}    \frac{1}{n}   \tr \Pi_n  yy^\top 
-    \frac{2}{n} \tr X v y^\top +    \frac{1}{n} \tr X^\top X v v^\top
+ \nu \frac{(y^\top 1_n)^2}{n^2},
\EEQ
and relax as, using $Y=y y\t $, $P=y v\t$ and $V=v v \t$,
\BEQ
\label{eq:diffracfull}
\min_{ V, P, Y }    \frac{1}{n} \tr \Pi_n  Y -    \frac{2}{n} \tr P^\top X +    \frac{1}{n} \tr X^\top X V + \nu \frac{1_n^\top Y 1_n }{n^2}
\mbox{ s.t.} \ \bigg( \!\! \begin{array}{cc} Y \!\! &\!\! P \\ P^\top\!\! &\!\!  V \end{array} \!\!\bigg) \succcurlyeq 0, \ 
\diag(Y)=1.
\EEQ

  When optimizing \eq{diffracfull} with respect to $V$ and $P$, we get exactly \eq{diffracnu}. 
  Indeed, the optimum is attained for   $ V =   (X^\top X)^{-1} X^\top Y X  (X^\top X)^{-1} $ and $P =  Y X  (X^\top X)^{-1}$ as shown in Appendix~\ref{sec:relaxeq1}. 
 Therefore, the convex relaxation in \eq{diffracfull} is equivalent to \eq{diffracnu}.

However, we get an interesting behavior when optimizing \eq{diffracfull} with respect to $P$ and $Y$ also in closed form. 
For $\nu = 1$, we obtain, as shown in Appendix~\ref{sec:relaxeq2}, the following closed form expressions:  
\BEAS
Y & = & \Diag(\diag(X V X ^\top))^{-1/2} X V X ^\top \Diag(\diag(X V X ^\top))^{-1/2} \\
P & = & \Diag(\diag(X V X ^\top))^{-1/2} X V ,
\EEAS
 leading to the problem: 
\BEQ
\label{eq:relaxW2}
\min_{V \succcurlyeq 0} \ \ 
1 -    \frac{2}{n} \sum_{i=1}^n \sqrt{ ( XVX^\top)_{ii} }  +    \frac{1}{n} \tr ( V X^\top X).
\EEQ
The formulation above in \eq{relaxW2} is interesting for several reasons: (a) it is formulated as an optimization problem in $V \in \rb^{d \times d}$, which will lead to algorithms whose running time will depend on $n$ linearly (see \mysec{algo}), (b) it allows for easy adding of regularizers (see \mysec{sparse}), which may be formulated as convex functions of $V = vv^\top$. However, note that this is valid only for $\nu=1$. 
We now show how to reformulate any problems with $\nu\in [0,1)$ through a simple data augmentation.

\paragraph{Reformulation for any $\nu$.}
When $\nu \in [0,1)$,  we may reformulate the objective function in \eq{diffracyvpen} as follows:
  \BEA   
 \frac{1}{n} \|  \Pi_n y-X v\|_2^2+\nu\frac{( y\t 1_n )^{2}}{n^{2}}
 & = &
  \frac{1}{n} \|  \Pi_n y-X v+\nu\frac{ y\t 1_n}{n}1_n\|_2^2-\big(\nu\frac{ y\t 1_n}{n}\big)^2+\nu\big(\frac{ y\t 1_n }{n}\big)^2 \nonumber \\
 & = &
  \frac{1}{n} \|  y-X v-(1-\nu)\frac{ y\t 1_n}{n}1_n\|_2^2+ \frac{\nu}{ 1- \nu} \big((1-\nu)\frac{ y\t 1_n }{n}\big)^2 \nonumber \\
  &=& \min_{b \in \rb}   
  \frac{1}{n} \| y - X v -  b 1_n \|_2^2 + \frac{\nu}{ 1- \nu} b^2,
  \label{eq:nu_zeroone_reformln}  
  \EEA
since $\frac{1}{n} \| y - X v -  b 1_n \|_2^2 + \frac{\nu}{ 1- \nu} b^2$  can be optimized in closed form with respect to $b$ as $b=(1-\nu)\frac{ y\t1_n }{n}$. Note that the weighted imbalance ratio $(1-\nu)\frac{ y\t 1_n }{n}$ is made as an optimization variable in~\eq{nu_zeroone_reformln}. Thus we have the following reformulation 
\BEA \label{eq:nudiff}
 && \min_{ v \in \rb^d, \ y \in \{-1,1\}^n }  \frac{1}{n} \|  \Pi_n y-X v\|_2^2+\nu\frac{( y\t 1_n )^{2}}{n^{2}} \nonumber \\  
  & = & \min_{ v \in \rb^{d},b\in \RR,  \ y \in \{-1,1\}^n } 
   \frac{1}{n} \| y - X v -  b 1_n \|_2^2 + \frac{\nu}{ 1- \nu} b^2,
\EEA
which is a non-centered penalized formulation on a higher-dimensional problem in the variable  
$\bigl (\begin{smallmatrix}
v \\ b
\end{smallmatrix}\bigr)
\in \rb^{d+1}$.
In the rest of the paper, we will focus on the case $\nu=1$ as it is simpler to present, noticing that by adding a constant term and a quadratic regularizer, we may treat the problem with equal ease when $\nu \in [0,1)$. This enables the use  of the formulation in  \eq{relaxW2}, which is easier to optimize.

\section{Regularization }
\label{sec:sparse}

There are several natural possibilities. We consider norms $\Omega$ such that $\Omega(w)^2 = \Gamma(ww^\top)$ for a certain convex function $\Gamma$; all norms have that form \citep[Proposition 5.1]{fot}. 
When $\nu=1$, \eq{relaxW2}  then becomes 
\BEQ
\label{eq:relaxYY}
\max_{V \succcurlyeq 0}    \frac{2}{n} \sum_{i=1}^n \sqrt{ ( XVX^\top)_{ii} }  -    \frac{1}{n} \tr ( V X^\top X) - \Gamma(V).
\EEQ
The quadratic regularizers $\Gamma(V) =   \tr  \Lambda V$ have already been tackled by \citet{bach_diffrac_2007}. They  consider the regularized version of  problem in \eq{diffracequiv}
\BEQ \label{eq:relaxquad}
\min\limits_{ v \in \rb^d }  \frac{1}{n} \| \Pi_n  y - X v \|_2^2 +  v\t \Lambda v,
\EEQ
  optimize in closed form with respect to $v$ as $v = (X^\top X +n \Lambda)^{-1} X^\top y$. Substituting $v$ in \eq{relaxquad} leads them to 
$$
 \min_{ Y \succcurlyeq 0, \ \diag(Y)=1 }   \frac{1}{n}\tr Y \big( \Pi_n  -   X ( X^\top X + n \Lambda )^{-1} X \big).
$$
In this paper, we formulate a novel sparse regularizer, which is a combination of weighted squared $\ell_1$-norm and $\ell_2$-norm. It leads to 
\[
\Gamma(V) =   \tr [ \Diag(a) V \Diag(a) ] +   \| \Diag(c)  V\Diag(c)  \|_1,
\] 
such that $\Gamma(vv^\top) = \sum_{i=1}^d a_i^2 v_i^2 + \big( \sum_{i=1}^d c_i |v_i| \big)^2$. This allows to treat all situations simultaneously, with $\nu=1$ or with $\nu\in[0,1)$. To be more precise, when $\nu\in[0,1)$, we can consider in \eq{nudiff}, a problem of size $d+1$ with a design matrix $[X, 1_{n}]\in \RR^{n\times (d+1)}$, a direction of projection $\bigl (\begin{smallmatrix}
v \\ b
\end{smallmatrix}\bigr)
\in \rb^{d+1}$ and different weights for the last variable with $a_{d+1}=\frac{\nu}{1-\nu}$ and $c_{d+1}=0$. 

 Note that the sparse regularizers on $V$ introduced in this paper are significantly different when compared to the sparse regularizers on variable $v$ in \eq{diffracequiv}, for example, considered by \citet{multiview_clustering}. A straightforward sparse regularizer on $v$ in \eq{diffracequiv}, despite  leading to a sparse projection, does not yield natural generalizations of the discriminative clustering framework in terms of theory or algorithms. However the sparse regularizers considered in this paper, in addition to their algorithmic appeal for certain applications, also lead to robust cluster recovery under minor assumptions, as will be illustrated on a simple example in Section~\ref{sec:theory}.

\section{Extension to Multiple Labels}
\label{sec:multilabel}

The discussion so far has focussed on two clusters. 
Yet it is key in practice to tackle more clusters. It is worth noting that the discrete formulations in \eq{maxcorr} and \eq{diffracyv} extend directly to more than two clusters. However two different extensions of  the initial problems \eq{maxcorr} or \eq{diffracyv} are conceivable. They lead to problems with different constraints on different optimization domains and, consequently, to different relaxations. We discuss these possibilities next.

One extension is the \emph{multi-class} case. The  multi-class problem which is dealt with by \citet{bach_diffrac_2007} assumes that the data are clustered into $K$ classes and the various partitions of the data points into clusters are represented by the $K$-class indicator matrices $y\in\{0,1\}^{n\times K}$ such that $y 1_K=1_n$. The constraint $y 1_K=1_n$ ensures that one data point belongs to only one cluster. However as discussed by \citet{bach_diffrac_2007}, by letting $Y=yy\t$, it is possible to lift these $K$-class indicator matrices into the outer convex approximations $\mathcal{C}_K=\{Y\in\RR^{n\times n}: Y=Y\t, \diag(Y)=1_n, Y\succcurlyeq0, Y \preccurlyeq \frac{1}{K}1_n1_n\t\}$ \citep{frieze}, which is different for all values of $K$. Note that letting $K=2$ corresponds to the previous sections.

We now discuss the other possible extension, which is the  \emph{multi-label} case. The multi-label problem assumes that the data share $k$ labels and the data-label membership is represented by matrices $y\in\{-1, +1\}^{n\times k}$. In other words, the multi-class problem embeds the data in the extreme points of a simplex, while the multi-label problem does so in the extreme points of the hypercube.

The discriminative clustering formulation of the multi-label problem is
\BEQ
\label{eq:diffracmultilabel}
\min_{ v \in \rb^{d\times k}, \ y \in \{-1,1\}^{n\times k} } \frac{1}{n}\|\Pi_{n}y - X v \|_F^2, 
\EEQ
where the Frobenius norm is defined for any vector or rectangular matrix as $\Vert A \Vert_{F}^{2}= \tr AA\t =\tr A\t A$. Letting $k=1$ here corresponds to the previous sections. The discrete ensemble of matrices $y\in\{-1, +1\}^{n\times k}$ can be naturally lifted into $\mathcal{D}_k=\{Y\in\RR^{n\times n}: Y=Y\t, \diag(Y)=k 1_n, Y\succcurlyeq0\}$, since $\diag(Y)=\diag( yy\t)=\sum_{i=1}^{k}y_{i,i}^{2}=k$.  
As the optimization problems in \eq{diffracalpha} and \eq{diffracnu} have linear objective functions, we can change the variable from $Y$ to $\tilde Y=Y/k$ to change the constraint $\diag(Y)=k 1_n$ to $\diag(\tilde Y)= 1_n$  without changing the optimizer of the problem. Thus the problems can be solved over the relaxed domain $\mathcal{D}=\{Y\in\RR^{n\times n}: Y=Y\t, \diag(Y)= 1_n, Y\succcurlyeq0\}$ which is independent of~$k$. 

Note that the domain $\mathcal{D}$ is similar to that considered in the problems in \eq{diffracnu} and \eq{diffracfull} and these convex relaxations are the same regardless of the value of $k$. Hence the multi-label problem is a more natural extension of the discriminative framework, with a slight change in how the labels $y$ are recovered from the solution $Y$ (we discuss this in Section~\ref{subsec:multilabel_analysis}).

\section{Theoretical Analysis}
\label{sec:theory}

In this section, we provide a theoretical analysis for the discriminative clustering framework. We start with the 2-clusters situation: the non-sparse case is considered first and analysis is provided for both balanced and imbalanced clusters. Our study for the sparse case currently only provides  results for the  simple $1$-sparse solution. However, the analysis also yields valuable insights on the scaling between $n$ and $d$. We then derive results for multi-label situation. 

For ease of analysis, we consider the constrained problem in \eq{diffracalpha}, the penalized problem in \eq{diffracnu} or their equivalent relaxations in \eq{relaxW2} or \eq{relaxYY} under various scenarios, for which we use the same proof technique. We first try to characterize the low-rank solutions of these relaxations and then show in certain simple situations the uniqueness of such solutions, which are then non-ambiguously found by convex optimization. Perturbation arguments could extend these results by weakening our assumptions but are not within the scope of this paper, and hence we do not investigate them further in this section.   

\subsection{Analysis for $2$ clusters: non-sparse problems}\label{sec:theoanalnonsparse}

In this section, we consider several noise models for the problem, either adding irrelevant dimensions or perturbing the label vector with noise.
We consider these separately for simplicity, but they could also be combined (with little extra insight).

\subsubsection{Irrelevant dimensions}

We consider an ``ideal'' design matrix $X\in\RR^{n\times d}$ such that there exists a direction $v$ along which the projection $Xv$ is perfectly clustered into two distinct real values $c_1$ and $c_2$.
 Since \eq{maxcorr} is invariant by affine transformation, we can rotate the design matrix $X$ to have  $X=[ y,Z]$ with $y\in\{-1,1\}^{n}$, which is clustered into $+1$ or $-1$ along the direction $v=\bigl (\begin{smallmatrix}
1 \\ 0_{d-1}
\end{smallmatrix}\bigr)$. Then after being centered, the design matrix is written as $X=[ \Pi_n y,Z]$ with $Z=[z_1,\dots,z_{d-1}]\in\RR^{n \times (d-1)}$. The columns of $Z$ represent the noisy irrelevant dimensions added on top of the signal $y$.

\subsubsection{Balanced problem}

When the problem is well balanced ($y\t 1_{n}=0$), $y$ is already centered and $\Pi_n y=y$. Thus the design matrix is represented as $X=[y,Z]$.
We consider here the penalized formulation in \eq{diffracnu} with $\nu=1$ which is easier to analyze in this setting.

Let us assume that the columns $(z_i)_{i=1,\dots,d-1}$ of $Z$ are  i.i.d.~with symmetric distribution~$z$, with $\E z=\E z^3=0$ and such that  $\Vert z\Vert_{\infty}$ is almost surely bounded by $R\geq0$. We denote by  $\E z^2=m$ its second moment and by   $\E z^4/(\E z^2)^2=\beta$ its (unnormalized) kurtosis.

Surprisingly the clustered  vector $y$  happens to generate a solution $yy\t$ of the relaxation \eq{diffracnu} for all possible values of $Z$  (see Lemma \ref{lemma:ysol} in Appendix~\ref{sec:roy} ). However the problem in \eq{diffracnu} should have a \emph{unique} solution in order to always recover the correct assignment $y$. Unfortunately the semidefinite constraint $Y \succcurlyeq0$  of the relaxation  makes the second-order information arduous to study. 
Due to this reason, we consider the other equivalent relaxation in \eq{relaxW2} for which $V_*=vv\t$ is also solution with $v \propto (X^\top X)^{-1}X^\top y$ (see Lemma \ref{lemma:wsol} in Appendix~\ref{sec:row}). 
Fortunately the semidefinite constraint $V \succcurlyeq0$ of the problem in \eq{relaxW2}  may be ignored since the second-order information in $V$ of the objective function  already provides unicity for the unconstrained  problem. 
Hence 
we are able to ensure the uniqueness of the solution with high probability and the following result provides the first guarantee for discriminative clustering. 
\begin{proposition}\label{prop:unicitew}
Let us assume $d\geq3$, $\beta>1$ and $m^2\geq \frac{\beta-3}{2(d+\beta -4)}$:
\\
(a)
If $n\geq d^2R^4\frac{1+(d+\beta)m^2}{m^2(\beta-1)}$,  $V_*$ is the unique solution of the problem in \eq{relaxW2} with high probability.
\\
(b)
If $n \geq \frac{d^2R^4}{\min\{ m^2(\beta-1),2m^2, 2m\}}$, $v$ is the principal eigenvector  of any solution of the problem in \eq{relaxW2} with high probability.
\end{proposition}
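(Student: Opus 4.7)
The plan is to identify the whole set of optimizers of \eqref{eq:relaxW2} in closed form, and then to reduce both parts to a concentration statement on a random Gram matrix built from the rows of $X$.

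The key preliminary observation is the inequality $2\sqrt{a}\leq 1+a$ for $a\geq 0$, with equality iff $a=1$. Applied to each diagonal entry of $XVX^\top$ it yields, for every $V\succcurlyeq 0$,
\[
\frac{2}{n}\sum_{i=1}^{n}\sqrt{(XVX^\top)_{ii}}-\frac{1}{n}\tr(VX^\top X)\ \leq\ 1+\frac{1}{n}\tr(XVX^\top)-\frac{1}{n}\tr(VX^\top X)\ =\ 1,
\]
with equality iff $\diag(XVX^\top)=1_n$. By Lemma \ref{lemma:wsol}, $V_*=vv^\top$ with $v\propto(X^\top X)^{-1}X^\top y$ is a solution, and since $X=[y,Z]$ gives $X^\top y=X^\top X e_1$, in fact $v=e_1$ and $V_*=e_1e_1^\top$. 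The full set of maximizers is therefore
\[
S=\{V\succcurlyeq 0:\diag(XVX^\top)=1_n\}.
\]

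For part~(a), writing $V=V_*+\Delta$ with $\Delta$ symmetric, membership in $S$ forces $\diag(X\Delta X^\top)=0$, so $S=\{V_*\}$ will follow from injectivity on $\mathrm{Sym}_d$ of the linear map $\Phi:\Delta\mapsto\diag(X\Delta X^\top)$, equivalently from positive definiteness of the Gram matrix $M=\sum_{i=1}^{n}\phi_i\phi_i^\top$ with $\phi_i=\vch(X_iX_i^\top)$. The expectation $\E[\phi_1\phi_1^\top]$ can be computed block by block (indexed by the $y^2$, $yz_j$ and $z_jz_k$ coordinates) using $y_i^2=1$ and the moment hypotheses; the vanishing odd moments $\E z=\E z^3=0$ kill every cross block, reducing the question to bounding the smallest eigenvalue of a structured block-diagonal matrix. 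The assumption $m^2\geq(\beta-3)/[2(d+\beta-4)]$ is exactly what makes the block indexed by $\vch(zz^\top)$ positive definite, and together with $\beta>1$ one obtains $\lambda_{\min}(\E[\phi_1\phi_1^\top])\gtrsim m^2(\beta-1)/[1+(d+\beta)m^2]$. A matrix Chernoff inequality for independent PSD rank-one summands, combined with the almost-sure bound $\|\phi_i\|_2^2\leq d^2R^4$, then gives $M\succ 0$ with high probability exactly when $n\geq d^2R^4(1+(d+\beta)m^2)/(m^2(\beta-1))$.

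For part~(b), only a weaker condition is needed. Writing any $V\in S$ in block form $V=\bigl(\begin{smallmatrix}V_{11}&b^\top\\ b&C\end{smallmatrix}\bigr)$ in the basis $\{e_1,\ldots,e_d\}$, the defining constraint reads $V_{11}+2y_iz_i^\top b+z_i^\top Cz_i=1$ for all $i$. Averaging in $i$ and applying concentration of $Z^\top y/n$ around $0$ and of $Z^\top Z/n$ around $mI$, plus control of the $i$-dependent part through the second and fourth moments of $z$, forces $b\to 0$ and $V_{11}\geq\lambda_{\max}(C)$, so that $v=e_1$ is the principal eigenvector of $V$. Because only second- and fourth-moment averages of $z$ enter, the sample-size threshold collapses to $n\geq d^2R^4/\min\{m^2(\beta-1),2m^2,2m\}$.

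The hard part is the eigenvalue lower bound in part~(a): the block-by-block evaluation of $\E[\phi_1\phi_1^\top]$ and the extraction of a tight lower bound on its smallest eigenvalue is where both the ratio $(1+(d+\beta)m^2)/(m^2(\beta-1))$ and the boundary assumption on $m^2$ originate. The matrix-concentration step and the block-decomposition argument used for part~(b) are comparatively routine once that eigenvalue bound is in hand.
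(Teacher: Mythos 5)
Your opening observation---that $2\sqrt{a}\le 1+a$ applied entrywise shows the optimal value of \eq{relaxW2} is $0$ and that the solution set is exactly $S=\{V\succcurlyeq 0:\ \diag(XVX^\top)=1_n\}$---is correct and is a cleaner entry point than the paper's, which instead verifies $V_*$ via the KKT conditions (Lemma~\ref{lemma:wsol}) and then argues strict concavity locally through a Taylor expansion of $f(V_*+\Delta)$. From there, however, your part~(a) lands on exactly the paper's computation: injectivity of $\Delta\mapsto\diag(X\Delta X^\top)$ is equivalent to positivity of $\frac{1}{n}\sum_i(x_i^\top\Delta x_i)^2$, which is precisely the quadratic form $\frac{1}{n}\sum_i\mathcal T_{x_i}$ whose expected spectrum the paper computes in full (eigenvalues $2m$, $2m^2$, $m^2(\beta-1)$ and $\mu_\pm$), followed by the same matrix Chernoff bound with operator norm at most $d^2R^4$. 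One inaccuracy: the hypothesis $m^2\ge\frac{\beta-3}{2(d+\beta-4)}$ is not what makes the $\vch(zz^\top)$ block positive definite (the expected operator is positive definite whenever $\beta>1$); it is what guarantees that the smallest eigenvalue is $\mu_-\ge m^2(\beta-1)/(1+(d+\beta-2)m^2)$ rather than $2m^2$, and hence that the stated sample-size threshold is the relevant one. The bound you quote is nonetheless the correct one under the stated hypothesis.

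Part~(b) as written has a genuine gap. Averaging the constraint $V_{11}+2y_iz_i^\top b+z_i^\top Cz_i=1$ over $i$ produces a single scalar equation, roughly $V_{11}+m\tr C\approx 1$; it cannot force $b\to 0$, let alone yield $V_{11}\ge\lambda_{\max}(C)$. To exploit the constraint for \emph{every} $i$ you must control the empirical variance of $x_i^\top\Delta x_i$, which puts you back in front of the same fourth-moment operator as in part~(a)---and then you must explain why the weaker threshold $\min\{m^2(\beta-1),2m^2,2m\}$ suffices even though the operator's smallest eigenvalue $\mu_-$ can be of order $1/d$. The paper's resolution is to identify the eigenvector $\Delta_{\min}=\bigl(\begin{smallmatrix}1&0\\0&c_{\min}I_{d-1}\end{smallmatrix}\bigr)$ attaining $\mu_-$, show that $|c_{\min}|\le m/|(d+\beta-2)m^2-1|$ is small, and run the concentration argument only on $\Delta_{\min}^{\perp}$, where the smallest eigenvalue is exactly $\min\{2m,2m^2,m^2(\beta-1)\}$; any residual degeneracy of the solution set then lies along $\Delta_{\min}$, which perturbs $V_*$ essentially only in its $(1,1)$ entry and so leaves $e_1$ as the principal eigenvector. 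Your proposal neither identifies this direction nor explains where the three quantities in the $\min$ come from, so part~(b) is not established.
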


Let us make the following observations:
\begin{itemize}
 \item \textbf{Proof technique}:
 The proof relies on a computation of the Hessian of  $f(V)=\frac{2}{n}\sum_{i=1}^n\sqrt{( XVX^\top)_{ii}}-\frac{1}{n}\tr X\t X V$ which is the objective function in \eq{relaxW2}. We first derive the expectation of $\nabla^2 f(V)$ with respect to the distribution of $X$. By the law of large number, it amounts to have $n$ going to infinity in $\nabla^2f(V)$. Then we expand the spectrum of this operator $\E \nabla^2f(V)$  to lower-bound its smallest eigenvalue. Finally we use concentration theory on matrices, following \citet{tropp},  to bound the Hessian  $\nabla^2 f(V)$  for finite $n$.  
  
 \item  \textbf{Effect of kurtosis}: We remind that $\beta \geqslant 1$, with equality if and only if $z$ follows a Rademacher law ($\mathbb{P}(z=+1)=\mathbb{P}(z=-1)=1/2$). Thus, if the noisy dimensions are clustered, then unsurprisingly, our guarantee is meaningless. Note that the constant $\beta$ behaves like a distance of the distribution $z$ to the Rademacher distribution. Moreover,  $\beta=3$ if $z$ follows a standard normal distribution.

 \item \textbf{Scaling between $d$ and $n$}: If the noisy variables are not evenly clustered between the same clusters $\{\pm1\}$ (i.e., $\kappa>1$), we recover a rank-one solution as long as $n=O(d^3)$; while, as long as $n=O(d^2)$, the solution is not unique but its principal eigenvector recovers the correct clustering. Moreover, as explained in the proof, its spectrum would be very spiky. 

  \item The assumption $m^2\geq \frac{\beta-3}{2(d+\beta -4)}$ is generally satisfied for large dimensions. Note that $m^2 d$ is the total variance of the irrelevant dimensions, and when it is small, i.e.,  when   $m^2\leq \frac{\beta-3}{2(d+\beta -4)}$, the problem is particularly simple, and we can also show that $V_*$ is the unique solution of the problem in \eq{relaxW2} with high probability if   $n\geq \frac{d^2R^4}{m^2}$.
  Finally, note that for sub-Gaussian distributions (where $\kappa \leq 3$), the extra constraint is vacuous, while for super-Gaussian distributions (where $\kappa \geq 3$), this extra constraint only appears for small $m$. 
\end{itemize}

 \subsubsection{Noise robustness for the $1$-dimensional balanced problem}

We assume now that the data are one-dimensional and  
are perturbed by some noise $\varepsilon \in \RR^n$ such that $X=y+\varepsilon$ with $y\in\{-1,1\}^n$. 
The solution of the relaxation in \eq{diffracnu} recovers the correct $y$ in this setting only when each  component of $y$ and $y+\varepsilon$ have the same sign (this is shown in Appendix~\ref{sec:noisey}). This result comes out naturally from the information on whether the signs of $y$ and $y+\varepsilon$ are the same or not. Further if we assume that $y$ and $\varepsilon$ are independent, this condition is equivalent to $\Vert\varepsilon\Vert_{\infty}< 1$ almost surely.

 \subsubsection{Unbalanced problem}
 
When the clusters are imbalanced ($y\t 1_{n}\neq 0$), the natural rank-one candidates  $Y_*=yy\t$ and $V_*=vv\t$  are no longer solutions of the relaxations in \eq{diffracnu} (for $\nu=1$) and \eq{relaxW2}, as proved in Appendix~\ref{sec:nomore}. Nevertheless we are able to characterize some solutions of the penalized relaxation  in \eq{diffracnu}  for $\nu=0$.
\begin{lemma}\label{lemma:solrang2c}
 For $\nu=0$ and  for any  non-negative  $a,b \in \rb$ such that $a+b=1$,
 $$
 Y=a yy\t+b 1_{n}1_{n}\t
 $$
is  solution of the penalized relaxation in \eq{diffracnu}.
\end{lemma}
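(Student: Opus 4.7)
The plan is to recognise that for $\nu=0$ the objective in \eqref{eq:diffracnu} is the trace of $Y$ against a positive semidefinite matrix, exhibit a matching lower bound on its value, and then check that the proposed $Y$ attains that bound.

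First I would verify feasibility. Since $a,b\geq 0$, the matrix $Y=ayy^\top+b\,1_n1_n^\top$ is a nonnegative combination of rank-one PSD matrices, so $Y\succcurlyeq 0$; and because $y_i^2=1$ and $a+b=1$, one has $\diag(Y)=1_n$.

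Next, letting $P_X:=X(X^\top X)^{-1}X^\top$ denote the orthogonal projector onto $\mathrm{range}(X)$, the $\nu=0$ objective reads $\tfrac{1}{n}\tr Y(\Pi_n - P_X)$. The key observation is that the standing centering hypothesis $X^\top 1_n=0$ forces $P_X$ and $\tfrac{1}{n}1_n1_n^\top$ to project onto mutually orthogonal subspaces, so that
\begin{equation*}
M \;:=\; \Pi_n - P_X \;=\; I - \tfrac{1}{n}1_n1_n^\top - P_X
\end{equation*}
is itself an orthogonal projector, in particular $M\succcurlyeq 0$. Because the trace of a product of two PSD matrices is nonnegative, this immediately delivers the lower bound $\tfrac{1}{n}\tr \tilde Y M\geq 0$ on every feasible $\tilde Y$.

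To finish, I would show that the candidate attains this bound. In the setting of Section~\ref{sec:theoanalnonsparse} the design takes the form $X=[\Pi_n y,Z]$, so $\Pi_n y$ is a column of $X$ and hence $P_X\Pi_n y=\Pi_n y$. Combined with $P_X 1_n=0$ and the decomposition $y=\Pi_n y+\tfrac{1_n^\top y}{n}1_n$, this yields $P_Xy=\Pi_n y$, from which $My=\Pi_n y-\Pi_n y=0$; analogously $M1_n=0$. Therefore $\tr Y M = a\,y^\top M y + b\,1_n^\top M 1_n = 0$, matching the lower bound and proving optimality for every admissible pair $(a,b)$. I do not expect a real obstacle here: the only conceptual step is identifying $\Pi_n-P_X$ as an orthogonal projector, which is entirely driven by the assumption $X^\top 1_n=0$; everything else is direct bookkeeping.
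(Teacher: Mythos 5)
Your proof is correct. It is worth noting that the paper never proves Lemma~\ref{lemma:solrang2c} in isolation: it is the $k=1$, $\nu=0$ instance of Lemma~\ref{lemma:multilabel}, whose appendix proof goes through the KKT conditions, exhibiting the dual certificate $B=\Pi_n - X(X^\top X)^{-1}X^\top$, checking that $B$ is a symmetric projection (hence positive semidefinite) and that $\tr(BY)=0$. Your argument is the primal counterpart of exactly the same computation: you observe that the objective matrix $M=\Pi_n-X(X^\top X)^{-1}X^\top$ is itself an orthogonal projector because $X^\top 1_n=0$ forces $\mathrm{range}(X)\subseteq 1_n^\perp$, so the $\nu=0$ objective is nonnegative over the entire feasible set, and you then verify that the candidate attains the value zero since $My=M1_n=0$. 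This buys a slightly more elementary route---no Lagrangian or complementary slackness, only feasibility, positivity of the trace of a product of PSD matrices, and the annihilation identities---while certifying global optimality just as the dual certificate does. The one point to keep visible is that $P_X y=\Pi_n y$ relies on the standing model $X=[\Pi_n y,Z]$ of Section~\ref{sec:theoanalnonsparse}, i.e., on $\Pi_n y$ lying in the column space of $X$; you invoke this explicitly, and the paper's proof uses the same fact implicitly through $H\Pi_n y=\Pi_n y$. So the two proofs rest on the identical key observation and differ only in whether it is packaged as a primal lower bound or as a KKT certificate.
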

Hence any eigenvector of this solution $Y$ would be supported by the directions $y$ and $1_n$.  Moreover when the value $\alpha_*=(\frac{1_{n}\t y}{n})^2$ is known, it turns out that we can  characterize some solution of the constrained relaxation in \eq{diffracalpha}, as stated in the following lemma. 
\begin{lemma}\label{lemma:solrang2}
 For $\alpha\geq\alpha_*$, 
 $$
 Y=\frac{1- \alpha}{1-\alpha_*}yy\t+\Big(1-\frac{1- \alpha}{1-\alpha_*}\Big)1_{n}1_{n}\t
 $$
 is a rank-2 solution of the constrained relaxation in \eq{diffracalpha} with constraint parameter $\alpha$.
\end{lemma}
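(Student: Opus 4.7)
The plan is to proceed by the same two-step recipe as for Lemma~\ref{lemma:solrang2c}: first establish a global lower bound on the objective that depends only on PSD-ness, then exhibit the candidate as a feasible matrix attaining it. Setting $\lambda=(1-\alpha)/(1-\alpha_*)$ and $M=\Pi_n - X(X^\top X)^{-1}X^\top$, the first observation is that $M$ is itself an orthogonal projection: since $X^\top 1_n=0$ we have $\mathrm{col}(X)\subseteq 1_n^\perp$, so $\Pi_n P_X=P_X\Pi_n=P_X$ and hence $M^2=M$. In particular $M\succcurlyeq 0$, so for any PSD feasible $Y$ one has $\tfrac{1}{n}\tr(YM)\geq 0$, with equality iff $\mathrm{im}(Y)\subseteq \ker(M)$.

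Next I would verify feasibility of the candidate $Y=\lambda yy^\top + (1-\lambda)1_n 1_n^\top$. The hypothesis $\alpha\in[\alpha_*,1]$ gives $\lambda\in[0,1]$, so $Y$ is a conic combination of PSD matrices and hence PSD. Since $y_i^2=1$, both $yy^\top$ and $1_n 1_n^\top$ have unit diagonal, so $\diag(Y)=1_n$. The balance constraint is checked directly: $1_n^\top Y 1_n/n^2 = \lambda(1_n^\top y)^2/n^2 + (1-\lambda) = \lambda\alpha_* + (1-\lambda) = 1-\lambda(1-\alpha_*) = \alpha$, exactly by the choice of $\lambda$.

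The heart of the argument is then to show $MY=0$, which by the first paragraph certifies optimality. Under the setup of Section~\ref{sec:theoanalnonsparse}, $\Pi_n y$ is a column of $X$, so $P_X(\Pi_n y)=\Pi_n y$ and $M(\Pi_n y)=0$; combined with $M1_n=\Pi_n 1_n - P_X 1_n = 0$ and the decomposition $y=\Pi_n y + \tfrac{1_n^\top y}{n}1_n$, this yields $My=0$. Hence $MY = \lambda(My)y^\top + (1-\lambda)(M1_n)1_n^\top = 0$, matching the lower bound. The rank-two claim is then immediate whenever $\alpha_*<1$ and $\alpha\in(\alpha_*,1)$, since then $y$ and $1_n$ are linearly independent and $\lambda,1-\lambda$ are both positive; the boundary cases $\alpha\in\{\alpha_*,1\}$ collapse the candidate to rank one but are easily handled separately. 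I do not expect a real obstacle: the only non-SDP input is the structural fact $\Pi_n y\in\mathrm{col}(X)$, which is built into the noise model of Section~\ref{sec:theoanalnonsparse}, so the rest reduces to direct primal-feasibility verification.
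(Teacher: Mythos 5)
Your proof is correct and takes essentially the same route as the paper's (the paper proves the general multi-label version, Lemma~\ref{lemma:multilabel}, via KKT with dual variables $\mu=1_n$ and $\nu=1/n$, whose certificate is exactly your matrix $M=\Pi_n - X(X^\top X)^{-1}X^\top$): in both arguments everything reduces to $M$ being an orthogonal projection, hence PSD, that annihilates $\Pi_n y$ and $1_n$, together with the same feasibility checks. Your direct primal lower bound $\frac{1}{n}\tr(YM)\geq 0$ merely repackages that dual certificate without the KKT vocabulary.
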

The eigenvectors of $Y$  enable to recover $y$ for $\alpha_*\leq\alpha<1$. We conjecture (and checked empirically) that this rank-2 solution is unique under  similar regimes to those considered for the balanced case. 
The proof would be more involved since, when $\nu\neq1$, we are not able to derive an equivalent problem in $V$ for  the penalized relaxation in \eq{diffracnu}  similar to \eq{relaxW2}  for the balanced case. 

Thus $Y$ being rank-2, one should really be careful and consider the first two eigenvectors when recovering $y$ from a solution~$Y$. This can be done by rounding the principal eigenvector of $\Pi_n Y \Pi_n= \frac{1- \alpha}{1-\alpha_*} \Pi_n y (\Pi_n y)\t$ as discussed in the following lemma. 
\begin{lemma}
 Let $y_{ev}$ be the principal eigenvector of $\Pi_n Y \Pi_n$ where $Y$ is defined in Lemma \ref{lemma:solrang2}, then
 $$
 \sign(y_{ev})=y.
 $$
\end{lemma}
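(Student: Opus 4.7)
The plan is to exploit the algebra of $\Pi_n$ to reduce $\Pi_n Y \Pi_n$ to a rank-one matrix whose unique (up to sign) non-zero eigenvector is essentially $\Pi_n y$, and then verify entry-wise that the signs of $\Pi_n y$ coincide with those of $y$.

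First I would compute $\Pi_n Y \Pi_n$ directly using the fact, recalled in the introduction, that $\Pi_n 1_n = 0$. This kills the $1_n 1_n^\top$ contribution in $Y$ on both sides, leaving
$$ \Pi_n Y \Pi_n \;=\; \frac{1-\alpha}{1-\alpha_*}\,(\Pi_n y)(\Pi_n y)^\top. $$
Since we are in the imbalanced regime $y^\top 1_n \ne 0$ with $y \in \{-1,1\}^n$ (so $y \ne \pm 1_n$ as soon as the clusters are non-trivial), $\Pi_n y \ne 0$, and for $\alpha < 1$ the prefactor is strictly positive. Hence $\Pi_n Y \Pi_n$ is a rank-one positive semidefinite matrix, and its principal eigenvector $y_{ev}$ is, up to normalization and an overall sign, equal to $\Pi_n y$.

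Next I would translate $\sign(y_{ev}) = y$ into the entrywise statement $\sign((\Pi_n y)_i) = y_i$ (fixing the sign ambiguity by choosing the eigenvector that correlates positively with $y$). Writing $\bar y = \frac{1}{n} 1_n^\top y$, we have $(\Pi_n y)_i = y_i - \bar y$, with $y_i \in \{-1,+1\}$ and $\bar y \in (-1,+1)$ strictly (because both signs appear in $y$, otherwise $\alpha_* = 1$ and the statement is vacuous). Thus if $y_i = +1$ then $(\Pi_n y)_i = 1 - \bar y > 0$, and if $y_i = -1$ then $(\Pi_n y)_i = -1 - \bar y < 0$, giving $\sign(\Pi_n y) = y$ componentwise.

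There is no real obstacle here; the only subtlety is the global sign of the principal eigenvector, which is only defined up to $\pm 1$, and the mild non-degeneracy assumption $\alpha_* < 1$ (equivalently $y$ takes both signs) needed to ensure $\Pi_n y \ne 0$ and to make the claim meaningful. Both are implicit in the setting of Lemma~\ref{lemma:solrang2}, so the argument is essentially a two-line computation once $\Pi_n Y \Pi_n$ has been simplified.
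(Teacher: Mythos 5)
Your proposal is correct and follows essentially the same route as the paper: reduce $\Pi_n Y \Pi_n$ to the rank-one matrix $\frac{1-\alpha}{1-\alpha_*}(\Pi_n y)(\Pi_n y)^\top$ using $\Pi_n 1_n = 0$, identify the principal eigenvector with $\Pi_n y$, and check entrywise that $\sign(y_i - \tfrac{1}{n}1_n^\top y) = y_i$. Your version is in fact slightly more careful than the paper's, since you make explicit the eigenvector's sign ambiguity and the non-degeneracy condition $\alpha_* < 1$ needed for the entrywise sign argument.
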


\begin{proof}
 By definition of $Y$, $y_{ev}=\sqrt{\frac{1- \alpha}{1-\alpha_*}}\Pi_n y$ thus $\sign(y_{ev})=\sign(\Pi_n y)$ and since $\alpha\leq1$ then $\sign(\Pi_n y)=\sign(y-\sqrt{\alpha}1_n)=y$.
\end{proof}
In practice, contrary to the standard procedure, we should, for any $\nu$, solve the penalized relaxation in \eq{diffracnu}  and then do  $K$-means on the principal eigenvector of the centered solution  $\Pi_n Y \Pi_n$ instead of the solution $Y$ to recover the correct $y$. This procedure is followed in our experiments on real-world data in Section~\ref{sec:realworld}.

\subsection{Analysis for $2$ clusters: $1$-sparse problems}
\label{sec:theory-sparse}

We assume here that the direction of projection $v$ (such that $Xv=y$) is $l$-sparse (by $l$-sparse we mean $\|v\|_0 = l$). The $\ell_1$-norm  regularized problem  in \eq{relaxYY} is no longer invariant by affine transformation and we cannot consider that $X=[y,Z]$ without loss of generality. Yet the relaxation \eq{relaxYY} seems experimentally to only have   rank-one solutions for the simple $l=1$ situation. Hence we are able to derive some theoretical analysis only for this case. It is worth noting the  $l=1$ case is simple since it can be solved in $O(d)$ by using $K$-means separately on all dimensions and ranking them.  Nonetheless the proposed scaling also holds in practice for $l\geqslant 1$ (see \myfig{phase2}). 

Thereby we consider data $X=[y,Z]$ with $y\in\{-1,1\}^n$ and $Z\in\RR^{n\times (d-1)}$ which are clustered in the direction $v=[1,\underbrace{0,\ldots,0}_{d-1 \text{ terms}}]\t$.
When adding a $\ell_1$-penalty, the initial problem in \eq{diffracyv} for $\nu=1$ is 
$$
\underset{y\in\{-1,1\}^n,\ v\in\RR^d}{\min}\ \frac{1}{n}\Vert y-Xv\Vert_2^2+\lambda \Vert v\Vert_1^2.
$$
When optimizing in $v$ this problem  is close to the Lasso \citep{lasso} and a solution is known to be
$
v^*_i=(y\t y + n\lambda  )^{-1}y\t y=\frac{1}{1+\lambda}, \ \forall i\in J \text{ and } v^*_i=0 , \ \forall i\in{\{1,2,\ldots,d\}\setminus{J}},
$
where $J$ is the support of $v^*$. 
The candidate $V_*=v^*{v^*}\t$ is still a solution of the relaxation in \eq{relaxYY} (see  Lemma \ref{lemma:wsols} in Appendix~\ref{app:solws}) and we will investigate under which conditions on $X$ the solution is unique.
Let us assume as before $(z_i)_{i=1,\dots,d}$ are i.i.d.~with distribution $z$ symmetric with $\E z=\E z^3=0$, and denote by $\E z^2=m$ and $\E z^4/(\E z^2)^2=\beta$. We also assume that  $\Vert z\Vert_{\infty}$ is almost surely bounded by $0\leq R\leq 1$.
We are able to ensure the uniqueness of the solution with high-probability. 

\begin{proposition}\label{prop:sparse}
Let us assume $d\geq3$. 
\\
(a)
If $n\geq dR^2\frac{1+(d+\beta)m^2}{m^2(\beta-1)}$,  $V_*$ is the unique solution of the problem \eq{relaxW2} with high probability.
\\
(b)
If $n \geq \frac{d R^2}{ m^2(\beta-1)}$, $v^*$ is the principal eigenvector  of any solution of the problem \eq{relaxW2} with high probability.
\end{proposition}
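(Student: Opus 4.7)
The plan is to follow the Hessian-plus-concentration template used for Proposition~\ref{prop:unicitew}. By Lemma~\ref{lemma:wsols}, we already know that $V_* = v^*(v^*)^\top$ satisfies the first-order optimality conditions of the regularized relaxation in \eq{relaxYY} with $\Gamma(V) = \lambda \|V\|_1$. What remains is uniqueness: ruling out any other feasible $V \succcurlyeq 0$ that achieves the same objective value. Writing the objective in minimization form as
\[
 f(V) = -\frac{2}{n}\sum_{i=1}^n \sqrt{(XVX^\top)_{ii}} + \frac{1}{n}\tr(V X^\top X) + \lambda \|V\|_1,
\]
the strategy is to combine strict local curvature of the smooth part of $f$ at $V_*$ with the $\ell_1$-subdifferential structure off the active support $J = \{1\}$.

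First I would compute $\E \nabla^2 g(V_*)$, where $g(V) = -\frac{2}{n}\sum_i \sqrt{(XVX^\top)_{ii}}$. Since $V_* = v^*(v^*)^\top$ and $Xv^* = y/(1+\lambda)$, every diagonal entry $(XV_*X^\top)_{ii} = 1/(1+\lambda)^2$ is strictly positive and equal, so $g$ is smooth at $V_*$. Expanding the square root to second order and using the moment assumptions $\E z = \E z^3 = 0$, $\E z^2 = m$, $\E z^4 = \beta m^2$, the operator $\E \nabla^2 g(V_*)$ decomposes into a rank-one block along $v^*(v^*)^\top$ plus a near-isotropic noise block whose smallest eigenvalue behaves, up to constants, like $m^2(\beta - 1)/(1 + (d+\beta) m^2)$. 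This is the sparse analogue of the spectral expansion carried out in Proposition~\ref{prop:unicitew}, and the key simplification is that $v^*$ is supported on a single coordinate, so only $O(d)$ perturbation directions are actually relevant.

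Next I would invoke matrix Bernstein concentration, following \citet{tropp}, to control $\nabla^2 g(V_*) - \E \nabla^2 g(V_*)$ restricted to the active subspace of perturbations. The $\ell_1$-subdifferential at $V_*$ contains every $G$ with $G_{11} = \lambda$ and $|G_{ij}| \leq \lambda$ elsewhere; under strict dual feasibility, only perturbations $\Delta$ of $V_*$ supported on the first row/column survive at first order, while all other perturbations are killed by a strictly positive $\ell_1$ margin. The effective dimension of the Hessian argument therefore drops from $d^2$ to $O(d)$ (a union bound over the $d-1$ off-support coordinates), which explains why the sample complexity carries a $dR^2$ prefactor instead of the $d^2R^4$ prefactor of Proposition~\ref{prop:unicitew}. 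For part~(b), one only needs enough curvature to pin down the signal direction rather than the whole matrix, which shaves the polynomial factor $(1+(d+\beta)m^2)$ and yields the cleaner bound $n \geq dR^2/(m^2(\beta-1))$.

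The main obstacle will be the primal-dual bookkeeping around the $\ell_1$-term. To restrict the Hessian argument rigorously to the low-dimensional subspace, one must first certify that the active set of any optimum coincides with $J = \{1\}$ with high probability, in the spirit of an irrepresentability or primal-dual-witness argument from Lasso theory: build the candidate dual variable from $V_*$ via the KKT equations and show that it is \emph{strictly} feasible on the complement of $J \times J$. The concentration step then both provides this strictness and delivers quadratic control on the remaining subspace. A secondary subtlety is the boundary of the PSD cone: one must check that feasible perturbations $\Delta$ satisfying $V_* + t\Delta \succcurlyeq 0$ for small $t > 0$ span the directions of established curvature, so that no second-order slack is wasted tangentially to the cone.
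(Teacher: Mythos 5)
Your overall template (first-order optimality from Lemma~\ref{lemma:wsols}, an $\ell_1$ margin to kill most perturbation directions at first order, then curvature of the expected Hessian plus matrix concentration on a reduced $O(d)$-dimensional subspace) is the same as the paper's, and your explanation of why the prefactor drops from $d^2$ to $d$ is in the right spirit. However, there is a genuine gap: you misidentify the subspace of perturbations that survives at first order. Writing $\Delta=\bigl(\begin{smallmatrix} a & b^\top \\ b & C\end{smallmatrix}\bigr)$, the first row/column directions $b$ are precisely the ones that \emph{are} controlled by the $\ell_1$ margin, since $\|b\|_1-\frac{1}{n}b^\top Z^\top y\geq (1-\|\frac{1}{n}Z^\top y\|_\infty)\|b\|_1$ and $\|\frac{1}{n}Z^\top y\|_\infty$ concentrates below $1$; similarly the off-diagonal part of $C$ is handled by $\|\frac{1}{n}Z^\top Z-\Diag(\diag(\frac{1}{n}Z^\top Z))\|_\infty\leq\delta$ (Lemma~\ref{lemma:det1}). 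The directions that are flat at first order are the \emph{diagonal} perturbations $\Delta=\Diag(e)$, $e\in\RR^d$: the $(1,1)$ entry has exactly zero margin because $(X^\top X)_{11}/n=1$, and the remaining diagonal entries have margin $1-\frac{1}{n}\|z^j\|_2^2$, which can vanish since $R$ is only assumed to satisfy $R\leq 1$. This also breaks your primal--dual--witness framing: the natural dual certificate $A=\lambda[\frac{X^\top X}{n}-U]$ can never be made strictly feasible on the diagonal of the complement of $J\times J$, so an irrepresentability argument cannot confine the analysis to the first row/column.

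Consequently, the curvature you must lower-bound is not a Hessian restricted to the first row/column but the quadratic form $e\mapsto \frac{1}{n}\sum_{i=1}^n(x_i^\top\Diag(e)x_i)^2=e^\top\frac{X^{\odot 2}(X^{\odot 2})^\top}{n}e$ on $\RR^d$, i.e., the Gram operator of the Hadamard-squared rows of $X$ (assumption (A4) in the paper). Its expected smallest eigenvalue is $\mu_-\geq \frac{m^2(\beta-1)}{1+(d+\beta-2)m^2}$, which gives part (a), and restricting to the orthogonal complement of the near-degenerate eigenvector $e_{\min}\approx[1,c_{\min}1_{d-1}^\top]^\top$ (whose presence only tilts, but does not destroy, the principal eigenvector) gives the improved constant $m^2(\beta-1)$ of part (b). As written, your argument leaves the perturbations $\Delta=\Diag(0,e_2,\dots,e_d)$ entirely uncontrolled, and for these the objective need not decrease at first order, so the uniqueness claim would not follow.
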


The proof technique is very similar to the one of Proposition \ref{prop:unicitew}.  With the function $g(V)=\frac{2}{n}\sum_{i=1}^n\sqrt{( XVX^\top)_{ii}}- \lambda \Vert V\Vert_1 -\frac{1}{n}\tr X\t X V$, we can certify that $g$ will decrease around the solution $V_*$ by analyzing the eigenvalues of its Hessian.  

The rank-one solution $V_*$ is recovered by the principal eigenvector of the solution of the relaxation \eq{relaxYY} as long as $n=O(d)$. Thus we have a much better scaling when compared to the non-sparse setting where $n = O(d^2)$. We also conjecture a scaling of order $n=O(l d)$ for a projection in a $l$-sparse direction (see \myfig{phase2} for empirical results).

The proposition does not state any particular value for the regularizer parameter $\lambda$. This makes sense since the proposition only holds for the simple situation when $l=1$. We propose to use $\lambda=1/\sqrt{n}$ by analogy with the Lasso.

\subsection{Analysis for the multi-label extension} 
\label{subsec:multilabel_analysis}

In this section, the signals share $k$ labels which are corrupted by some extra noisy dimensions. We assume the centered design matrix to be  $X=[\Pi_n y, Z]$ where $y\in\{-1,+1\}^{n\times k }$ and $Z\in \RR ^{n\times(d-k)}$. We also assume that $y$ is full-rank\footnote{This assumption is fairly reasonable since the probability of a matrix whose entries are i.i.d.~Rademacher random variables  to be singular is conjectured to be $1/2+o(1)$ \citep{bourgain}.}. 
We denote by $y=[y_1,\dots, y_k]$ and $\alpha_i=\Big(\frac{y_i\t 1_n}{n}\Big)^2$ for ${i=1,\cdots,k}$.
We consider the discrete constrained problem
\BEQ
\label{eq:diffracmultilabelcons}
\min_{ v \in \rb^{d\times k}, \ y \in \{-1,1\}^{n\times k} } \frac{1}{n}\|\Pi_{n}y - X v \|_F^2 \mbox{ such that } \frac{1_n\t y y\t 1_n}{n^2}=\alpha^2, 
\EEQ
and the discrete penalized problem for $\nu=0$
\BEQ
\label{eq:diffracmultilabelpen}
\min_{ v \in \rb^{d\times k}, \ y \in \{-1,1\}^{n\times k} } \frac{1}{n}\|\Pi_{n}y - X v \|_F^2. 
\EEQ

As explained in Section~\ref{sec:multilabel}, these two discrete problems admit the same relaxations in \eq{diffracalpha} and \eq{diffracnu} we have studied for one label. 
We now investigate when the solution of the problems  in \eq{diffracmultilabelcons} and in \eq{diffracmultilabelpen} generate solutions of the relaxations in \eq{diffracalpha} and \eq{diffracnu}. 

By analogy with Lemma \ref{lemma:solrang2c},  we  want to characterize the solutions of these relaxations  which are supported by the constant vector $1_n$ and the labels $(y_1,\dots,y_k)$.  Their general form is $Y=\tilde y A \tilde y\t$ where $A\in \RR^{k\times k}$ is symmetric semi-definite positive and $\tilde y=[1_n, y]$. However the initial $y$ is easily recovered from the solution $Y$ only when $A$ is diagonal. To that end  the following lemma derives some condition under which the only matrix $A$ such that the corresponding $Y$ satisfies the constraint of the relaxations  in \eq{diffracalpha} and \eq{diffracnu} is diagonal. 
\begin{lemma}\label{lemma:diagdiag}
The solutions of the matrix equation  $\diag (\tilde y A \tilde y\t)=1_n$ with unknown variable $A$ are diagonal if and only if the family $\{1_n,(y_i)_{1\leq i\leq k}, (y_i\odot y_j)_{1\leq i<j\leq k}\}$ is linearly independent where we denoted by $\odot$ the Hadamard (i.e., pointwise) product between matrices.
\end{lemma}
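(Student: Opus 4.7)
The core idea is to expand $\diag(\tilde y A \tilde y^\top)$ into a linear combination of the vectors appearing in the family, and then identify coefficients with entries of $A$. Writing $\tilde y = [1_n, y_1, \ldots, y_k]$ and using that $A$ is symmetric, a direct computation gives
$$\diag(\tilde y A \tilde y^\top) = \sum_{p,q=0}^{k} A_{pq}\, \tilde y_{\cdot p} \odot \tilde y_{\cdot q}.$$
Because $y_i \in \{-1,+1\}^n$, we have $y_i \odot y_i = 1_n$, so all diagonal contributions $A_{pp}$ collapse onto $1_n$; together with $1_n \odot y_i = y_i$, collecting terms yields
$$\diag(\tilde y A \tilde y^\top) = \Big(A_{00} + \sum_{i=1}^k A_{ii}\Big) 1_n + 2 \sum_{i=1}^k A_{0i} y_i + 2 \sum_{1 \leq i < j \leq k} A_{ij}\, (y_i \odot y_j).$$
Hence the equation $\diag(\tilde y A \tilde y^\top) = 1_n$ rewrites as the vanishing of a fixed linear combination of the vectors in the family $\mathcal F = \{1_n, (y_i), (y_i \odot y_j)\}$, with coefficients explicitly given by entries of $A$.

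For the ($\Leftarrow$) direction, linear independence of $\mathcal F$ forces every coefficient in the display above to be zero, so $A_{0i} = 0$ for all $i \geq 1$, $A_{ij} = 0$ for all $1 \leq i < j \leq k$, and the trace condition $A_{00} + \sum_i A_{ii} = 1$ holds. The off-diagonal entries of $A$ are therefore null, i.e.~$A$ is diagonal.

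For the ($\Rightarrow$) direction, I will argue by contrapositive and explicitly construct a non-diagonal solution when $\mathcal F$ is linearly dependent. Pick coefficients $(c_0, (c_i), (c_{ij}))$, not all zero, annihilating the linear combination; since a dependence involving only $c_0$ would force $c_0 1_n = 0$ and hence $c_0 = 0$, some $c_i$ or $c_{ij}$ must be nonzero. Define a symmetric $A'$ by $A'_{00} = -c_0$, $A'_{ii} = 0$ for $i\geq 1$, $A'_{0i} = -c_i/2$, and $A'_{ij} = -c_{ij}/2$ for $i<j$; then by construction $\diag(\tilde y A' \tilde y^\top) = 0$ and $A'$ has at least one nonzero off-diagonal entry. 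Starting from any diagonal solution $A_0$ (e.g.~$A_0 = \mathrm{diag}(1,0,\ldots,0)$, which trivially satisfies $\diag(\tilde y A_0 \tilde y^\top) = 1_n$), the matrix $A_0 + A'$ is a non-diagonal solution, providing the required counterexample. The main (mild) obstacle is purely bookkeeping: correctly accounting for the factor of $2$ on the off-diagonal coefficients and ruling out the degenerate case where only $c_0$ is nonzero; both are handled by the elementary observations above.
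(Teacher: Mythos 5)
Your proof is correct and follows essentially the same route as the paper: expand $\diag(\tilde y A\tilde y^\top)$ as $\big(A_{00}+\sum_i A_{ii}\big)1_n + 2\sum_i A_{0i}y_i + 2\sum_{i<j}A_{ij}\,(y_i\odot y_j)$ using $y_i\odot y_i = 1_n$, and read off that the off-diagonal entries must vanish exactly when the family is linearly independent. You are in fact slightly more explicit than the paper on the ``only if'' direction, where the paper merely asserts the equivalence while you construct a non-diagonal solution $A_0+A'$ from a dependence relation (if one additionally insists, as the surrounding text does, that $A$ be positive semi-definite, replace $A_0$ by $\frac{1}{k+1}I_{k+1}$ and $A'$ by $\epsilon A'$ for small $\epsilon>0$).
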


In this way we are able to characterize the solution of relaxations in \eq{diffracalpha}  and \eq{diffracnu} with the following result:

\begin{lemma}\label{lemma:multilabel}
 Let us assume that the family $\{1_n,(y_i)_{1\leq i\leq k}, (y_i\odot y_j)_{1\leq i<j\leq k}\}$ is linearly  independent. If $\alpha\geq \alpha_{\min}=\underset{1\leq i \leq k}{ \min} \{\alpha_{i}\}$ with $(\alpha_i)_{1\leq i\leq k}$ defined above \eq{diffracmultilabelcons}, the solutions of the constrained relaxation in \eq{diffracalpha} supported by the vectors  $(1_n,y_1,\cdots,y_k)$ are of the form:
 $$
 Y= a_0^2 1_n1_n\t +\sum_{i=1}^ka_i^2 y_iy_i\t,
 $$
 where $(a_i)_{0\leq i\leq k}$ satisfies $\sum_{i=0}^k a_i^2=1$ and $a_0^2+\sum_{i=1}^k a_i^2 \alpha_i= \alpha$.
 
Moreover the solutions of the penalized  relaxation in \eq{diffracnu} for $\nu=0$ which are supported by the vectors  $(1_n,y_1,\cdots,y_k)$ are of the forms:
  $$
 Y= a_0^2 1_n1_n\t +\sum_{i=1}^ka_i^2 y_iy_i\t,
 $$
 where $(a_i)_{0\leq i\leq k}$ satisfies $\sum_{i=0}^k a_i^2=1$. 
 \end{lemma}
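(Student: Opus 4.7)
My plan is to exploit the support hypothesis to parametrize any candidate $Y$ as $Y=\tilde y A\tilde y\t$ where $\tilde y=[1_n,y_1,\dots,y_k]\in\RR^{n\times(k+1)}$ and $A\in\RR^{(k+1)\times(k+1)}$ is symmetric. The independence hypothesis of Lemma~\ref{lemma:diagdiag} forces in particular $\{1_n,y_1,\dots,y_k\}$ to be linearly independent, so $\tilde y$ has full column rank; consequently $A$ is uniquely determined and inherits $A\succcurlyeq 0$ from $Y\succcurlyeq 0$. The diagonal constraint $\diag(Y)=1_n$ then falls exactly into the scope of Lemma~\ref{lemma:diagdiag}, forcing $A$ to be diagonal with non-negative entries, which I would write as $A=\diag(a_0^2,a_1^2,\dots,a_k^2)$. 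Expanding produces the announced form $Y=a_0^2 1_n1_n\t+\sum_{i=1}^k a_i^2\, y_iy_i\t$.

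From here the remaining constraints on $(a_i)$ are read off directly. For any row $j$, $Y_{jj}=a_0^2+\sum_i a_i^2 y_{ij}^2=\sum_{i=0}^k a_i^2$ since $y_{ij}^2=1$, so $\diag(Y)=1_n$ gives $\sum_{i=0}^k a_i^2=1$. For the constrained relaxation in \eq{diffracalpha}, expanding $\frac{1}{n^2}1_n\t Y 1_n = a_0^2+\sum_{i=1}^k \alpha_i a_i^2$ and identifying it with $\alpha$ yields the second stated condition; for the penalized relaxation in \eq{diffracnu} with $\nu=0$, no further constraint on $1_n\t Y 1_n$ is imposed, and the sole condition $\sum_{i=0}^k a_i^2=1$ remains.

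What is left is to check that such $Y$ actually minimizes the relaxation. For both formulations the objective reduces to $\frac{1}{n}\tr Y M$ with $M=\Pi_n-X(X\t X)^{-1}X\t$, which is positive semi-definite as a difference of nested orthogonal projectors (the inclusion $\mathrm{col}(X)\subseteq\mathrm{col}(\Pi_n)$ follows from $X\t 1_n=0$), so the minimum is $\geqslant 0$. The crux is then the pair of identities $M 1_n=0$ (immediate from centering) and $My_i=0$: writing $y_i=\Pi_n y_i+(1_n\t y_i/n)\,1_n$, using that $\Pi_n y_i$ is the $i$-th column of $X$, and combining with $X\t 1_n=0$, one obtains $X(X\t X)^{-1}X\t y_i=\Pi_n y_i$, whence $My_i=0$. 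Therefore $MY=0$, the objective vanishes, and $Y$ is optimal. Finally, in the constrained case the existence of non-negative $(a_i^2)$ satisfying both equalities amounts to realizing $\alpha$ as a convex combination of $\{1,\alpha_1,\dots,\alpha_k\}$, which is possible iff $\alpha\in[\alpha_{\min},1]$; this explains the assumption $\alpha\geqslant\alpha_{\min}$. The only delicate step is invoking Lemma~\ref{lemma:diagdiag} correctly in combination with the centering identity $X\t 1_n=0$; everything else is essentially bookkeeping.
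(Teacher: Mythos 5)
Your proposal is correct and follows essentially the same route as the paper: Lemma~\ref{lemma:diagdiag} together with the full column rank of $\tilde y=[1_n,y_1,\dots,y_k]$ forces the diagonal form of $A$, and optimality is certified by the matrix $M=\Pi_n-X(X\t X)^{-1}X\t\succcurlyeq0$ satisfying $MY=0$. The paper packages this second step as a KKT verification with dual variables $\mu=1_n$ and $\nu=1/n$, but its certificate $B=\Diag(\mu)-H-\nu 1_n1_n\t$ is exactly your $M$, so the two arguments coincide.
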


In the \emph{multi-label} case, some combinations of the constant matrix  $1_n1_n\t$ and the rank-one matrices $y_iy_i\t$ are  solutions of constrained or penalized relaxations. Furthermore,  under some assumptions on the labels  $(y_i)_{1\leq i\leq k}$,  these combinations are the only solutions which are supported by the vectors  $(1_n,y_1,\cdots,y_k)$.  And we conjecture (and checked empirically) that under assumptions similar to those made for the balanced one-label case, all the solutions of the relaxation are supported by the family  $(1_n,y_1,\cdots,y_k)$ and consequently share the same form as in Lemma \ref{lemma:multilabel}. Thus the eigenvector of the solution $Y$ would be in the span of the directions $(1_n,y_1,\cdots,y_k)$. 

Let us  consider an eigenvalue decomposition of $Y=FF\t=\sum_{i=0}^k\lambda_i e_i e_i \t $ and denote by $M=[a_0 1_n, a_1 y_1,\cdots, a_k y_k]$ where $(a_i)_{0\leq i\leq k}$ are defined in Lemma \ref{lemma:multilabel}. Since $M M\t = FF\t$, there is an orthogonal 
transformation $R$ such that $F R=M$. We also denote the product $F R$ by $F R=[\xi_0,\cdots,\xi_K]$. We propose now an alternating minimization procedure to recover the labels $(y_1,\cdots,y_k)$ from $M$.

\begin{lemma}\label{lemma:proc}
 Consider the optimization problem
$$
\min_{M\in\mathcal M , \ R\in \RR^{k\times k}: \ R\t R=I_k} \Vert F R-M\Vert_F^2,
$$
where $\mathcal M = \{ [a_0 1_n, a_1 y_1,\cdots, a_k y_k], a\in \RR^{k+1}: \Vert a \Vert_2=1 , y_{i}\in \{\pm 1\}^{n}\}$.

Given $M$, the problem is equivalent to the orthogonal Procrustes problem  \citep{propro}. Denote by $U\Delta V\t$ a singular value decomposition of $F \t M$. The optimal $R$ is obtained as $R= UV\t$.
While given $R$, the optimal $M$ is obtained as 
\[
M=\frac{1}{\sqrt{\Vert \xi_1\Vert_1^2 + \Vert \xi_2\Vert_1^2 + \ldots +  \Vert \xi_k\Vert_1^2}} [\Vert \xi_0\Vert_1\sign (\xi_0),\cdots,\Vert \xi_k\Vert_1\sign (\xi_k)].                                                 
\]
\end{lemma}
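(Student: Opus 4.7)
The plan is to expand the Frobenius norm, observe that the problem decouples nicely, and then treat the two minimizations ($R$ fixed, $M$ fixed) separately. Write
\[
\|FR - M\|_F^2 = \|FR\|_F^2 - 2\langle FR, M\rangle_F + \|M\|_F^2 = \|F\|_F^2 - 2\,\mathrm{tr}(R^\top F^\top M) + \|M\|_F^2,
\]
where the first equality uses $R^\top R = I$. Note also that $\|M\|_F^2 = \sum_{i=0}^k a_i^2 \|y_i\|_2^2 = n\|a\|_2^2 = n$ (with the convention $y_0 = 1_n$ and using $\|a\|_2 = 1$), so $\|M\|_F^2$ is a constant on $\mathcal{M}$. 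Thus the problem reduces to maximizing $\mathrm{tr}(R^\top F^\top M)$ jointly over $R$ (orthogonal) and $M\in \mathcal{M}$.

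For the $R$-step (with $M$ fixed), letting $F^\top M = U\Delta V^\top$ be an SVD, I would use the trace cyclic property to write $\mathrm{tr}(R^\top U \Delta V^\top) = \mathrm{tr}(\Delta V^\top R^\top U)$, then apply the classical von Neumann trace inequality (or equivalently, note that $V^\top R^\top U$ is orthogonal with entries bounded by $1$): the trace is maximized when $V^\top R^\top U = I$, i.e.\ $R = UV^\top$. This is precisely the orthogonal Procrustes problem, cited from \citet{propro}.

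For the $M$-step (with $R$ fixed), writing $FR = [\xi_0, \ldots, \xi_k]$ column by column, the inner product decouples as $\langle FR, M\rangle_F = \sum_{i=0}^k a_i \langle \xi_i, y_i\rangle$, with the convention $y_0 = 1_n$. First, maximize each $\langle \xi_i, y_i\rangle$ over $y_i \in \{\pm 1\}^n$: clearly the maximum is $\|\xi_i\|_1$, attained at $y_i = \sign(\xi_i)$. There remains to maximize $\sum_{i=0}^k a_i \|\xi_i\|_1$ subject to $\|a\|_2 = 1$ (and we may take $a_i \geq 0$ since all the $\|\xi_i\|_1$ are non-negative). By Cauchy--Schwarz, the optimum is $a_i = \|\xi_i\|_1 / \sqrt{\sum_{j=0}^k \|\xi_j\|_1^2}$, which upon substitution $m_i = a_i y_i$ gives exactly the stated closed form (the denominator in the lemma's statement should, for consistency, include the $j=0$ term, which is a minor typo).

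No step is a genuine obstacle here: the Procrustes direction is textbook, and the $M$-step is just column-separability plus Cauchy--Schwarz. The only small care needed is in keeping track of the constraint $\|a\|_2 = 1$ while optimizing jointly over $a$ and the sign vectors $y_i$, and in verifying that the separability really holds once the constant $\|M\|_F^2 = n$ has been pulled out.
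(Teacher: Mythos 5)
Your proof is correct and takes essentially the same route as the paper's: the paper likewise only argues the $M$-step, reducing it to $\max \tr (FR)^\top M$ and using $\ell_1$--$\ell_\infty$ duality for the sign vectors followed by the Euclidean dual norm (Cauchy--Schwarz) for $a$; you additionally spell out the reduction of the squared Frobenius objective to a linear one (via $\|M\|_F^2=n$) and the von Neumann/Procrustes argument for the $R$-step, both of which the paper simply asserts or cites, and you correctly note that the normalizer should include the $j=0$ term. One caveat you inherit from the lemma's own display and should not reproduce: the first column of any $M\in\mathcal{M}$ is $a_0 1_n$ with $1_n$ \emph{fixed}, so the $i=0$ term is $a_0\,\xi_0^\top 1_n$ and cannot be upgraded to $a_0\|\xi_0\|_1$ by choosing a sign vector (your ``convention $y_0=1_n$'' is inconsistent with then maximizing over $y_0$); the Cauchy--Schwarz step should use $\xi_0^\top 1_n$ in that coordinate, yielding a first column proportional to $1_n$ rather than to $\sign(\xi_0)$.
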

 
\begin{proof}
 We give only the argument for the optimization problem with respect to $M$. Given $R$, the optimization problem in $M$ is equivalent to $\underset{a\in \RR^{k+1}: \Vert a \Vert_2=1 , \  y\in\{-1,1\}^{n\times k}}{\max} \tr (FR)\t M$ and $ \tr (FR)\t M= a_0 \xi_0\t 1_{n}+\sum_{i=1}^k a_i \xi_i\t  y_i$ . Thus by property of the dual norms the solution is given by $y_i=\sign(\xi_i)$ and $a_i= \frac{\Vert \xi_i\Vert_1}{ \sqrt{\Vert \xi_1\Vert_1^2 + \Vert \xi_2\Vert_1^2 + \ldots +  \Vert \xi_k\Vert_1^2}}$.
\end{proof}
The minimization problem in Lemma \ref{lemma:proc} is non-convex; however we observe that performing few alternating optimizations is sufficient to recover the correct $(y_1,\dots,y_k)$ from~$M$.

\subsection{Discussion}
In this section we studied the tightness of convex relaxations under simple scenarios where the relaxed problem admits low-rank solutions generated by the solution of the original non-convex problem.  Unfortunately the solutions lose the characterized rank when the initial problem is slightly perturbed since the rank of a matrix is not a continuous function. Nevertheless,  the spectrum of the new solution is really spiked, and thus these results are quite conservative. We empirically observe that the principal eigenvectors keep recovering the correct information outside these scenarios. However this simple proof mechanism  is not easily adaptable to handle perturbed problems in a straightforward way since it is difficult to characterize the properties of eigenvectors of the solution of a semi-definite program. Hence  we are able to derive a proper theoretical study only for these simple models. 

\section{Algorithms}
\label{sec:algo}

In this section, we present an optimization algorithm which is adapted to large $n$ settings, and avoids the $n$-dimensional semidefinite constraint.

\subsection{Reformulation}

We aim to solve the general regularized problem which correponds to \eq{relaxYY}
\BEQ
\label{eq:relaxY}
\max_{V \succcurlyeq 0} \frac{2}{n} \sum_{i=1}^n \sqrt{ ( XVX^\top)_{ii} }  - \frac{1}{n} \tr   V ( X^\top X
+ n \Diag(a)^2) - \| \Diag(c)  V\Diag(c)  \|_1.
\EEQ	
We consider a slightly different optimization problem:
\BEQ
\label{eq:relaxYref}
\max_{V \succcurlyeq 0} \frac{1}{n} \sum_{i=1}^{n} \sqrt{( XVX^\top)_{ii}} -   \| \Diag(c)  V\Diag(c)  \|_1
\ \ \ {\rm{s.t.}} \   \tr   V ( \frac{1}{n} X^\top X
+   \Diag(a)^2) = 1.
\EEQ
 When $c$ is equal to zero, then \eq{relaxYref} is exactly equivalent to \eq{relaxY};  when  $c$ is small (as will typically be the case in our experiments), the solutions are very similar---in fact, one can show by Lagrangian duality that by a sequence of problems in \eq{relaxYref}, one may obtain the solution to \eq{relaxY}.

\subsection{Smoothing}

By letting $A \!=\! \frac{X^\top X}{n} \!+\! \Diag(a)^2 $, we consider a strongly-convex approximation of \eq{relaxYref} as:
\BEAS
\max_{V \succcurlyeq 0} \frac{1}{n} \sum_{i=1}^{n} \sqrt{( XVX^\top)_{ii}} -   \| \Diag(c)  V\Diag(c)  \|_1  - \varepsilon \tr [(A^\half V A^\half)\log(A^\half V A^\half)]   
\ {\rm{s.t.}} \ \tr (A^\half V A^\half) = 1,
\EEAS
where $- \tr M \log(M)$ is a spectral convex  function called the von-Neumann entropy \citep{vonneumann_entropy}.
The difference in the two problems is known to be $\varepsilon \log(d)$ \citep{Nest_smooth}. As shown in Appendix~\ref{app:dualsmoothing}, the dual problem is
\BEQ
\label{eq:smoothdual}
\min_{u\in\rb^{n}_{+}, C \in \rb^{d \times d}:|C_{ij}| \leqslant c_i c_j}     \frac{1}{2n} \sum_{i=1}^{n} \frac{1}{u_i} + \phi^{\varepsilon}\big(A^{-\half}\big(\frac{1}{2n}X^\top\Diag(u)X - C\big)A^{-\half}\big),
\EEQ
where $\phi^{\varepsilon}(M)$ is an $\varepsilon$-smooth approximation to the maximal eigenvalue of the matrix $M$.

\subsection{Optimization algorithm}

In order to solve \eq{smoothdual}, we split the objective function into a smooth part 
$F(u,C) = \phi^{\varepsilon}\big(A^{-\half}\big(\frac{1}{2n}X^\top\Diag(u)X - C\big)A^{-\half}\big)$ and a non-smooth part $H(u,C) =  \mathbb{I}_{|C_{ij}| \leqslant c_i c_j } + \frac{1}{2n} \sum_{i=1}^{n} \frac{1}{u_i}  $.
We may then apply FISTA \citep{fista} updates to the smooth function $\phi^{\varepsilon}(A^{-\half}(\frac{1}{2n}X^\top\Diag(u)X - C)A^{-\half})$, along with a proximal operator for the
non-smooth terms $\mathbb{I}_{|C_{ij}| \leqslant c_i c_j }$ and $\frac{1}{2n} \sum_{i=1}^{n} \frac{1}{u_i}$, which may be computed efficiently. See details in Appendix \ref{app:algo}.

\paragraph{Running-time complexity.} Since we need to project on the SDP cone of size $d$ at each iteration, the running-time complexity per iteration is $O(d^3+d^2n)$; given that often $n \geqslant d$, the dominating term is $O(d^2n)$.  {It is still an open problem to make this linear in $d$.} Our function being $O(1/\varepsilon)$-smooth, the convergence rate is of the form $O(1/(\varepsilon t^2))$. Since we stop when the duality gap is $\varepsilon  \log(d)$ (as we use smoothing, it is not useful to go lower), the  number of iterations is of order $1 / ( \varepsilon \sqrt{\log (d)})$. 

\section{Experiments}

We implemented the proposed algorithm in Matlab. The code has been made available in~\url{https://drive.google.com/uc?export=download&id=0B5Bx9jrp7celMk5pOFI4UGt0ZEk}. 
Two sets of experiments were performed: one on synthetically generated data sets and the other on real-world data sets. The details about experiments follow. 

\subsection{Experiments on synthetic data}
 
In this section, we illustrate our theoretical results and algorithms on synthetic examples. The synthetic data were generated by assuming a  fixed  clustering with $\alpha_*\in[0,1]$, along a single direction and the remaining variables were whitened. We consider clustering error defined for a predictor $\bar y$ as $1-(\bar y\t y/n)^2$, with values in $[0,1]$ and equal to zero if and only if $y=\bar y$.

\paragraph{Phase transition.}
We first illustrate our theoretical results for the balanced case in \myfig{phasetrans}. We solve the relaxation for a large range of $d$ and $n$ using the \texttt{cvx} solver \citep{ gb08, cvx}.  
We show the results averaged over 4 replications and  take  $\lambda_n=1/{\sqrt{n}}$ for the sparse problems.  
In \myfig{phase} we investigate whether  \texttt{cvx} finds a rank-one solution  for a problem of size $(n,d)$ (the value is $1$ if the solution is rank-one and $0$ otherwise). We compare the performance of the algorithms without  $\ell_1$-regularization in the affine invariant case and with  $\ell_1$-regularization in the 1-sparse case. We observe a phase transition with a scaling over the form $n=O(d^{2})$ for the affine invariant case and  $n=O(d)$ for the $1$-sparse case. This is better than what expected by the theory and corresponds rather to the performance of the principal eigenvector of the solution. It is worth noting  that it may be uncertain to really distinguish between a rank-one solution and a spiked solution.

We also solve the relaxation for  $4$-sparse problems of different sizes $d$ and $n$ and plot the clustering error. We compare the performance of the algorithms without  $\ell_1$-regularization in the affine invariant case and with  $\ell_1$-regularization in the 4-sparse case in \myfig{phase2}. We notice a phase transition of the clustering error with a scaling over the form $n=O(d^{2})$ for the affine invariant case and  $n=O(d)$ for the $4$-sparse case.  It supports our conjecture on the scaling of order $n=O(l d)$ for $l$-sparse problems. Comparing left plots of \myfig{phase} and \myfig{phase2}, we observe that the two phase-transitions occur at the same scaling between $n$ and $d$. Thus there are few  values of $(n,d)$ for which the \texttt{cvx} solver finds a solution whose rank is stricly larger than one and whose principal eigenvector has a low clustering error.  
This illustrates, in practice, this solver aims to find a rank-one solution under the improved scaling $n=O(d^{2})$.

\begin{figure}[!htbp]

\begin{minipage}[c]{1 \linewidth}
\begin{minipage}[c]{.5 \linewidth}
\includegraphics[scale=0.49]{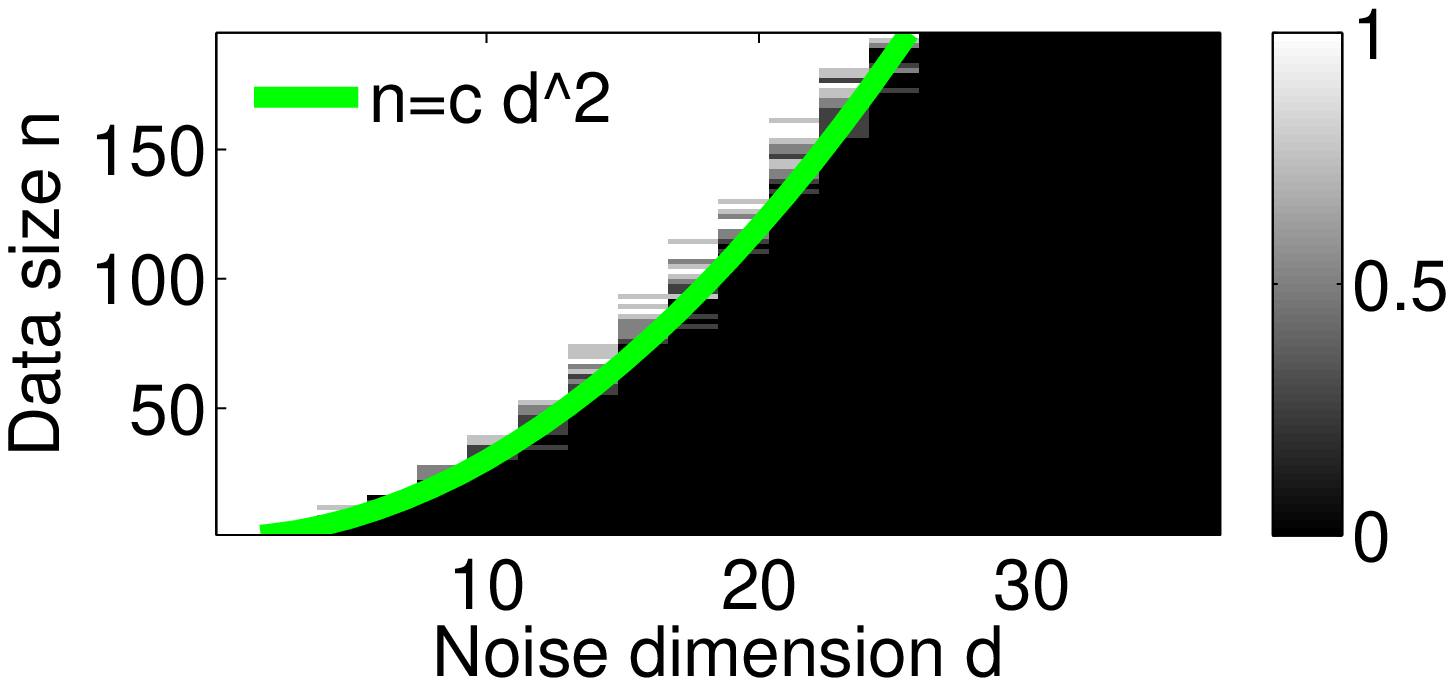}
   \end{minipage} 
   \begin{minipage}[c]{.5 \linewidth}
\includegraphics[scale=0.49]{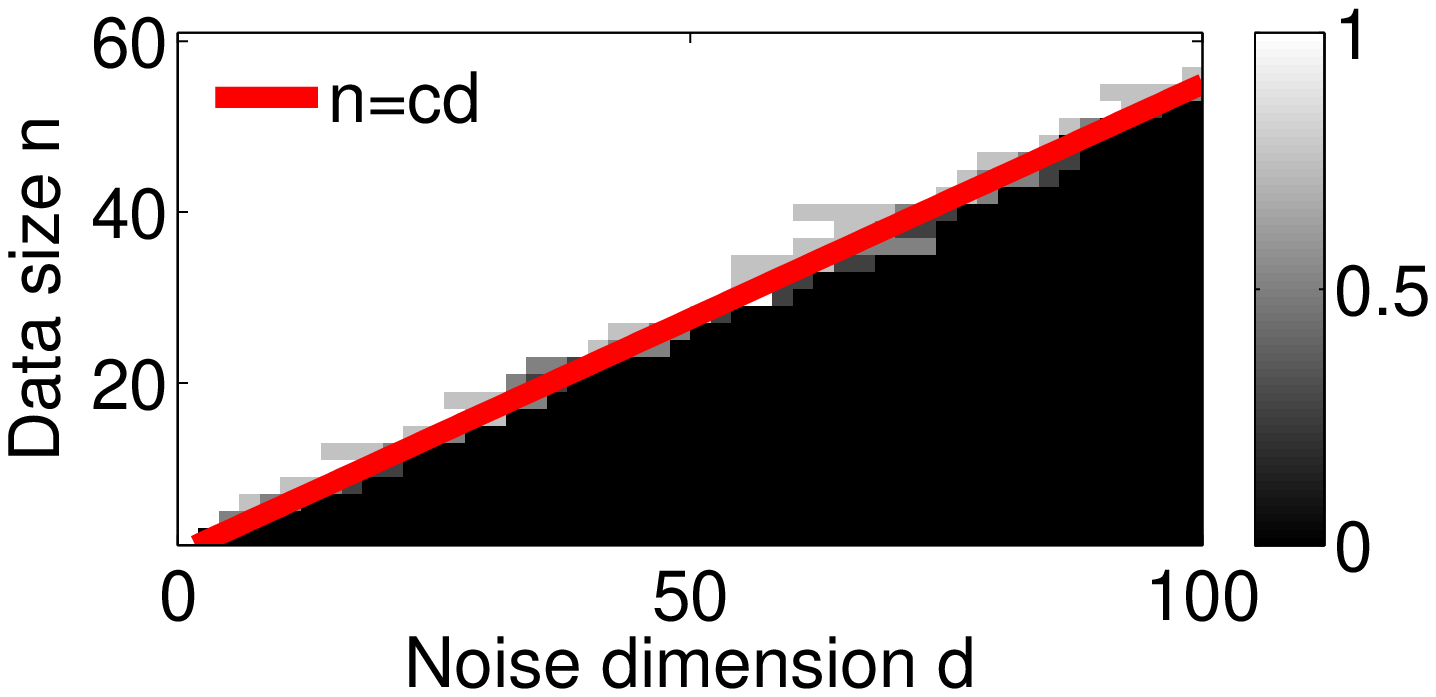}
      \end{minipage}   
		\vspace{-.2in}
    \subcaption{Phase transition for rank-one solution. Left: affine invariant case. Right: 1-sparse case. }
       \label{fig:phase}
    \end{minipage} 

    \begin{minipage}[c]{1 \linewidth}
\begin{minipage}[c]{.5 \linewidth}

\includegraphics[scale=0.49]{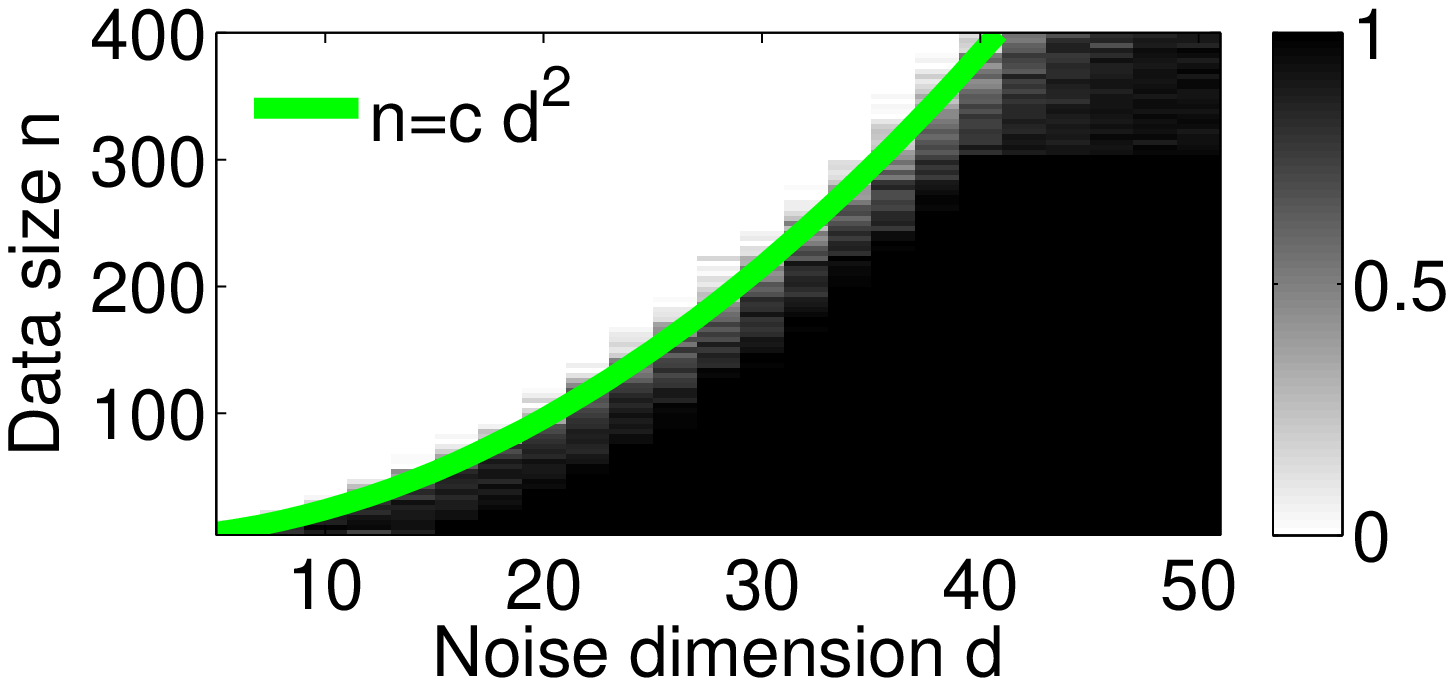}
   \end{minipage} 
   \begin{minipage}[c]{.5 \linewidth}
\includegraphics[scale=0.49]{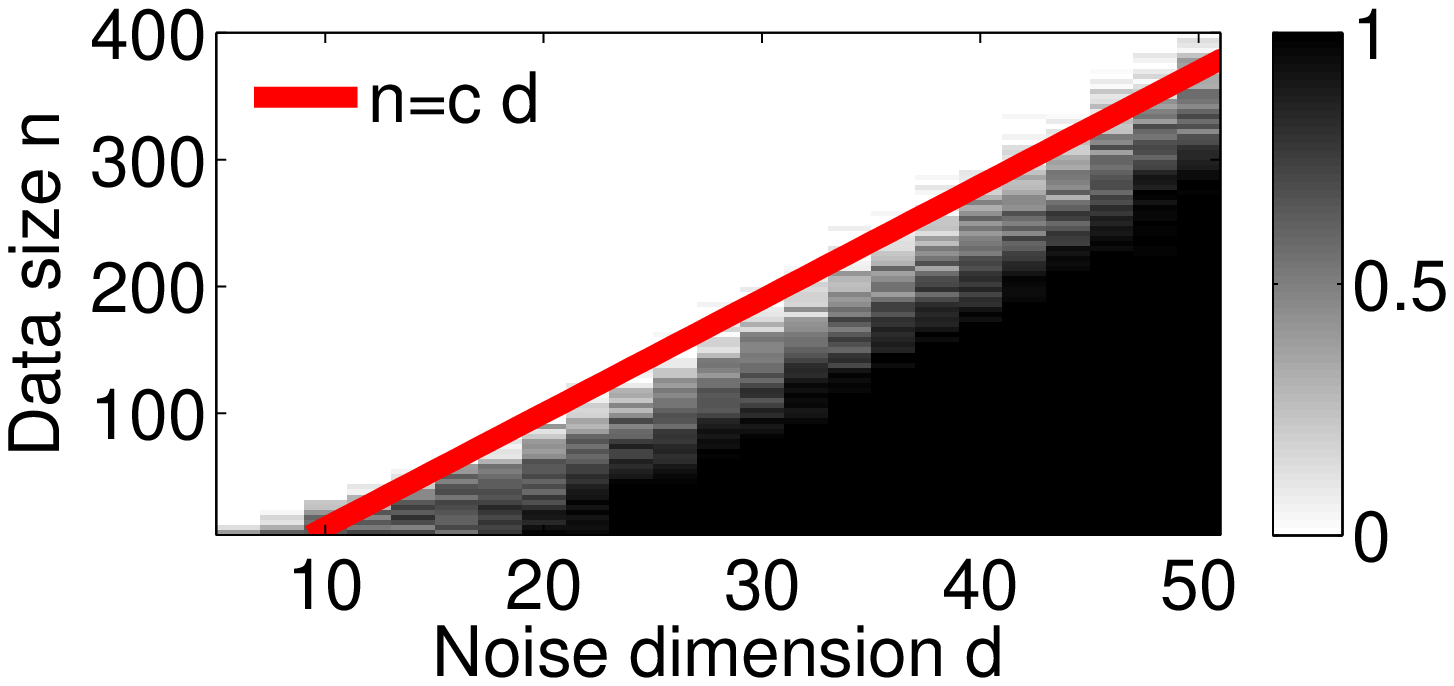}
      \end{minipage}   
		\vspace{-.2in}
    \subcaption{Phase transition for clustering error. Left: affine invariant case . Right: 4-sparse case. }
       \label{fig:phase2}
    \end{minipage} 
 \caption{Phase transition plots.}
         \vspace{-.2in}
    \label{fig:phasetrans}
\end{figure} 

\paragraph{Unbalanced case.}
We generate an unbalanced problem for $d=10$, $n=80$ and $\alpha_*=0.25$ and we average the results over $10$ replications.
We compare the clustering error for the constrained and the penalized relaxations when we consider the sign of the first or second eigenvector and when we use projection technique defined as $(\Pi_n Y_{(2)}\Pi_n)_{(1)}$ where $Y_{(k)}$ is the best rank-$k$ approximation of $Y$, to extract the information of $y$. 
We see in \myfig{unbalanced} that (a) for the constrained case, the range of $\alpha$ such that the sign of $y$ is recovered is cut in two parts where one eigenvector is correct, whereas the projection method performs well on the whole set. 
(b) For the penalized case, the correct sign is recovered for $\nu$ close to $0$ by the first eigenvector and the projection method whereas the second one performs always badly. 
(c) When there is zero noise the rank of the solution is one for $\alpha\in\{\alpha_*,1\}$, two for $\alpha\in(\alpha_*,1)$ and greater otherwise. These findings confirm our analysis. However, when $y$ is corrupted by some noise this result is no longer true.  

\begin{figure} 
    \hspace*{.5cm}
\begin{minipage}[c]{.31\linewidth}
\includegraphics[scale=0.32]{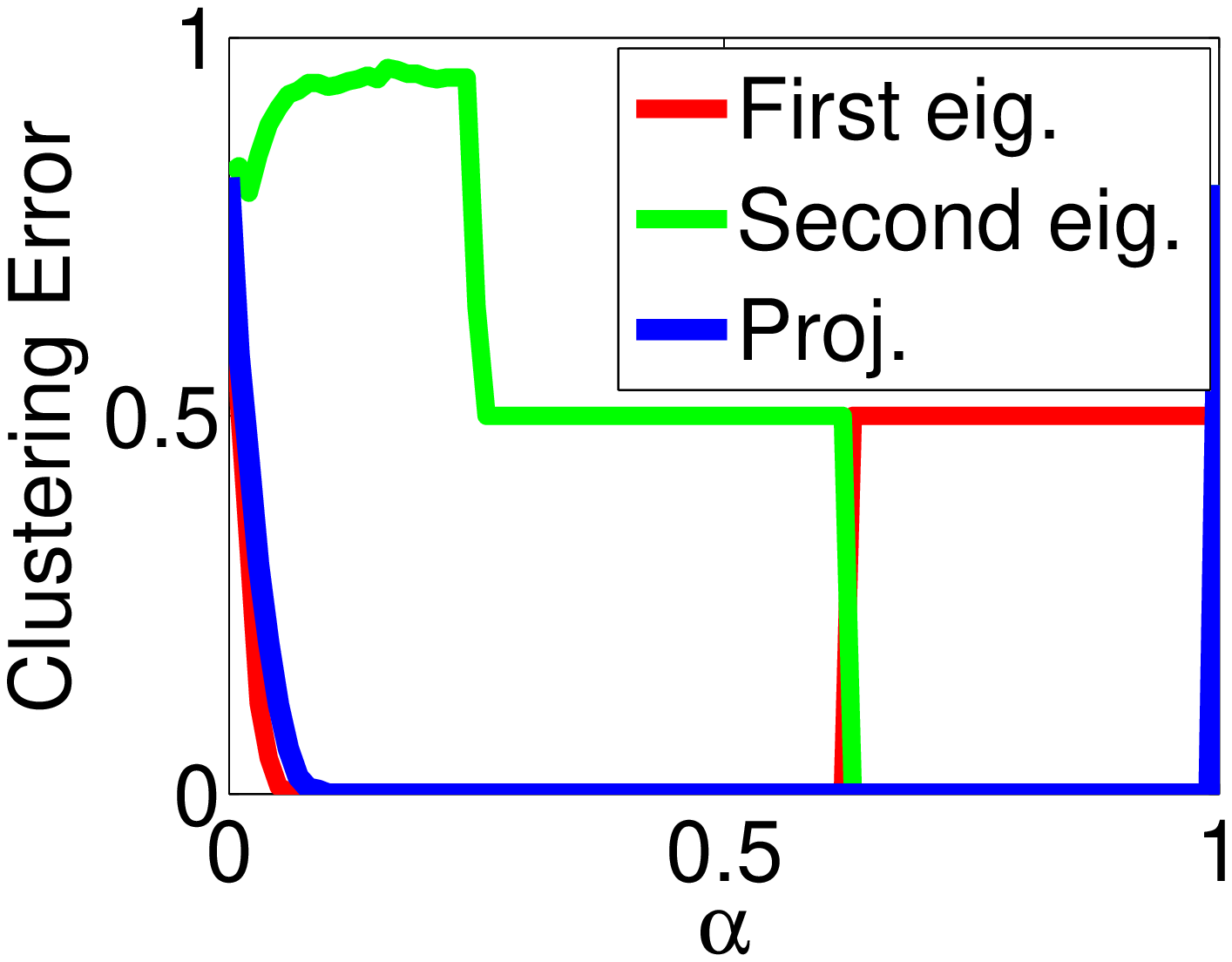}
   \end{minipage} 
   \begin{minipage}[c]{.31\linewidth}
\includegraphics[scale=0.32]{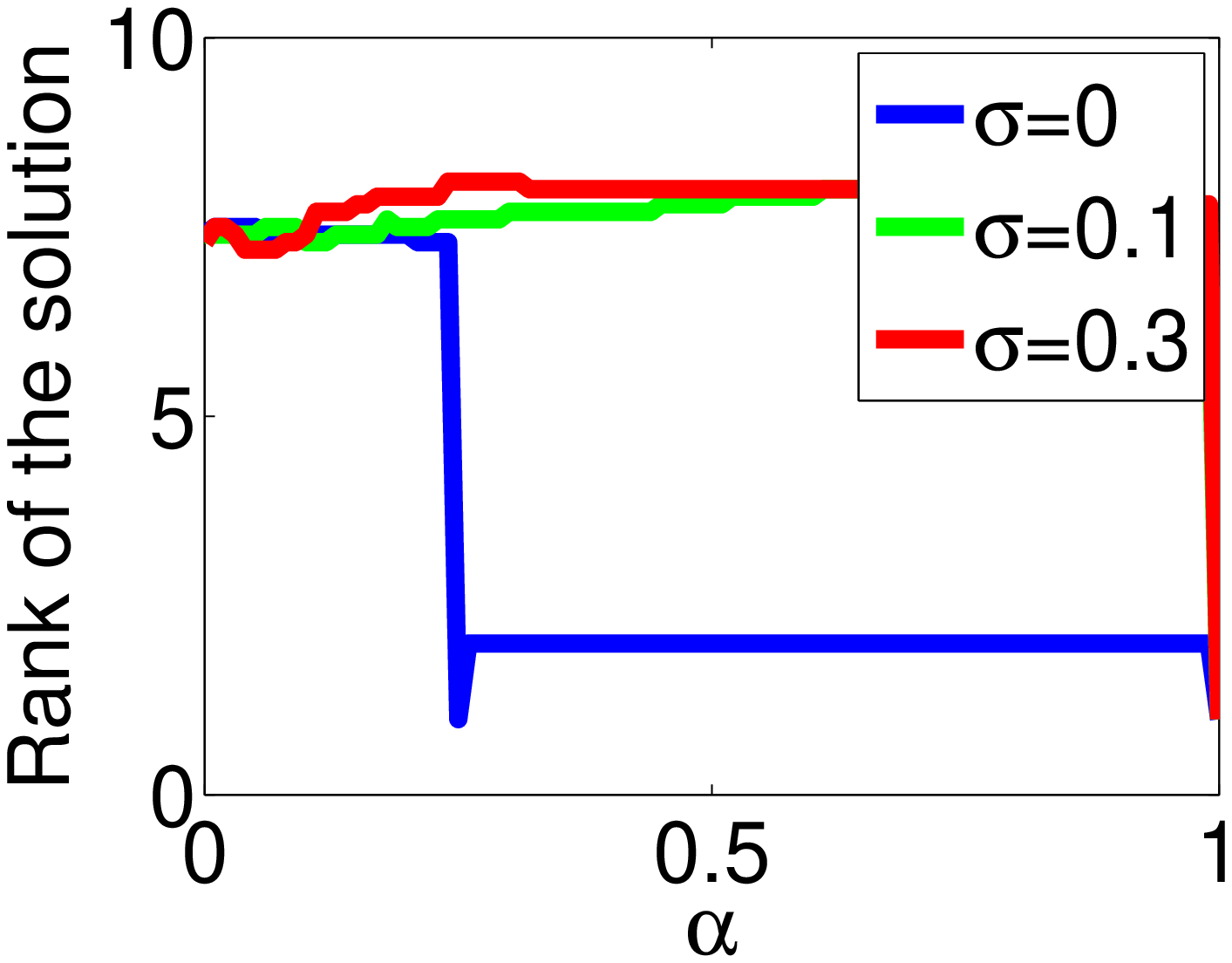}
      \end{minipage} 
   \begin{minipage}[c]{.31\linewidth}
\includegraphics[scale=0.32]{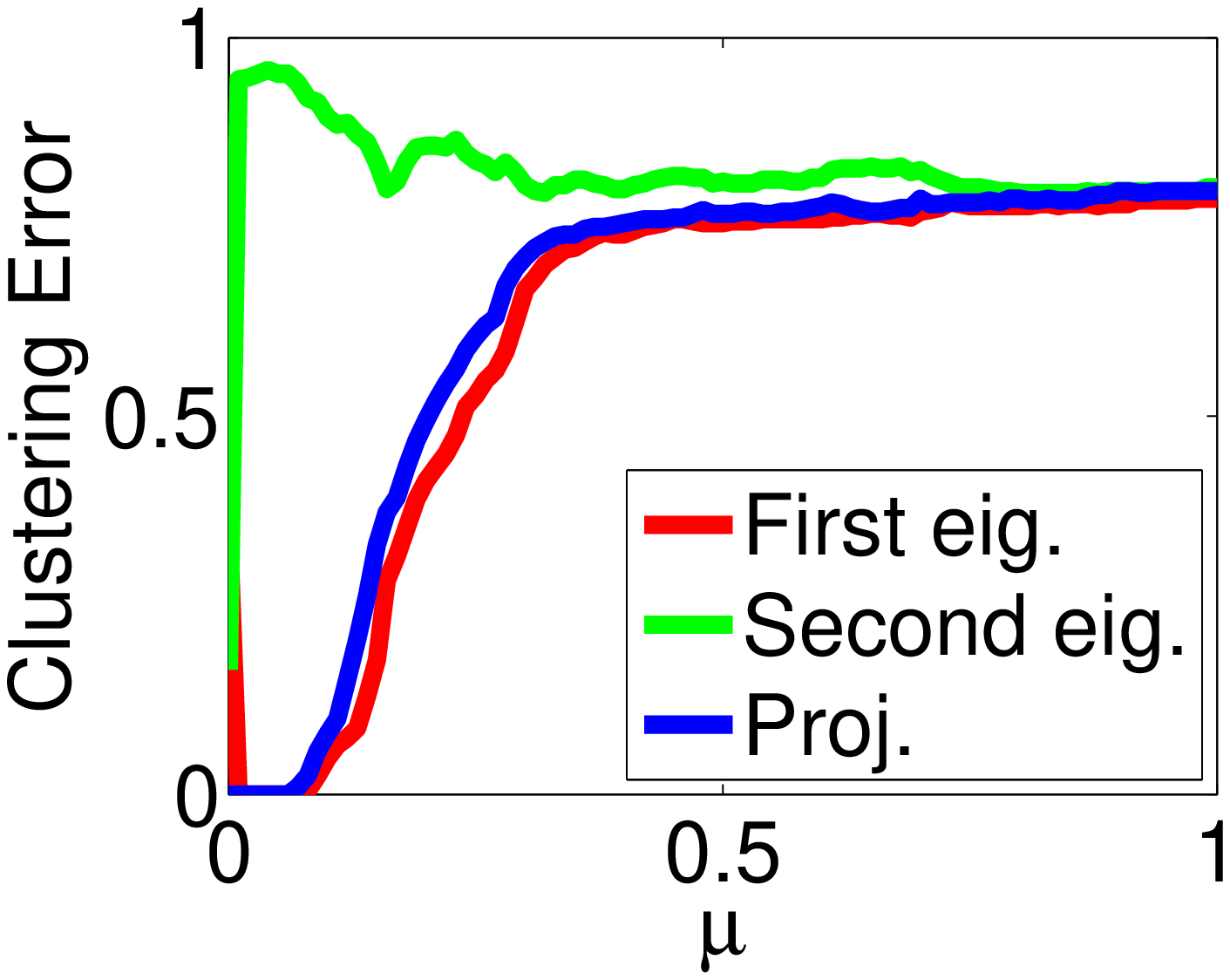}
      \end{minipage} 
                 
    \caption{Unbalanced problem for $n=80$, $d=10$ and $\alpha_*=0.25$. Left: Clustering error for the constrained relaxation. Middle: Rank of the solution for different level of noise $\sigma$. Right: Clustering error for the penalized relaxation.  }
   \label{fig:unbalanced}

\end{figure}

\paragraph{Runtime experiments.}
We also generated data with a $k$-sparse  direction of projection~$v$ by adding $d-k$ noise variables to a randomly generated and rotated $k$-dimension data.
The proposed optimization problem implemented using FISTA \citep{fista} was compared against a benchmark \texttt{cvx} solver  
to compare its scalability. 
Experiments were performed for $\lambda=0$ and $\lambda=0.001$, the coefficient associated with the sparse $\|V\|_1$ term.  
For a fixed $d$, \texttt{cvx} breaks down for large $n$ values (typically $n\geqslant 1000$). 
Similarly, the runtime required by \texttt{cvx} is generally high for $\lambda=0$ and is comparable to our method 
for $\lambda=0.001$. This behavior is illustrated in Figure~\ref{n_d_scalability}.

When $\lambda=0$, the problem reduces to the original Diffrac problem \citep{bach_diffrac_2007} and hence 
can be compared to an equivalent max-cut SDP \citep{manopt}. We observed that 
our method is comparable in terms of runtime and clustering performance of low-rank methods for max-cut 
(Figure~\ref{n_d_scalability}). However, for $\lambda>0$, the equivalence with max-cut disappears.

The plots in these figures show the behavior of FISTA for two different stopping criteria: 
$\varepsilon = 10^{-2} / \log(d)$ and $\varepsilon = 10^{-3} / \log(d)$. 
It is observed that the choice $10^{-3} / \log(d)$ gives a better accurate solution at the cost 
of more number of iterations (and hence higher runtime). For sparse problems in  \myfig{scalb}, we see that cvx gets a better clustering performance (while crashing for large $n$); the difference would be reduced with a smaller duality gap for FISTA.

\begin{figure}[!htbp]
   \begin{minipage}[b]{1\linewidth}
   \centering
      \begin{minipage}[b]{0.45\linewidth}
	  \includegraphics[scale=0.41]{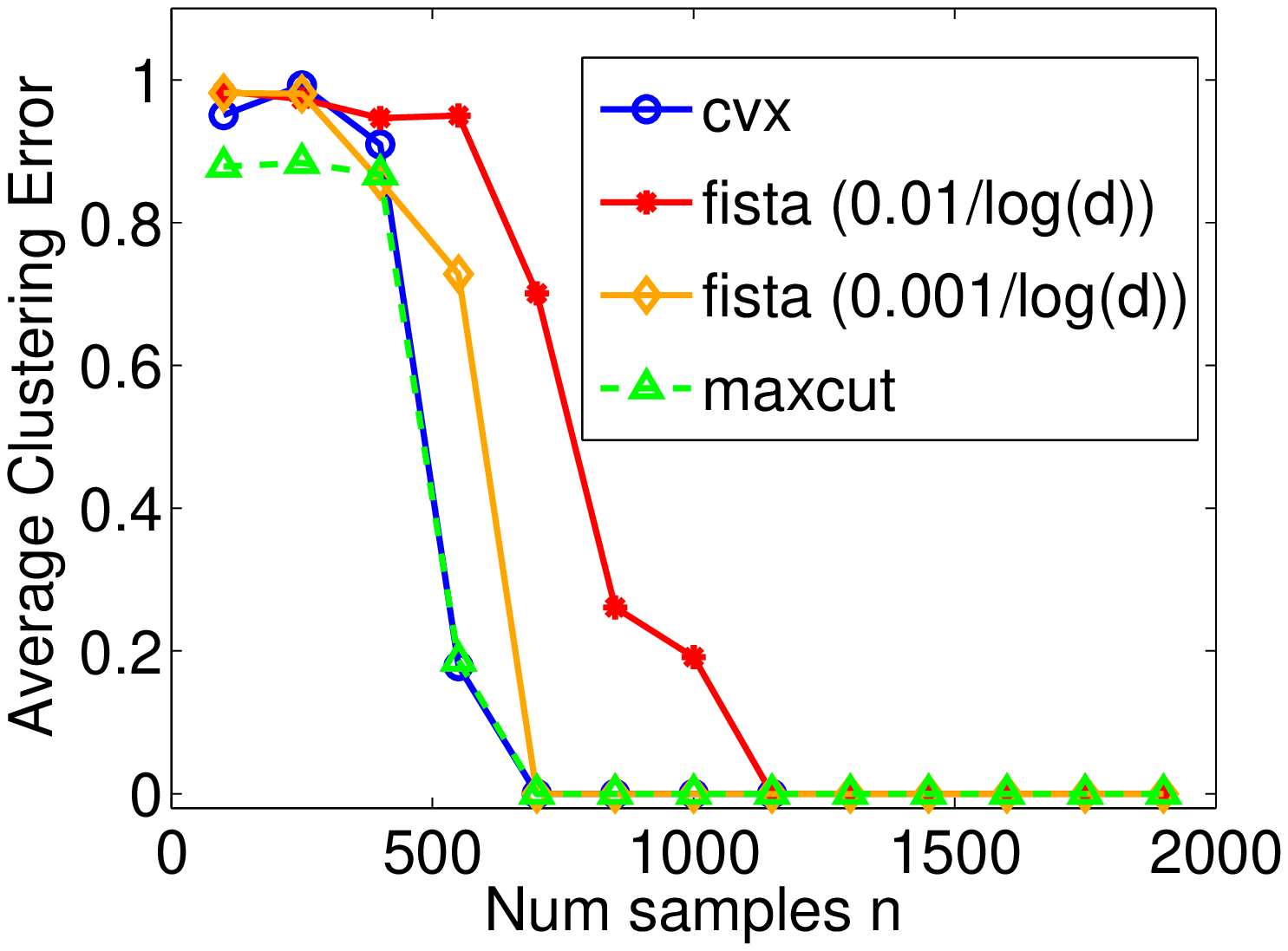}
      \end{minipage}
      \begin{minipage}[b]{0.4\linewidth}
	  \includegraphics[scale=0.41]{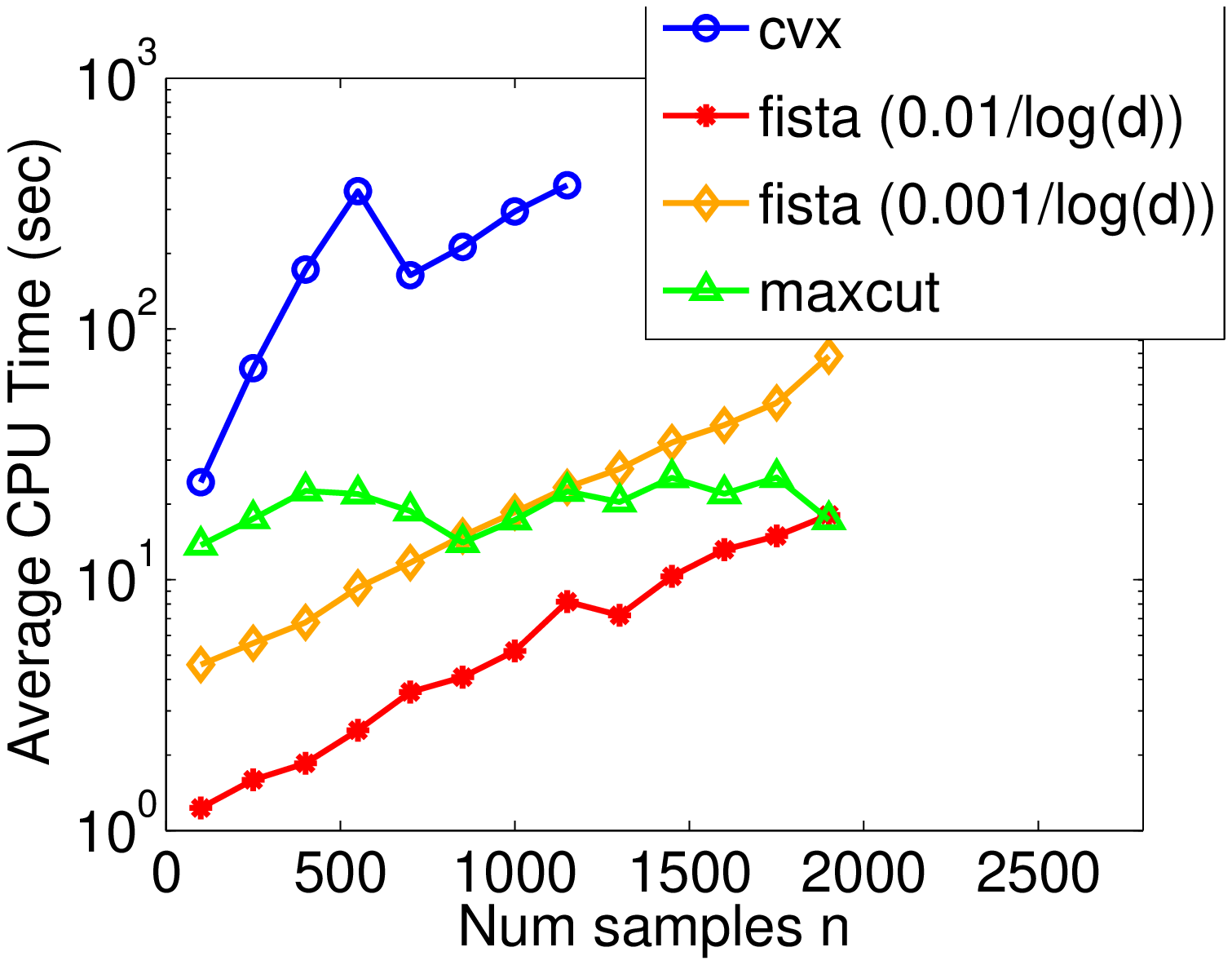}
      \end{minipage}
      \end{minipage}
      \begin{minipage}[b]{1\linewidth}
             \centering
      \begin{minipage}[b]{0.45\linewidth}
	  \includegraphics[scale=0.41]{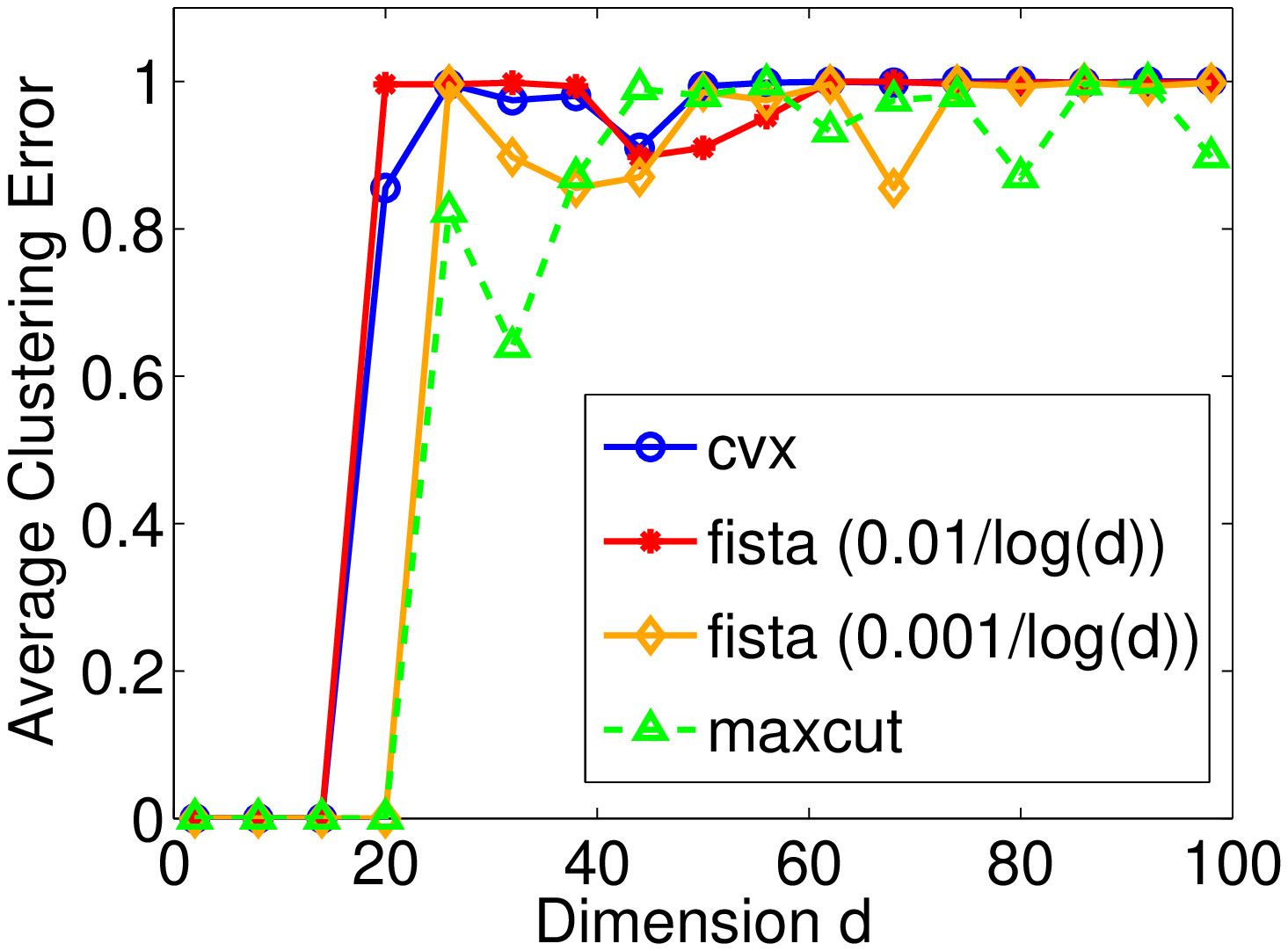}
      \end{minipage}
      \begin{minipage}[b]{0.4\linewidth}
	  \includegraphics[scale=0.41]{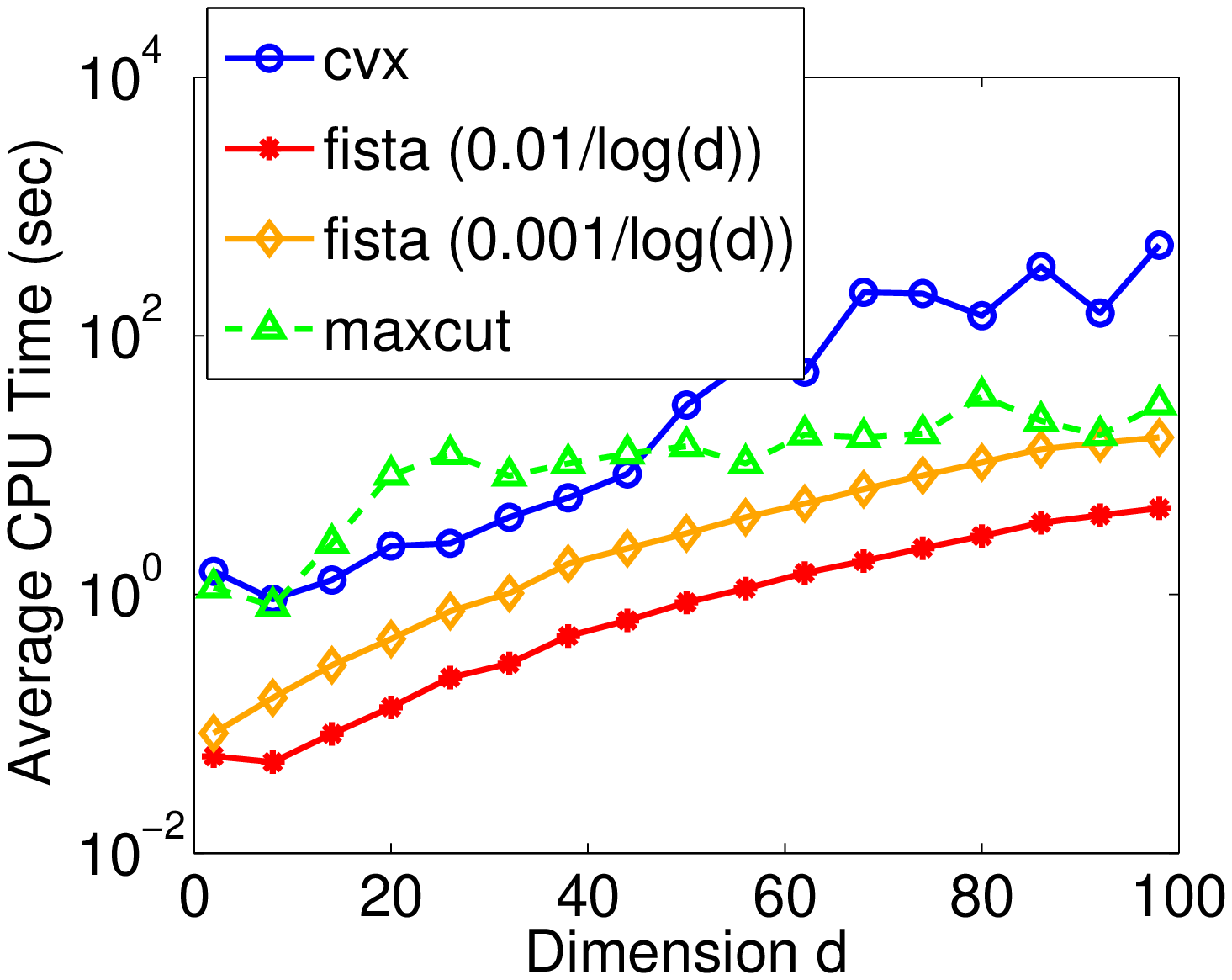}
      \end{minipage}
		\subcaption{\texttt{cvx}, max-cut comparison with $\lambda=0$. Top: $n$ varied with $d=50$, $k=6$. \texttt{cvx} crashed for $n\approx 1000$. Bottom: $d$ varied with $n=100$, $k=2$.} 
      \end{minipage}

   \begin{minipage}[b]{1\linewidth}
   \centering
      \begin{minipage}[b]{0.45\linewidth}
	  \includegraphics[scale=0.41]{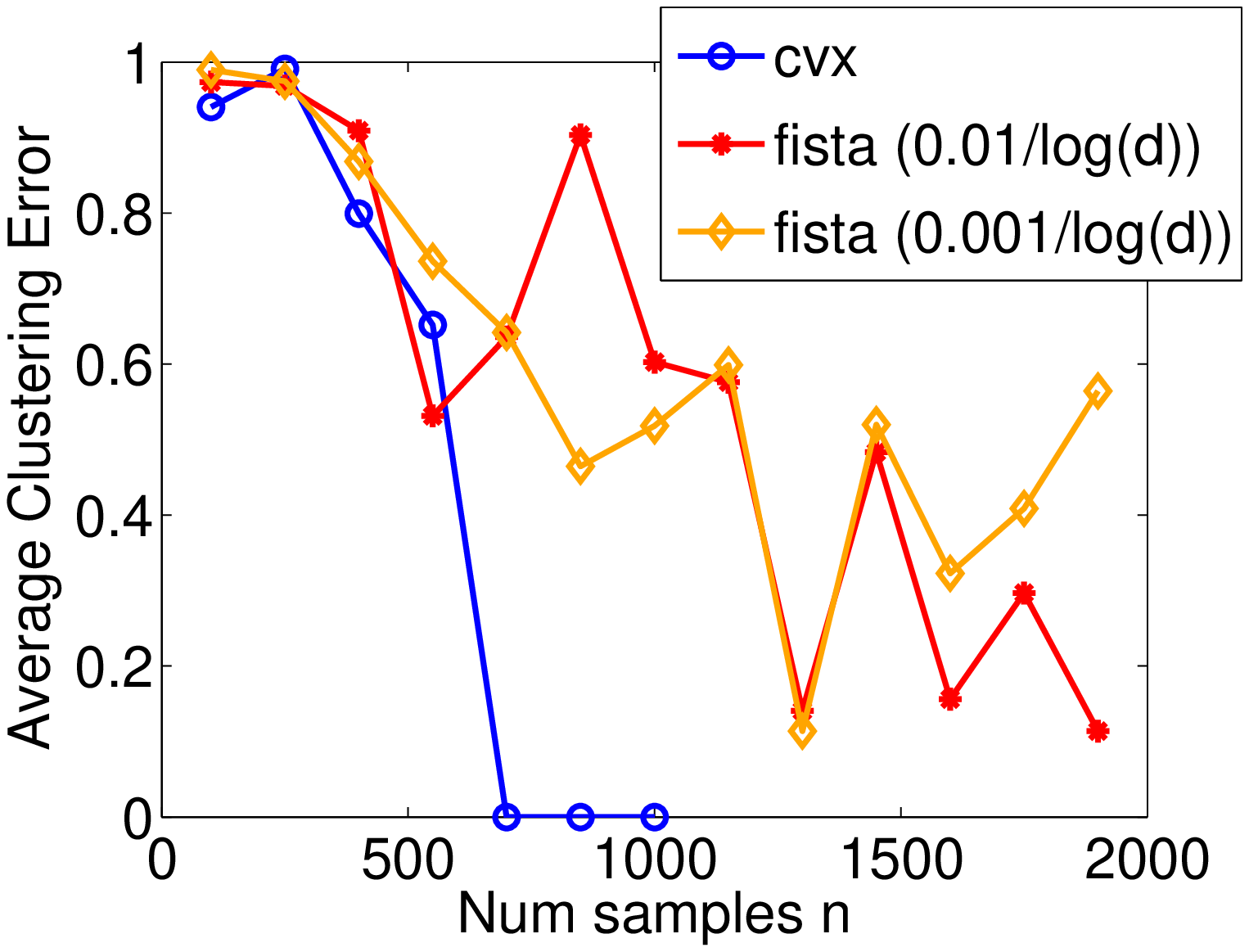}
      \end{minipage}
      \begin{minipage}[b]{0.4\linewidth}
	  \includegraphics[scale=0.41]{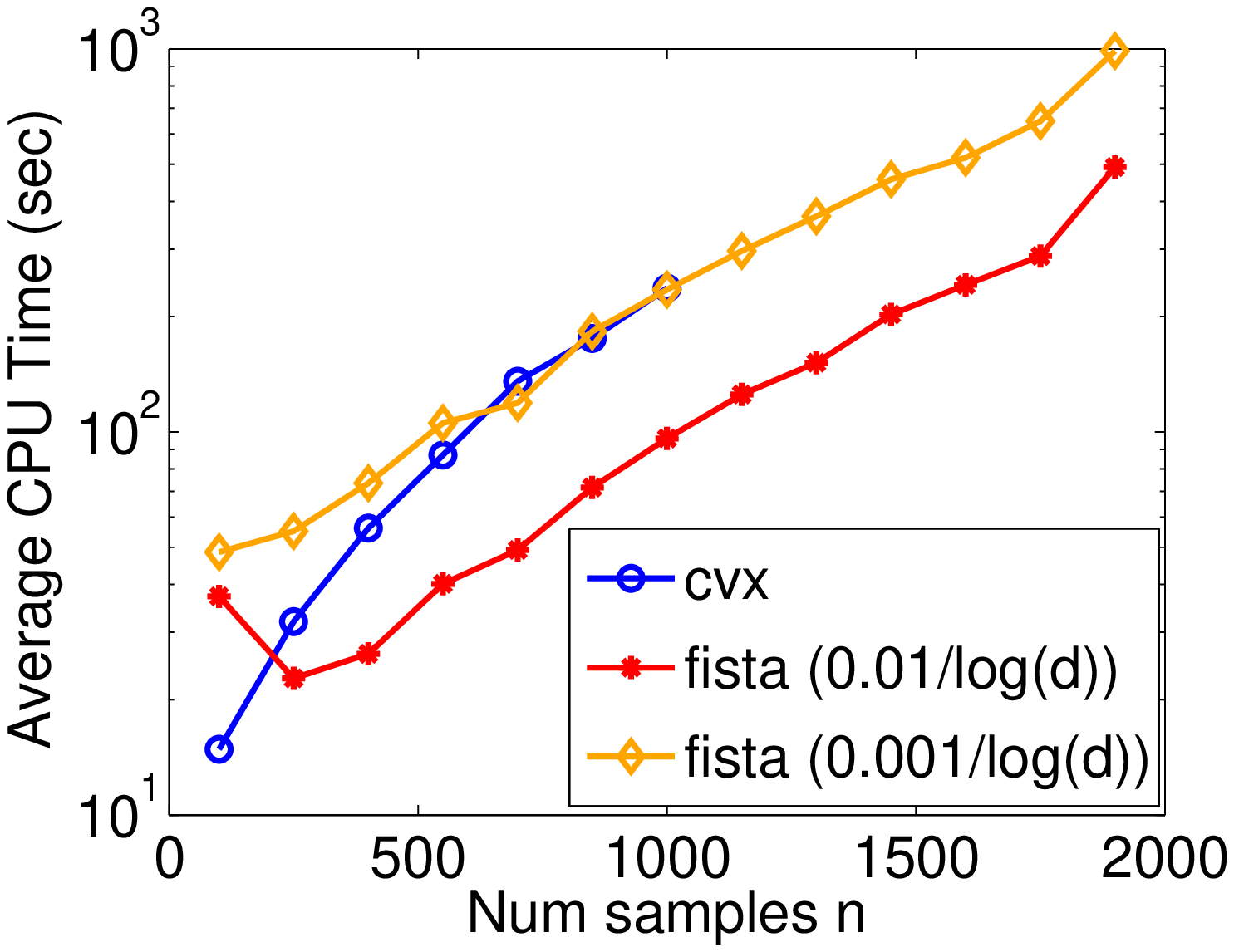}
      \end{minipage}
		\end{minipage}
      \begin{minipage}[b]{1\linewidth}
		\centering
      \begin{minipage}[b]{0.45\linewidth}
	  \includegraphics[scale=0.41]{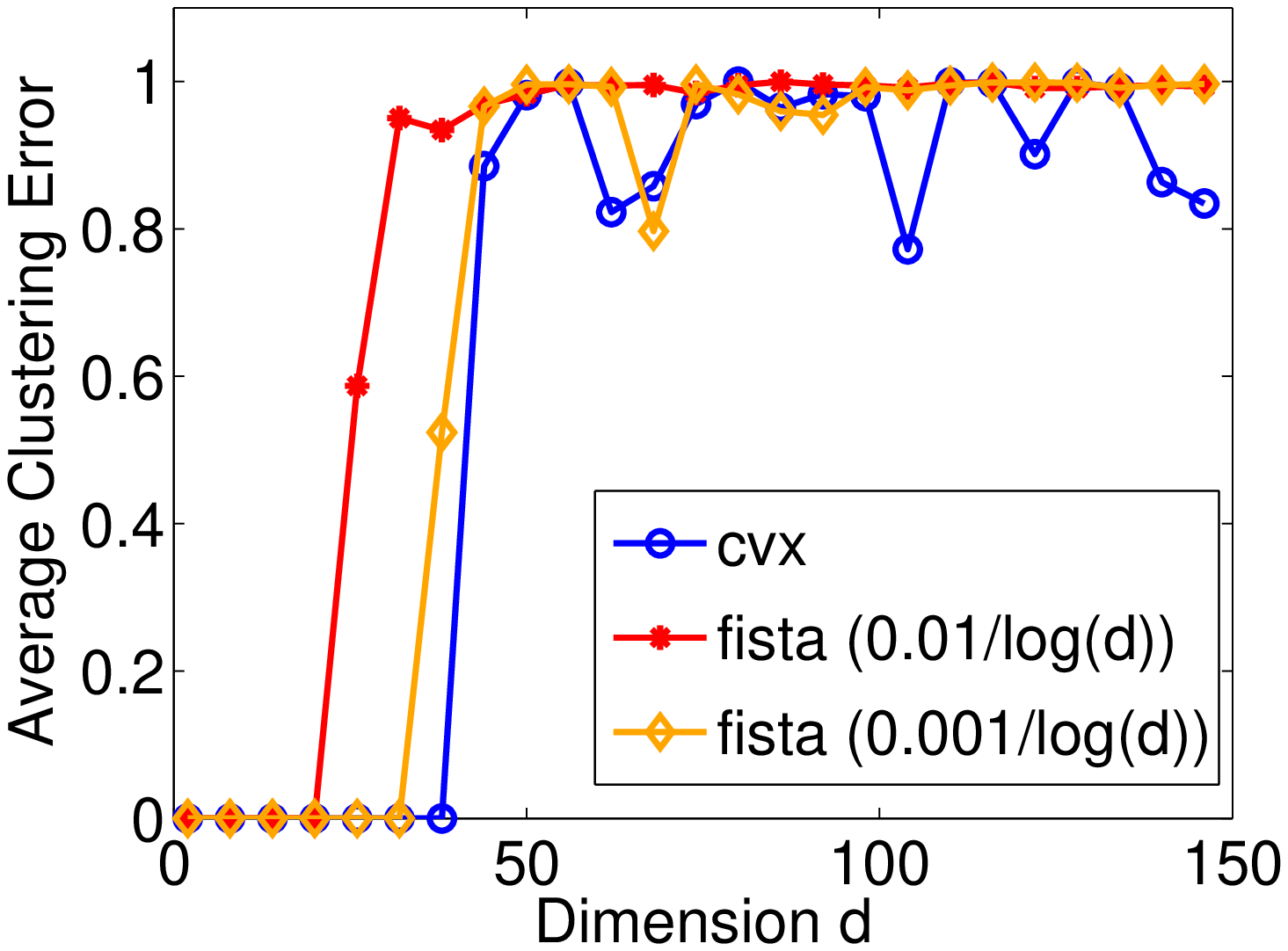}
      \end{minipage}
      \begin{minipage}[b]{0.4\linewidth}
	  \includegraphics[scale=0.41]{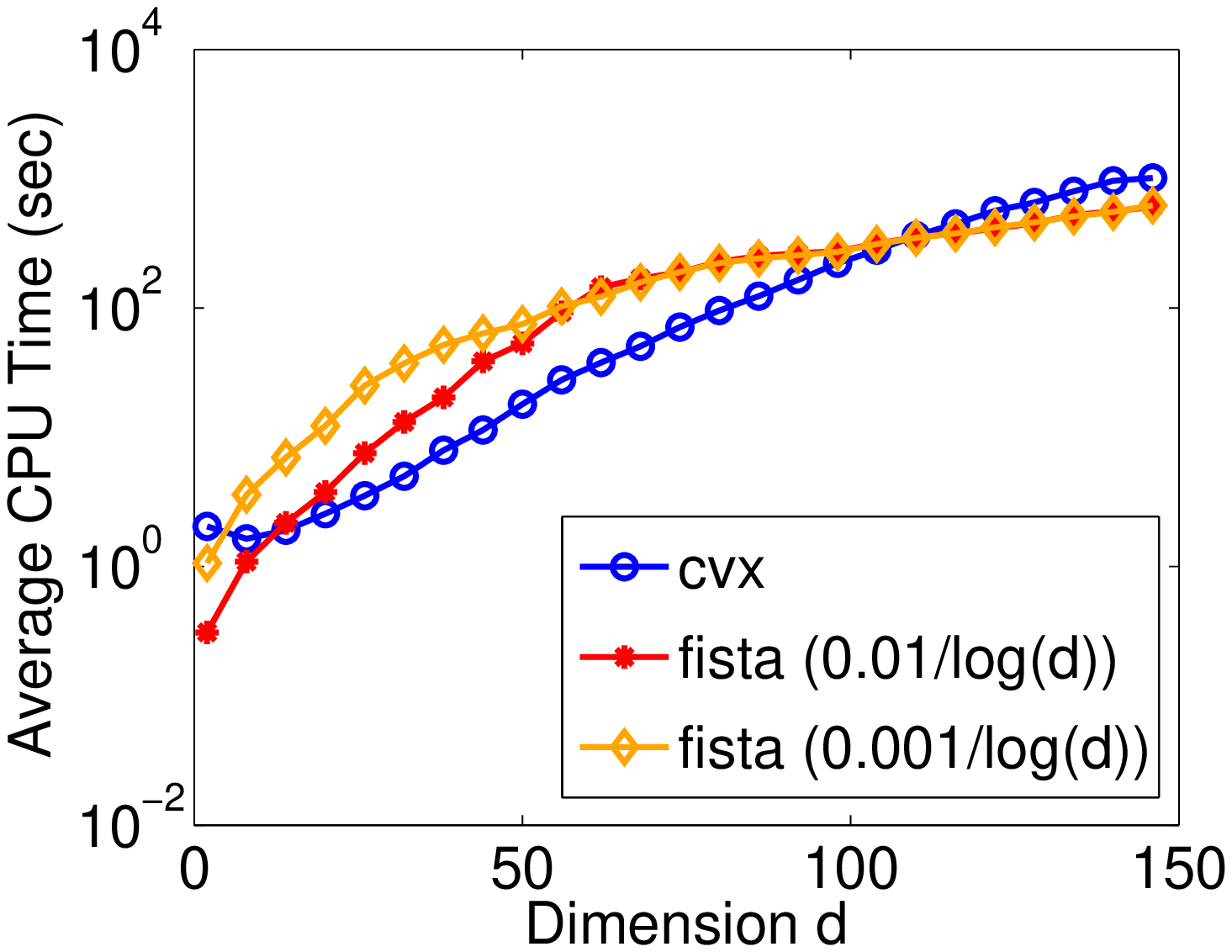}
      \end{minipage}
        \subcaption{\texttt{cvx} comparison with $\lambda=0.001$. Top: $n$ varied with $d=50$, $k=6$. \texttt{cvx} crashed for $n\approx 1000$. Bottom: $d$ varied with $n=100$, $k=2$.} 
        \label{fig:scalb}
        \end{minipage}
         \caption{Scalability experiments.} 
   \label{n_d_scalability}
            \vspace{-.15in}
\end{figure}

\paragraph{Clustering performance.}
Experiments comparing the proposed method with $K$-means and alternating optimization are given in 
Figure~\ref{n100_varyd_kmeansaltoptfista_perf}. $K$-means is run on the whitened variables in $\RR^d$.  
Alternating optimization is another popular method \citet{ye2008discriminative} 
for dimensionality reduction with clustering  
(where alternating optimization of $w$ and $y$ is performed to solve the non-convex formulation~\eqref{eq:maxcorr}). 
The plots show that both $K$-means and 
alternating optimization fail when only a few dimensions of noise variables are present. 
The plots also show that with the introduction of a sparse regularizer (corresponding to the non-zero $\lambda$)  
the proposed method becomes more robust to noisy dimensions. 
As observed earlier, the performance of FISTA is also sensitive to the choice of $\varepsilon$.  

\begin{figure}[h]
      \begin{minipage}[b]{0.45\linewidth}
	  \includegraphics[scale=0.4]{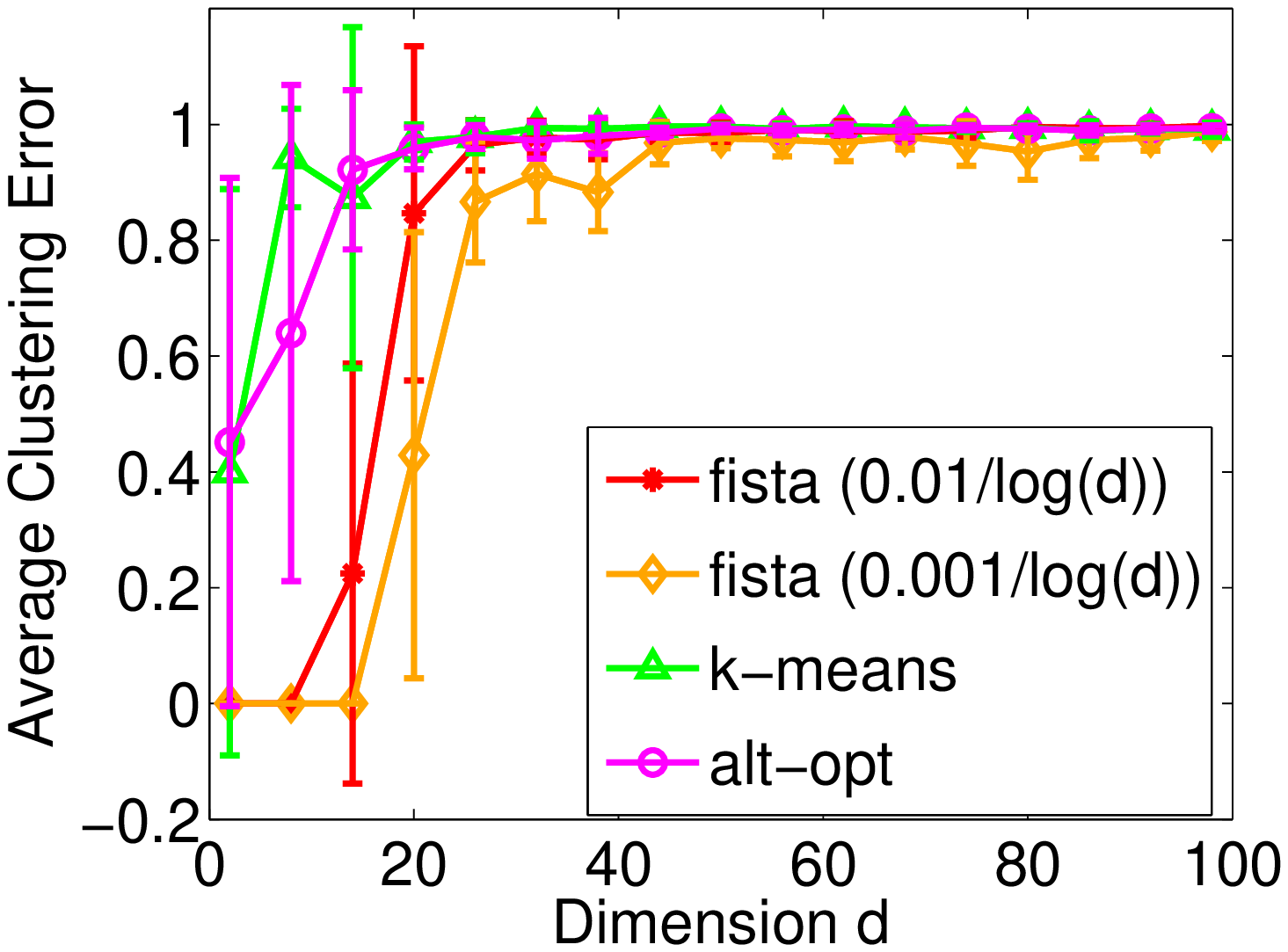}
	  \subcaption{$\lambda=0$}
      \end{minipage}
      \begin{minipage}[b]{0.4\linewidth}
	  \includegraphics[scale=0.4]{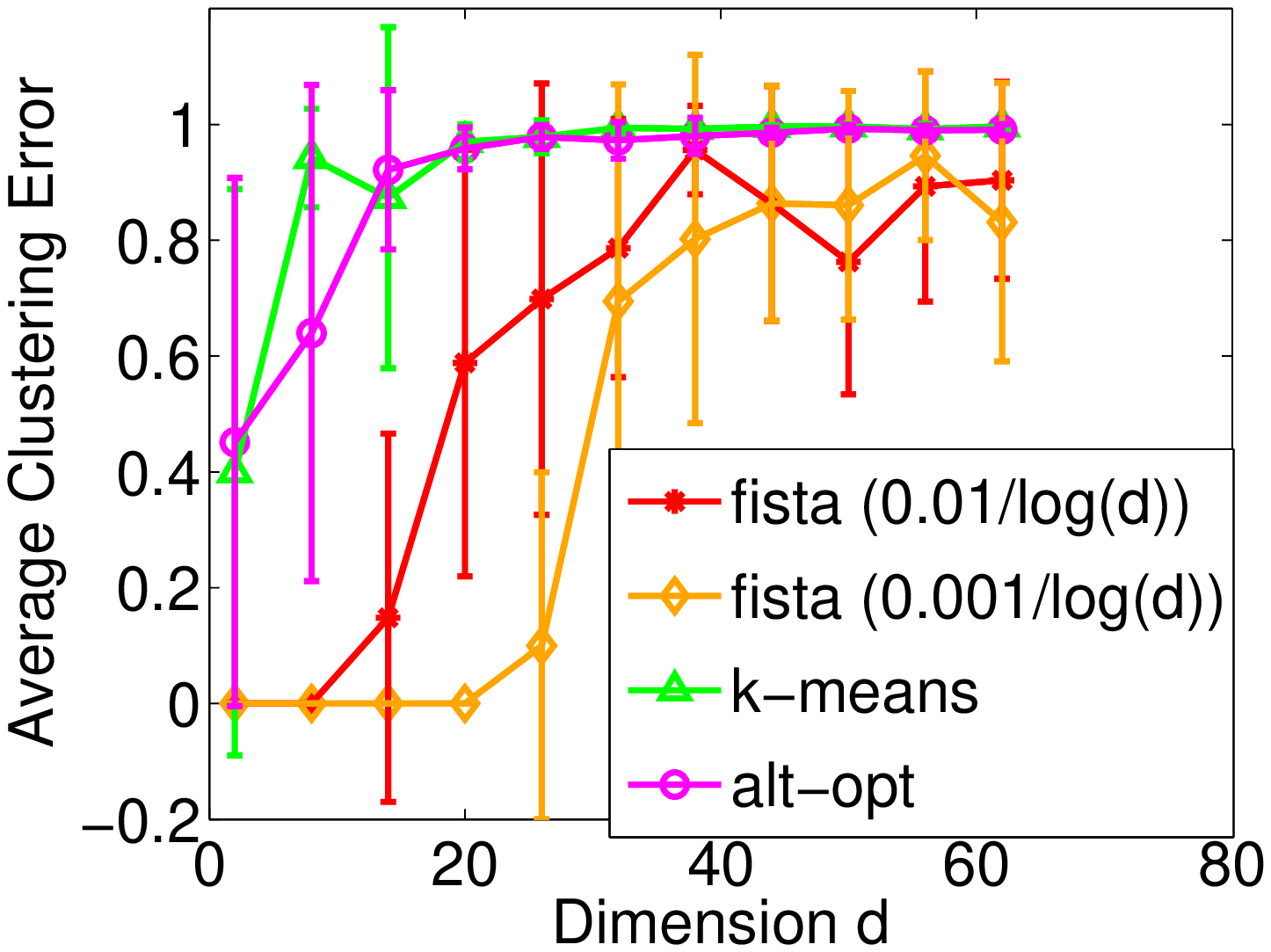}
	  \subcaption{$\lambda=0.01$}
      \end{minipage}
      \begin{minipage}[b]{1\linewidth}
      \center
	  \includegraphics[scale=0.4]{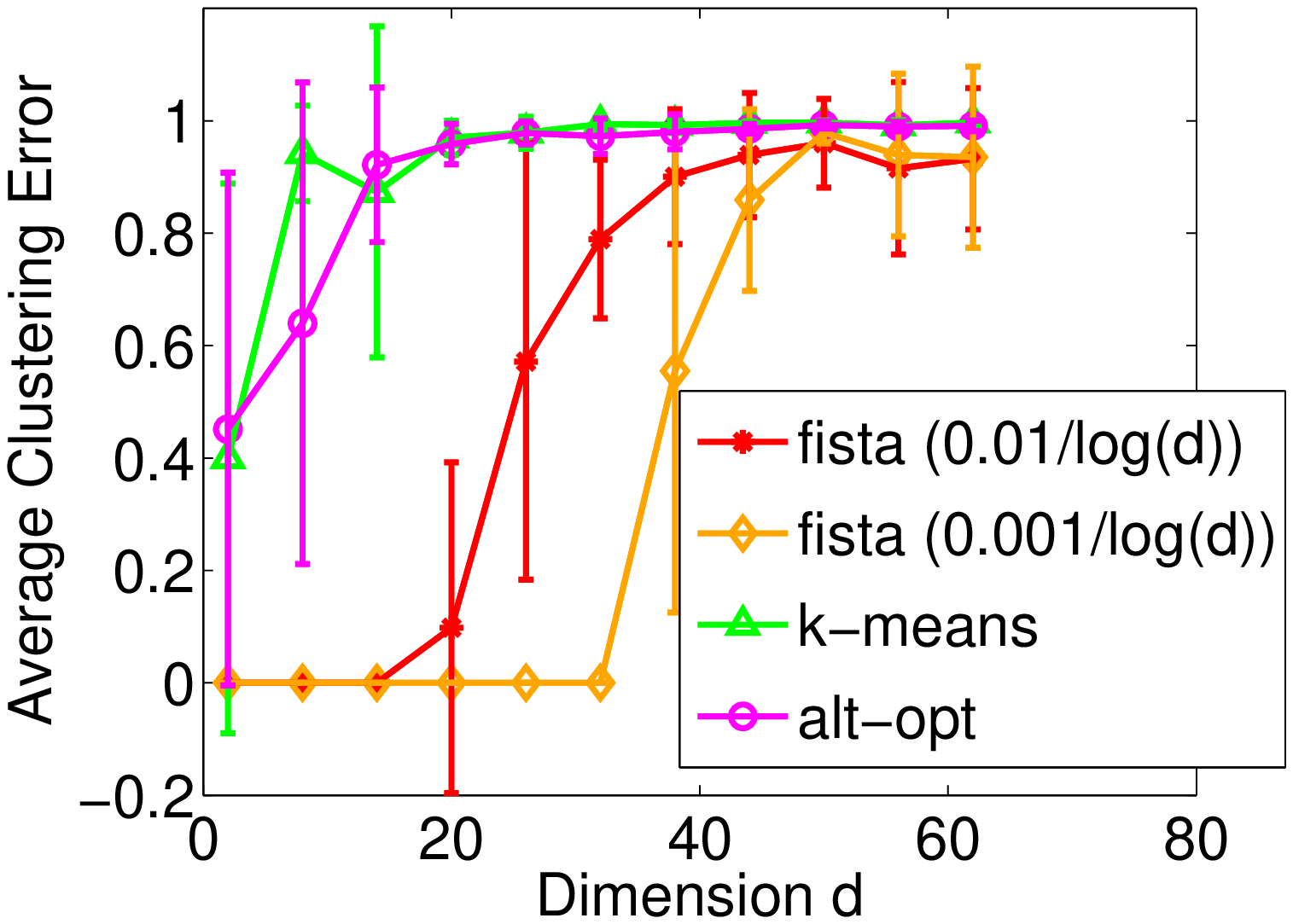}
	  \subcaption{$\lambda=0.001$}
      \end{minipage}
    \caption{Comparison with $k$-means and alternating optimization}
   \label{n100_varyd_kmeansaltoptfista_perf}
\end{figure} 

\subsection{Experiments on real-world data}\label{sec:realworld}

\paragraph{Experiments on two-class data.} Experiments were conducted on real two-class classification datasets\footnote{The data sets were obtained from~\url{https://www.csie.ntu.edu.tw/~cjlin/libsvmtools/datasets/}} to compare the performance of sparse discriminative  clustering against non-sparse discriminative clustering, alternating optimization and $K$-means algorithms. 
For the two-class datasets, the clustering performance for a cluster $\bar y \in \{+1,-1\}^n$ obtained from an algorithm under comparison, was computed as $1-(\bar y\t y/n)^2$, where $y$ is the original labeling. Here we explicitly compare the output of clustering with the original labels of the data points.

The dataset details and clustering performance results are summarized in Table~\ref{twoclass_results_table}. The experiments for discriminative clustering were conducted for different values of $a, c \in \{10^{-3}, 10^{-2}, 10^{-1}\} $ associated with the $\ell_2$-regularizer and $\ell_1$-regularizer respectively. The range of cluster imbalance parameter was chosen to be $\nu \in \{0.01, 0.25, 0.5, 0.75, 1 \}$. The results given in  Table~\ref{twoclass_results_table} pertain to the best choices of these parameters. The results for alternating optimization and $K$-means show the average cluster error (and standard deviation) over 10 different runs. These results show that the cluster error is quite high for many datasets. This is primarily due to the absence of an ambient low-dimensional clustering of the two-class data, which can be identified by the simple linear model presented in this paper. The results also show that adding sparse regularizers to discriminative clustering helps in a better cluster 
identification when compared to the non-sparse case and the other algorithms like alternating optimization and $K$-means.

\begin{table}[!h]
\small
    \caption{ {Experiments on two-class datasets}} 
    \label{twoclass_results_table}
    \begin{center}
    \begin{tabular}{l c c c c c c }
    \toprule 
		Dataset & $n$ &  $d$ & \multicolumn{4}{c}{Cluster Error} \\
		\cline{4-7}
		&  &  & Sparse & Non-sparse & Alternating & $K$-means \\
		&  &  & Discriminative & Discriminative & Optimization &  \\
		&  &  & Clustering & Clustering &  & \\
		\midrule
		Heart & 270  & 3 & \textbf{0.52} & 0.61 & 0.97 $\pm$ 0.03 & 0.91 $\pm$ 0.09 \\
		Diabetes & 768 & 8 & \textbf{0.88} & \textbf{0.88} & 0.91 $\pm$ 0.05 & 0.93 $\pm$ 0.06  \\
	    Breast-cancer & 683 & 10 & \textbf{0.15} & \textbf{0.15} & 0.48 $\pm$ 0.17 & 0.68 $\pm$ 0.24 \\
	    Australian & 690 & 14 & \textbf{0.5} & \textbf{0.5} & 0.88 $\pm$ 0.17 & 0.87 $\pm$ 0.21 \\
	    Liver-disorder & 345 & 6 & \textbf{0.97} & \textbf{0.97} & 0.99 $\pm$ 0.01 & 0.99 $\pm$ 0.01 \\
	    Sonar & 208 & 60 & \textbf{0.92} & 0.95 & 0.98 $\pm$ 0.02 & 0.99 $\pm$ 0.01 \\
    	    DNA(1 vs 2,3) &  1400 & 180  & \textbf{0.75} & 0.83 & 0.99 $\pm$ 0.01 & 0.98 $\pm$ 0.02 \\
	    a1a &  1605 &  113  &  \textbf{0.74} & 0.75 & 0.98 $\pm$ 0.02 & 0.8 $\pm$ 0.08 \\
	    w1a &  2270 &  290  &   \textbf{0.11}  & \textbf{0.11} & 0.92 $\pm$ 0.08 & 0.16 $\pm$ 0.06 \\\bottomrule
       \end{tabular}
    \end{center}
    \end{table} 

\clearpage

\paragraph{Experiments on real multi-label data.} Experiments were also conducted on the Microsoft COCO dataset\footnote{Dataset obtained from~\url{http://mscoco.org/dataset}} to demonstrate the effectiveness of the proposed method in discovering multiple labels. We considered $n=2000$ images from the dataset, each of which was labeled with a subset of $K=80$ labels. The labels identified the objects in the images like person, car, chair, table, etc. and the corresponding features for each image were extracted from the last layer of a conventional convolutional neural network (CNN). The CNN was originally trained over the imagenet data \citep{imagenet}. 

For each image in the dataset, we obtained $d=1000$ features. We then performed discriminative clustering on the 2000 $\times$ 1000 data matrix $X$ and obtained the label matrix $Y$ which was then subjected to the alternating optimization procedure (see Section \ref{subsec:multilabel_analysis}). 

It is clearly unlikely to recover perfect labels; therefore we now describe a way of measuring the amount of information which is recovered.
In order to extract meaningful cluster information from the result so-obtained, we computed the correlation matrix $Y_k \Pi_n Y_{true}$ where $Y_{true}$ is the $n \times K$ label matrix containing actual labels and $\Pi_n$ is the $n \times n$ centering matrix $I_n - \frac{1}{n}1_n 1_n^\top$. The $k$ predicted labels are present in the $Y_k$ matrix. In order to choose an appropriate value of $k$, we plotted Tr$(\Phi_{Y_{true}} \Phi_{Y_{k}})$ (shown in Figure \ref{trace_Y_Ykfig} along with a $K$-means baseline), where $\Phi_{Y_{k}}=Y_k({Y_k}^\top Y_k)^{-1}Y_k^\top$. 
From these plots, we chose $k=30$ to be a suitable value for our interpretation purposes.

\begin{figure} 
	\begin{minipage}[c]{.9\linewidth}
		\center
		\includegraphics[scale=0.3]{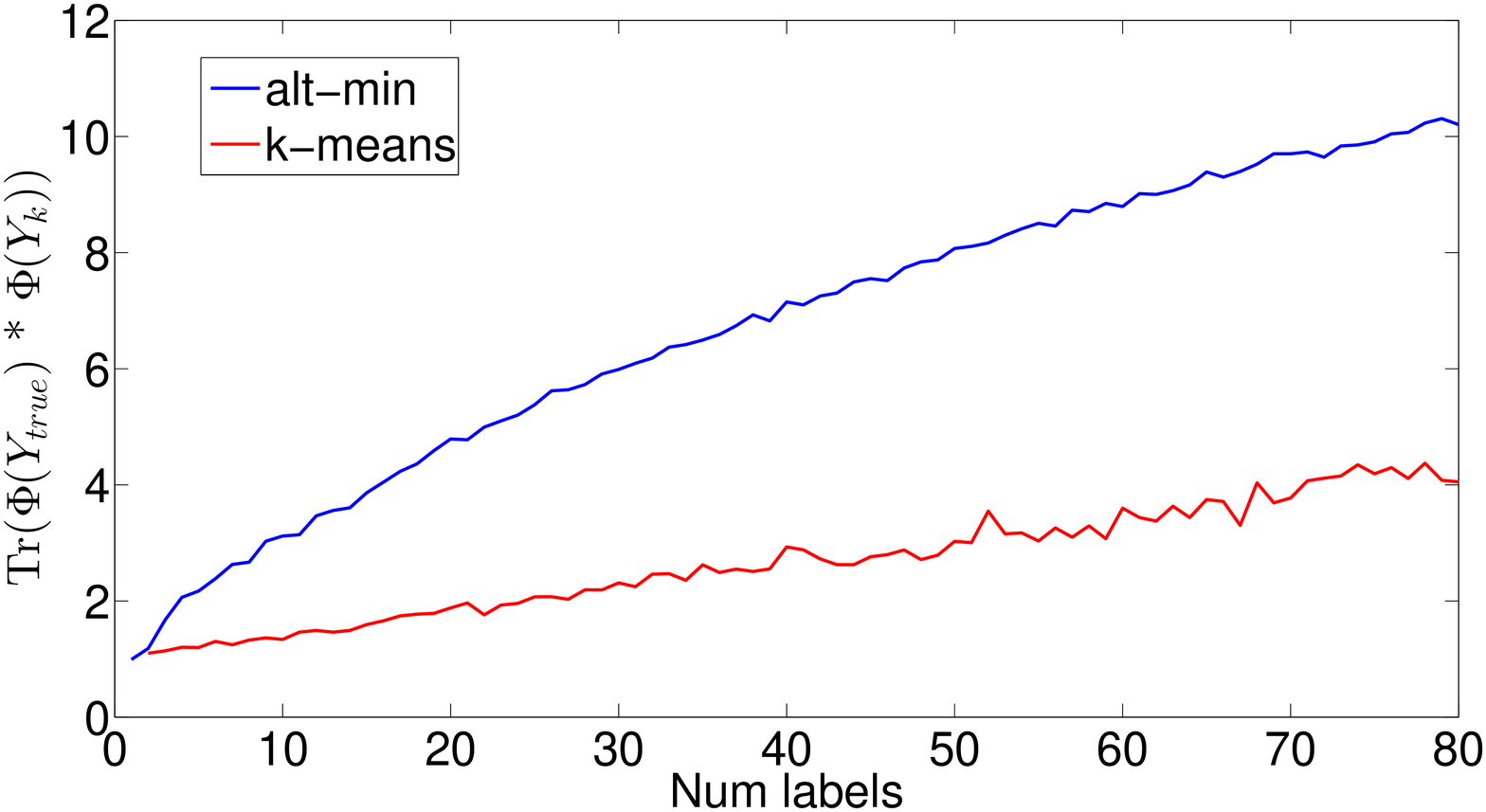}
	\end{minipage} 
	\caption{Plot of Tr$(\Phi_{Y_{true}} \Phi_{Y_{k}})$.}
	\label{trace_Y_Ykfig}
\end{figure}

After choosing an arbitrary value of $k=30$, we plotted the correlations between the actual and predicted labels. The heatmap of the normalized absolute correlations is given in Figure \ref{heatmap_absbyn_orderedrowcolfig}, where the columns and rows corresponding to the 80 true labels and 30 predicted labels respectively, are ordered according to the sum of squared correlations (the top-scoring labels appear to the left-bottom). From this plot, we extract following highly correlated labels: person, dining table, car, chair, cup,  tennis racket,  bowl, truck, fork, pizza, showing that these labels were partially recovered by our unsupervised technique (note that the CNN features are learned with supervision on the different dataset Imagenet, hence there is still some partial supervision).

\begin{figure} 
	\centering
	\begin{minipage}[c]{.9\linewidth}
		\includegraphics[scale=0.32]{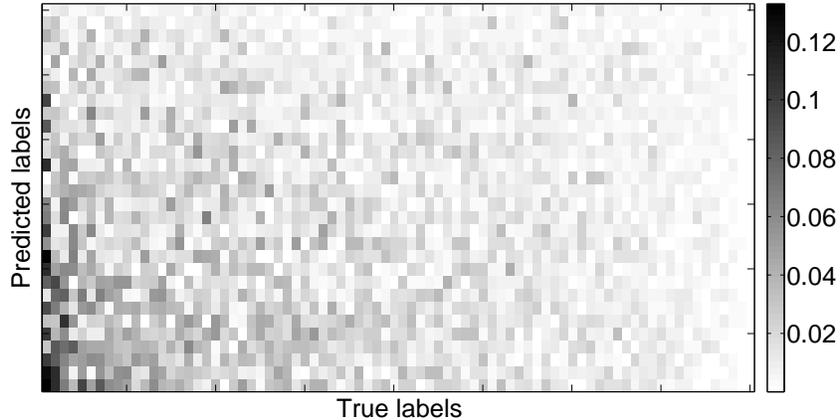}
	\end{minipage} 
	\caption{Heatmap of correlations, $Y_k \Pi_n Y_{true}$ with $k=30$, with columns and rows ordered according to the sum of squared correlations.}
	\label{heatmap_absbyn_orderedrowcolfig}
\end{figure}

\section{Conclusion}

In this paper, we provided a sparse extension of the discriminative clustering framework, and gave a first analysis of its theoretical performance in the totally unsupervised situation, highlighting provable scalings between ambient dimension $d$, number of observations and ``clusterability'' of irrelevant variables.
We also proposed an efficient algorithm which is the first of its kind to be linear in the number of observations. Our work could be extended in a number of ways, e.g., 
extending the sparse analysis to $l$-sparse case with higher $l$, 
considering related weakly supervised learning extensions \citep{joulin2012convex}, going beyond uniqueness of rank-one solutions, and improving the complexity of our algorithm to~$O(nd)$, for example using stochastic gradient techniques.

\newpage

\bibliography{diffrac2015}

\newpage
\appendix

\section{Joint clustering and dimension reduction}

\label{app:kmean}

Given $y$, we need to optimize the Rayleigh quotient $\frac{ w^\top X^\top y y^\top X w}{ w^\top X^\top X w}$ with a rank-one matrix in the numerator, which leads to $w=(X^\top X)^{-1} X^\top y$. 
Given $w$, we will show  that the averaged distortion measure of $K$-means once the means have been optimized is exactly equal to $ { (y^\top \Pi_n  X w)^2}/{\| \Pi_n  y\|_2^2 }$. Given the data matrix $X \in \rb^{n \times d}$, $K$-means to cluster the data into two components 
will tend to approximate the data points in $X$ by the centroids $\cplus \in \rb^{d}$ and $\cminus \in \rb^{d}$ such that 
\BEAS
 X &\approx& \frac{(y+1_n)}{2} \cplustrans - \frac{(y-1_n)}{2}\cminustrans \ (\text {since $y\in\{-1,1\}^n$})    \\
   &=& \frac{y}{2}(\cplustrans - \cminustrans) + \frac{1}{2} 1_n (\cplustrans + \cminustrans).   
\EEAS
The objective of $K$-means can now be written as problem $\calKM$:
\BEAS
 && \min_{y, \cplus, \cminus} \bigg \| X - \frac{y}{2}(\cplustrans - \cminustrans) - \frac{1}{2} 1_n (\cplustrans + \cminustrans) \bigg \|_F^2   \\ 
 &&= \min_{y, \cplus, \cminus} \bigg \| X - \frac{(y+1_n)}{2} \cplustrans - \frac{(1_n-y)}{2}\cminustrans \bigg \|_F^2 \\ 
 &&= \min_{y, \cplus, \cminus} \| X \|_F^2  + \| \cplustrans \|_F^2 \bigg \|\frac{(y+1_n)}{2} \bigg \|^2 + \|\cminustrans\|_F^2 \bigg \|\frac{(1-y_n)}{2} \bigg \|^2 + 2\cminustrans \cplus \frac{(y+1_n)}{2}^\top \frac{(1_n-y)}{2}  \\
   &&  \quad \quad \quad - 2 \tr X^\top \bigg (\frac{(y+1_n)}{2} \cplustrans + \frac{(1_n-y)}{2}\cminustrans \bigg )  \\
 &&= \min_{y, \cplus, \cminus} \| X \|_F^2  + \| \cplustrans \|_F^2 \half (n + 1_n^\top y) + \|\cminustrans\|_F^2 \half (n - 1_n^\top y)  
      - 2 \cplustrans X^\top \bigg (\frac{y+1_n}{2} \bigg ) \\
     &&  \quad \quad \quad - 2 \cminustrans X^{\top}  \bigg (\frac{1_n-y}{2} \bigg ).  \nonumber   
\EEAS
Fixing $y$ and minimizing with respect to $\cplus$ and $\cminus$, we get closed-form expressions for $\cplus$ and $\cminus$ as 
\BEAS
 \cplus = \frac{X^\top (y+1_n)}{(n+1_n^\top y)} \quad \text{and} \quad \cminus = \frac{X^\top (1_n-y)}{(n-1_n^\top y)}. \nonumber
\EEAS
Substituting these expressions in $\calKM$, we have the following optimization problem in $y$:
\BEAS
&&\min_{y} \| X \|_F^2  - \half \frac{\| X^\top (y+1_n)\|_F^2}{(n + 1_n^\top y)} - \half \frac{\| X^\top (1_n-y)\|_F^2}{(n - 1_n^\top y)}  \\ 
&&=\min_{y} \| X \|_F^2 - \half \frac{ \tr XX^\top (y+1_n) (y+1_n)^\top}{(n + 1_n^\top y)} - \half \frac{ \tr XX^\top (1_n-y) (1_n-y)^\top}{(n - 1_n^\top y)}   \\
&&=\min_{y} \| X \|_F^2 - \frac{2}{(n + 1_n^\top y)} \tr XX^\top \bigg(\frac{y+1_n}{2}\bigg) \bigg(\frac{y+1_n}{2}\bigg)^\top \\
 &&  \quad \quad \quad -  \frac{2}{(n - 1_n^\top y)} \tr XX^\top\bigg(\frac{1_n-y}{2}\bigg)\bigg(\frac{1_n-y}{2}\bigg)^\top   \\
&&=\min_{y} \ \tr XX^\top - \frac{2}{(n + 1_n^\top y)} \tr XX^\top \bigg(\frac{y+1_n}{2}\bigg) \bigg(\frac{y+1_n}{2}\bigg)^\top\\
 &&  \quad \quad \quad  -  \frac{2}{(n - 1_n^\top y)} \tr XX^\top\bigg(\frac{1_n-y}{2}\bigg)\bigg(\frac{1_n-y}{2}\bigg)^\top   \\
&&=\min_{y} \ \tr XX^\top \bigg(I - \frac{1}{2(n + 1_n^\top y)} (yy^\top + 1_n1_n^\top + y1_n^\top + 1_ny^\top)\\
 &&  \quad \quad \quad  -  \frac{1}{2(n - 1_n^\top y)}(1_n1_n^\top + yy^\top - 1_ny^\top - y1_n^\top)\bigg). \nonumber 
\EEAS
By the centering of $X$, we have $1_n^\top X=0$ and hence $\tr XX^\top 1_n1_n^\top = \tr XX^\top 1_ny^\top = \tr XX^\top y1_n^\top = 0$. 
Therefore, we obtain 
\BEAS
&&\min_{y} \ \tr XX^\top \bigg(I - \frac{1}{2(n + 1_n^\top y)} (yy^\top) -  \frac{1}{2(n - 1_n^\top y)}(yy^\top)\bigg) \\ 
&& = \min_{y}\ \tr XX^\top \bigg(I - (yy^\top) \bigg(\frac{1}{2(n + 1_n^\top y)} +  \frac{1}{2(n - 1_n^\top y)}\bigg)\bigg) \\ 
&& = \min_{y}\ \tr XX^\top \bigg(I - (yy^\top) \bigg(\frac{n}{n^2 - (1_n^\top y)^2)} \bigg)\bigg)  \\
&& = \min_{y} \ \tr XX^\top \bigg(I - \frac{n yy^\top}{n^2 - (1_n^\top y)^2}\bigg). \nonumber  
\EEAS
Thus we have the equivalent $K$-means problem as 
$$
\min_{y \in \{-1,1\}^n} \frac{1}{n}\tr Xww^\top X^\top \Big( \idm - \frac{n}{n^2 - (y^\top 1)^2} yy^\top \Big)
= 1  - \max_{y \in \{-1,1\}^n}  \frac{  (w^\top X^\top y )^2}{n^2 - (y^\top 1)^2}.
$$
Thus the averaged distortion measure of $K$-means with the optimized means is $\frac{ (y^\top \Pi_n  X w)^2}{\| \Pi_n  y\|_2^2 }$. 

\section{Full (unsuccessful) relaxation}

\label{app:unsucces}
It is tempting to find a direct relaxation of \eq{maxcorr}. It turns out to lead to a trivial relaxation, which we outline in this section. 
When optimizing \eq{maxcorr} with respect to $w$, we obtain
$\displaystyle \max_{y \in \{-1,1\}^n}\textstyle  \frac{ y^\top X (X^\top X )^{-1} X^\top y}{ y^\top \Pi_n  y},
$
 leading to a quasi-convex relaxation as
 $ \displaystyle
\max_{\substack{Y \succcurlyeq 0, \\ \diag(Y) = 1}} \textstyle \frac{\tr Y  X (X^\top X )^{-1} X^\top }{ \tr \Pi_n  Y}. $
 Unfortunately, this relaxation always leads to trivial solutions as described below.

Consider the quasi-convex relaxation
\begin{equation}\label{eq:relaxqc}
\underset{Y\succcurlyeq0, \diag(Y)=1}{\max}\ \frac{\tr YX(X\t X)^{-1}X\t}{\tr \Pi_n Y}.
\end{equation}
By definition of $\Pi_n$ this relaxation is equal to:
$$
\underset{Y\succcurlyeq0, \diag(Y)=1}{\max}\ \frac{1}{n}\frac{\tr YX(X\t X)^{-1}X\t}{1-\frac{1_n\t Y1_n}{n^2}}.
$$
Let $\mathcal{A}=\{Y\succcurlyeq0, \ \diag(Y)=1\}$ the feasible set of this problem and define $\mathcal{B}=\{M\succcurlyeq0, \ \diag(M)=1+\frac{1_n\t M 1_n}{n^2}\}$. 
Let $Y\in\mathcal{A}$, then $M$ defined by $M=\frac{Y}{1-\frac{1_n\t Y1_n}{n^2}}$ belongs to $\mathcal{B}$ 
since $1+\frac{1_n\t M 1_n}{n^2} = 1+\frac{1_n\t Y 1_n}{n^2-1_n\t Y 1_n} = \frac{1}{1-\frac{1_n\t Y 1_n}{n^2}} = \diag(M)$.
Reciprocally for $M\in\mathcal{B}$, we can define $Y=\frac{M}{1+\frac{1_n\t M 1_n}{n^2}}$, such that $\diag(Y)=1$ and $Y\in \mathcal{A}$ and then verify that $M=\frac{Y}{1-\frac{1_n\t Y1_n}{n^2}}$.  
Thus the problem \eq{relaxqc} is equivalent to the relaxation
\begin{equation}\label{eq:relaxM}
\underset{M\succcurlyeq0, \diag(M)=1+\frac{1_n\t M 1_n}{n^2}}{\max}\ \frac{1}{n}\tr MX(X\t X)^{-1}X\t.
\end{equation}
The Lagrangian function of this problem can be written as:
\BEAS
L(\mu)&=&  \tr MX(X\t X)^{-1}X\t -\frac{\mu}{n}\t[\diag(M)-1_n-\frac{1_n\t M 1_n}{n^2}1_n]\\
&=& \tr M[X(X\t X)^{-1}X\t-\Diag(\mu)+\frac{1_n\t \mu}{n^2}1_n 1_n\t]+\frac{1}{n}\mu\t 1_n.
\EEAS
Using $L(\mu)$ and the PSD constraint $M\succcurlyeq0$, the dual problem is given by
$$
\underset{\mu}{\min}\ \frac{\mu\t 1_n}{n} \ \ \text{ s.t. } \Diag(\mu)-\frac{1_n\t \mu }{n^2}1_n1_n\t \succcurlyeq X(X\t X)^{-1}X\t.
$$
Since $X(X\t X)^{-1}X\t\succcurlyeq0$, this implies for the dual variable $\mu$:
\BEAS
\Diag(\mu)-\frac{1_n\t \mu}{n^2}1_n1_n\t\succcurlyeq0 &\Leftrightarrow& 1_n\t \Diag(\mu)^{-1}1_n \leq \frac{n^2}{\mu\t1_n}\\
&\Leftrightarrow& \sum_{i=1}^n \frac{1}{\mu_i}\leq \frac{n^2}{\sum_{i=1}^n \mu_i}\\
&\Leftrightarrow& \frac{1}{n}\sum_{i=1}^n\frac{1}{\mu_i}\leq \frac{1}{\frac{1}{n}\sum_{i=1}^n \mu_i}.
\EEAS
However for $\nu\in\RR^n$ ,  the harmonic mean $\big[\frac{1}{n}\sum_{i=1}^n \frac{1}{\nu_i}\big]^{-1}$ is always smaller than the arithmetic mean $\frac{1}{n}\sum_{i=1}^n \nu_i$ with equality if and only if $\nu=c 1_n$ for $c\in\RR$. 

Thus the dual variable $\mu$ is constant and the diagonal constraint simplifies itself as a trace constraint. 
Therefore the problem is equivalent to the trivial relaxation whose each eigenvector of $X(X\t X)^{-1}X\t$ is solution
$$
\underset{M\succcurlyeq0, \ \tr(M)=n+\frac{1_n\t M1_n}{n}}{\max } \ \tr MX(X\t X)^{-1}X\t.
$$

\section{Equivalent relaxation}
\subsection{First equivalent relaxation}\label{sec:relaxeq1}
We start from the penalized version of \eq{diffracyv}, 
\BEQ
\label{eq:diffracyvpen2}
\min_{ y \in \{-1,1\}^n, \ v \in \rb^d}   \frac{1}{n}  \| \Pi_n  y - X v \|_2^2+ \nu \frac{(y^\top 1_n)^2}{n^2},
\EEQ
which we expand as:
\BEQ
\label{eq:noncvx2}
\min_{ y \in \{-1,1\}^n, \ v \in \rb^d}    \frac{1}{n}   \tr \Pi_n  yy^\top 
-    \frac{2}{n} \tr X v y^\top +    \frac{1}{n} \tr X^\top X v v^\top
+ \nu \frac{(y^\top 1_n)^2}{n^2},
\EEQ
and relax as, using $Y=y y\t $, $P=y v\t$ and $V=v v \t$,
\BEQ
\label{eq:diffracfull2}
\min_{ V, P, Y }    \frac{1}{n} \tr \Pi_n  Y -    \frac{2}{n} \tr P^\top X +    \frac{1}{n} \tr X^\top X V + \nu \frac{1_n^\top Y 1_n }{n^2}
\mbox{ s.t.} \ \bigg( \!\! \begin{array}{cc} Y \!\! &\!\! P \\ P^\top\!\! &\!\!  V \end{array} \!\!\bigg) \succcurlyeq 0, \ 
\diag(Y)=1.
\EEQ
When optimizing \eq{diffracfull2} with respect to $V$ and $P$, we get exactly \eq{diffracnu}. Indeed we solve this problem by fixing the matrix $Y$ such that $Y=Y_0$ and $\diag(Y_0)=1_n$. Then the Lagrangian function of the problem in \eq{diffracfull2} can be written as 
\BEAS
L(A)&=& \frac{1}{n} \tr \Pi_n  Y -    \frac{2}{n} \tr P^\top X +    \frac{1}{n} \tr X^\top X V + \nu \frac{1_n^\top Y 1_n }{n^2}+\tr A(Y-Y_0) \\
&=&  \begin{pmatrix}
                 Y & P\\
                 P\t & V
                \end{pmatrix}
                \begin{pmatrix}
                 \frac{1}{n}\Pi_n +\frac{\nu}{n^2}1_n1_n\t+A & \frac{-1}{n}X\\
                 \frac{-1}{n}X\t & \frac{1}{n}X\t X
                \end{pmatrix} -\tr A Y_0.              
\EEAS
Using $L(A)$ and the psd constraint $\ \bigg( \!\! \begin{array}{cc} Y \!\! &\!\! P \\ P^\top\!\! &\!\!  V \end{array} \!\!\bigg) \succcurlyeq 0$, we write the dual problem as 
$$
\min_A \tr AY_0
\mbox{ s.t.} \begin{pmatrix}
                 \frac{1}{n}\Pi_n +\frac{\nu}{n^2}1_n1_n\t+A & \frac{-1}{n}X\\
                 \frac{-1}{n}X\t & \frac{1}{n}X\t X
                \end{pmatrix} \succcurlyeq 0.
$$
From the Schur's complement condition of $\begin{pmatrix}
                 \frac{1}{n}\Pi_n +\frac{\nu}{n^2}1_n1_n\t+A & \frac{-1}{n}X\\
                 \frac{-1}{n}X\t & \frac{1}{n}X\t X
                \end{pmatrix} \succcurlyeq 0$, we obtain $\frac{1}{n}\Pi_n +\frac{\nu}{n^2}1_n1_n\t+A \succcurlyeq \frac{1}{n}X(X\t X)^{-1}X\t$. Substituting the bound for $A$ we get the optimal objective function value
$$
\mathcal{D}^*=\frac{1}{n}\tr X(X\t X)^{-1} X\t Y_0-\frac{1}{n}\tr \Pi_n Y_0 -\frac{\nu}{n^2}1_n\t Y_0 1_n.
$$
Note that the optimal dual objective value $\mathcal{D}^*$ corresponds to a fixed $Y_0$. Hence by maximizing with respect to $Y$ we obtain exactly \eq{diffracnu} and therefore, the convex relaxation in \eq{diffracfull} is equivalent to \eq{diffracnu}. Moreover the Karush-Kuhn-Tucker (KKT) conditions gives
$$
P\t -X+V X\t X=0 \mbox{ and } -YX+PX\t X=0
$$
Thus the optimum is attained for 
$P =  Y X  (X^\top X)^{-1}$ and $ V =   (X^\top X)^{-1} X^\top Y X  (X^\top X)^{-1} $.

\subsection{Second equivalent relaxation}\label{sec:relaxeq2}

For $\nu = 1$,
 we solve the problem in \eq{diffracfull2} by fixing the matrix $V=V_0$. Then the Lagrangian function of this problem can be written as
\BEAS
\hat L(\mu,B)&=& \frac{1}{n} \tr \Pi_n  Y -    \frac{2}{n} \tr P^\top X +    \frac{1}{n} \tr X^\top X V + \nu \frac{1_n^\top Y 1_n }{n^2}+\mu \t(\diag(Y)-1_n) +\tr B(V-V_0) \\
&=&  \begin{pmatrix}
                 Y & P\\
                 P\t & V
                \end{pmatrix}
                \begin{pmatrix}
                 \frac{1}{n}I_n+\diag(\mu) & \frac{-1}{n}X\\
                 \frac{-1}{n}X\t & \frac{1}{n}X\t X +B
                \end{pmatrix} -\mu \t 1_n-\tr B V_0.              
\EEAS
Using $\hat L(\mu,B)$ and the psd constraint  $\ \bigg( \!\! \begin{array}{cc} Y \!\! &\!\! P \\ P^\top\!\! &\!\!  V \end{array} \!\!\bigg) \succcurlyeq 0$, the dual problem is given by
$$
\min_{\mu,B} \mu \t 1_n+\tr BV_0
\mbox{ s.t.} \begin{pmatrix}
                 \frac{1}{n}I_n+\diag(\mu) & \frac{-1}{n}X\\
                 \frac{-1}{n}X\t & \frac{1}{n}X\t X +B
                \end{pmatrix} \succcurlyeq 0.
$$
From the Schur's complement condition of $\begin{pmatrix}
                 \frac{1}{n}I_n+\diag(\mu)& \frac{-1}{n}X\\
                 \frac{-1}{n}X\t & \frac{1}{n}X\t X +B
                \end{pmatrix} \succcurlyeq 0$, we obtain $B\succcurlyeq \frac{1}{n^2}X\t\diag(\mu+1_n/n)^{-1}X-\frac{1}{n}X\t X$. Substituting the bound for $B$ we get the dual problem as
\BEAS
& & \min_{\mu} \mu \t 1_n+\frac{1}{n^2}\tr V_0 X\t\diag(\mu+1_n/n)^{-1} X-\frac{1}{n}\tr V_0X\t X\\
& &\min_{\mu} \sum_{i=1}^n \Bigg( \mu_i +\frac{1}{n^2\mu_i+n} x_i\t V_0x_i\Bigg)-\frac{1}{n}\tr V_0 X\t X.
\EEAS 
Solving for $\mu_i$, we get 
$$
\mu_i^*=\frac{1}{n}\sqrt{x_i\t V_0 x_i} -\frac{1}{n}.
$$
Substituting $\mu_i^*$ into the dual obkective function, we get the optimal objective function value
$$
\hat D=\frac{2}{n}\sum_{i=1}^n \sqrt{ ( XVX^\top)_{ii} }-1-\frac{1}{n}\tr V_0 X\t X.
$$
Furthermore the KKT conditions gives
$$
Y\diag(\nu+1_{n}/n) -\frac{1}{n}PX\t=0 \mbox{ and } P\t \diag(\nu+1_{n}/n) -\frac{1}{n}V X\t=0.
$$
Thus we obtain the following closed form expressions:  
\BEAS
P & = & \Diag(\diag(X V X ^\top))^{-1/2} X V \\
Y & = & \Diag(\diag(X V X ^\top))^{-1/2} X V X ^\top \Diag(\diag(X V X ^\top))^{-1/2}.
\EEAS
The optimal dual objective value $\hat D$ corresponds to a fixed $V_0$. Therefore, maximizing with respect to $V$ leads to the problem: 
\BEQ
\min_{V \succcurlyeq 0} \ \ 
1 -    \frac{2}{n} \sum_{i=1}^n \sqrt{ ( XVX^\top)_{ii} }  +    \frac{1}{n} \tr ( V X^\top X).
\EEQ
\section{Auxilliary results for \mysec{theoanalnonsparse}}
\label{app:nnsparse}
\subsection{Auxilliary lemma}

The matrix $X(X\t X)^{-1} X\t$  has the following properties (see e.g. \citep{freedman}).
\begin{lemma}\label{lemma:hatmatrix}
The matrix $H=X(X\t X)^{-1} X\t$ is the orthogonal projection onto the column space of the design matrix X since:
\begin{itemize}
 \item $H$ is symmetric.
 \item $H$ is idempotent $(H^2)=H$.
 \item $X$ is invariant under $H$, that is $HX=X$.
\end{itemize}
\end{lemma}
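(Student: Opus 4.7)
The statement is a classical identity about the ``hat matrix'' $H = X(X^\top X)^{-1}X^\top$ from linear regression, so my plan is to verify each of the three bullet points by direct algebraic manipulation, relying only on the assumption (stated near Proposition~\ref{sec:maxcorr}) that $X$ has rank $d$, which guarantees that $X^\top X \in \rb^{d \times d}$ is invertible.

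First I would verify symmetry. Taking transposes and using the fact that $X^\top X$ is symmetric (hence so is its inverse), I would compute
\[
H^\top = \bigl(X(X^\top X)^{-1}X^\top\bigr)^\top = X \bigl((X^\top X)^{-1}\bigr)^\top X^\top = X(X^\top X)^{-1}X^\top = H.
\]
Next I would verify idempotence by expanding $H^2$ and cancelling the middle factor:
\[
H^2 = X(X^\top X)^{-1}\bigl(X^\top X\bigr)(X^\top X)^{-1}X^\top = X(X^\top X)^{-1}X^\top = H.
\]
Finally, invariance of the columns of $X$ would follow from a one-line cancellation,
\[
HX = X(X^\top X)^{-1}(X^\top X) = X.
\]

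Together these three properties are the standard characterization of the orthogonal projector onto the column space of $X$: symmetry plus idempotence identify $H$ as an orthogonal projection, and the identity $HX=X$ pins down its range as exactly $\mathrm{Range}(X)$, since any vector $Xw$ is fixed by $H$ and conversely the range of $H$ lies in $\mathrm{Range}(X)$ by construction. I do not anticipate any real obstacle here, beyond being explicit about invoking the rank-$d$ assumption to justify the existence of $(X^\top X)^{-1}$; the entire argument is a few lines of algebra with no hidden difficulty.
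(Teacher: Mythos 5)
Your proof is correct and complete: the three algebraic verifications (symmetry via the symmetry of $(X^\top X)^{-1}$, idempotence and $HX=X$ via cancellation of the middle factor $X^\top X$) are exactly the standard argument, and you correctly identify the rank-$d$ assumption as what makes $(X^\top X)^{-1}$ well defined. The paper itself gives no proof at all for this lemma --- it simply cites a textbook reference --- so your write-up supplies precisely the routine computation the authors chose to omit, and there is nothing to add or correct.
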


\subsection{Rank-one solution of the relaxation \eq{diffracnu} }\label{sec:roy}

We denote by $(x_i)_{i=1...n}$ the lines of $X$.
\begin{lemma}\label{lemma:ysol}
The rank-one solution $Y_*=yy\t$ is always solution of the relaxation \eq{diffracnu}.
\end{lemma}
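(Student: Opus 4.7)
The plan is to prove that $Y_* = yy^\top$ minimizes the objective by showing (i) the objective is bounded below by $0$ on the feasible set, and (ii) $Y_*$ is feasible and attains this lower bound.

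First I would verify feasibility. Since $y \in \{-1,1\}^n$, we have $\diag(Y_*) = \diag(yy^\top) = (y_i^2)_i = 1_n$, and clearly $Y_* \succcurlyeq 0$, so $Y_*$ lies in the feasible set of \eq{diffracnu}.

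Next I would establish the lower bound. The key observation is that $X$ is assumed centered, i.e., $X^\top 1_n = 0$, so the column space of $X$ is contained in the range of $\Pi_n$. Consequently, the orthogonal projector $H := X(X^\top X)^{-1} X^\top$ (cf.\ Lemma~\ref{lemma:hatmatrix}) satisfies $H \Pi_n = \Pi_n H = H$, so $H \preccurlyeq \Pi_n$ and therefore $\Pi_n - H \succcurlyeq 0$. For any feasible $Y \succcurlyeq 0$, this yields $\tr Y(\Pi_n - H) \geq 0$, and trivially $1_n^\top Y 1_n \geq 0$, so the whole objective in \eq{diffracnu} is non-negative (for any $\nu \geq 0$).

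Finally I would evaluate the objective at $Y_*$. In the setting of \mysec{theoanalnonsparse}, the design matrix has the form $X = [y, Z]$, so $y$ belongs to the column space of $X$, and hence $H y = y$ by the idempotency/invariance property of Lemma~\ref{lemma:hatmatrix}. In the balanced regime $y^\top 1_n = 0$, so also $\Pi_n y = y$. Thus $(\Pi_n - H)y = 0$, giving $\tr Y_*(\Pi_n - H) = y^\top(\Pi_n - H) y = 0$, while $1_n^\top Y_* 1_n = (1_n^\top y)^2 = 0$. The objective at $Y_*$ is therefore $0$, which matches the lower bound, so $Y_*$ is a minimizer.

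There is no real obstacle here; the only subtlety is making the containment $\mathrm{col}(X) \subseteq \mathrm{range}(\Pi_n)$ explicit (it uses the centering assumption $X^\top 1_n = 0$) in order to deduce $\Pi_n - H \succcurlyeq 0$, and noting that this argument works uniformly in $Z$, which is exactly the ``always'' in the statement.
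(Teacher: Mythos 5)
Your proof is correct, and it takes a mildly different (though closely related) route from the paper's. The paper's argument is purely trace-based: for $\nu=1$ and $\diag(Y)=1_n$ the objective of \eq{diffracnu} reduces to $1-\frac{1}{n}\tr HY$ (the $\Pi_n$ term and the penalty combine into the constant $\frac{1}{n}\tr Y=1$, a cancellation the paper leaves implicit), and it then shows $\tr HM\leq \tr M=n$ for every feasible $M$ via an eigendecomposition of $M$ and the fact that $H$ is an orthogonal projector, with equality at $Y_*$ because $Hy=y$. You instead split the objective as $\frac{1}{n}\tr Y(\Pi_n-H)+\frac{\nu}{n^2}\,1_n^\top Y 1_n$, show each term is non-negative --- the first because the centering $X^\top 1_n=0$ gives $H\Pi_n=\Pi_n H=H$ and hence that $\Pi_n-H$ is itself an orthogonal projector, so $\Pi_n-H\succcurlyeq 0$ --- and check that both terms vanish at $Y_*$ using $Hy=y$ and $1_n^\top y=0$. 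Both proofs rest on the same core facts, but yours buys two things: it works verbatim for every $\nu\geq 0$ rather than being tied to the $\nu=1$ cancellation, and it makes explicit exactly where the centering assumption enters; the paper's version is marginally shorter because it never needs the comparison $H\preccurlyeq\Pi_n$, only $\tr HM\leq\tr M$.
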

\begin{proof}
We give an elementary proof of this result without using convex optimization tools. 
Using lemma \ref{lemma:hatmatrix} we have $Hy=y$, thus
$$
\tr HY_*= \tr Hyy^{\top} = \tr y y\t =n.
$$
Moreover all $M\succcurlyeq0$ can always be decomposed as $\sum_{i=1}^n \lambda_i u_i u_i^{\top}$ with $\lambda_i\geq0$ and $(u_i)_{i=1,...,n}$ an orthonormal familly. Since $H$ is an orthogonal projection $(u_i)^{\top}H u_i =(Hu_i)^{\top}Hu_i=\Vert Hu_i\Vert^2\leq \Vert u_i\Vert^2\leq 1$.
Thus $\tr HM=\sum_{i=1}^n \lambda_i \tr H u_i (u_i)^{\top}=\sum_{i=1}^n \lambda_i (u_i)^{\top} H u_i\leq \sum_{i=1}^n \lambda_i = \tr M $.

Then for all matrix $M$ feasible we have   $\tr H M \leq n$  since $\diag(M)=1_{n}$ and $\tr H Y_*=n$ which conclude the lemma. 
\end{proof}

\subsection{Rank-one solution of the relaxation \eq{relaxW2} }\label{sec:row}
\begin{lemma}\label{lemma:wsol}
 The rank-one solution $V_*=vv\t$ is always solution of the relaxation \eq{relaxW2}.
\end{lemma}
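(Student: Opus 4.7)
My plan is to verify directly that $V_* = vv^\top$ attains the value $0$, and then to show via a one-line inequality that the objective of \eq{relaxW2} is bounded below by $0$ on the whole feasible set.

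First, since $v \propto (X^\top X)^{-1}X^\top y$, let me normalize so that $v = (X^\top X)^{-1}X^\top y$. Because the design matrix is $X=[y,Z]$ in the balanced case, the vector $y$ lies in the column span of $X$, so Lemma~\ref{lemma:hatmatrix} gives $Xv = X(X^\top X)^{-1}X^\top y = Hy = y$. This is the key algebraic identity I will use.

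Substituting $V_* = vv^\top$ into the objective of \eq{relaxW2}, I would compute: $XV_*X^\top = (Xv)(Xv)^\top = yy^\top$, so $(XV_*X^\top)_{ii} = y_i^2 = 1$ and hence $\sum_{i=1}^n \sqrt{(XV_*X^\top)_{ii}} = n$. Likewise $\tr(V_* X^\top X) = v^\top X^\top X v = y^\top y = n$. The objective therefore takes the value $1 - 2 + 1 = 0$ at $V_*$.

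It remains to show this is a minimum. The natural tool is the elementary inequality $2\sqrt{a} \leqslant 1 + a$ for $a \geqslant 0$ (equivalent to $(\sqrt{a}-1)^2 \geqslant 0$). Applying it with $a = (XVX^\top)_{ii} \geqslant 0$ (nonnegativity comes from $V \succcurlyeq 0$) and summing over $i$ yields
\[
\frac{2}{n}\sum_{i=1}^n \sqrt{(XVX^\top)_{ii}} \;\leqslant\; 1 + \frac{1}{n}\tr(XVX^\top) \;=\; 1 + \frac{1}{n}\tr(VX^\top X),
\]
i.e., the objective in \eq{relaxW2} is nonnegative for every $V \succcurlyeq 0$. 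Combined with the previous paragraph, $V_*$ attains the minimum value $0$ and hence is a solution.

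There is no serious obstacle here: the argument rests on a single AM-GM style inequality, and the only point to be careful about is the choice of proportionality constant for $v$, since a rescaling $v \mapsto cv$ produces the parabola $1 - 2|c| + c^2$ in the objective, which is minimized precisely at $c=1$ (this explains why the ``$\propto$'' in the statement must be taken as an equality, once the sign of $y$ is fixed). Note that equality in AM-GM requires $(XVX^\top)_{ii} = 1$ for all $i$, so the minimizer need not be unique; uniqueness is exactly what Proposition~\ref{prop:unicitew} subsequently addresses.
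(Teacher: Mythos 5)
Your proof is correct, but it takes a genuinely different route from the paper's. The paper verifies the first-order (KKT) optimality conditions of the convex problem \eq{relaxW2}: it computes the gradient at $V_*$, observes that $\sqrt{x_i^\top V_* x_i}=|y_i|=1$ makes the two terms of the gradient cancel, and checks that the dual variable $A=0$ together with complementary slackness $AV_*=0$ certifies optimality. You instead give a global, duality-free argument: since $\tr(VX^\top X)=\sum_{i=1}^n (XVX^\top)_{ii}$, the objective of \eq{relaxW2} rewrites as $\frac{1}{n}\sum_{i=1}^n\bigl(\sqrt{(XVX^\top)_{ii}}-1\bigr)^2\geq 0$ on the feasible set (your AM--GM inequality is exactly this completion of the square), and it vanishes at $V_*$ because $Xv=Hy=y$ has unit entries. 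Both arguments are sound and use the same key identity $Xv=y$; what yours buys is (i) independence from convexity and constraint qualification, and (ii) an immediate characterization of the \emph{entire} solution set as $\{V\succcurlyeq 0: (XVX^\top)_{ii}=1 \ \forall i\}$, which makes transparent why uniqueness is the real issue and is what Proposition~\ref{prop:unicitew} must address. Your remark on the normalization of $v$ (the objective along $cv$ is $(|c|-1)^2$, forcing $|c|=1$) is a worthwhile clarification of the ``$\propto$'' in the surrounding text that the paper glosses over.
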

\begin{proof}
 The Karush-Kuhn-Tucker (KKT) optimality conditions for the problem are for the dual variable $A\preccurlyeq 0$:
 $$
\frac{1}{n}\sum_{i=1}^n \frac{x_i x_i^{\top}}{\sqrt{x_i^{\top}Vx_i}}-\frac{1}{n}XX^{\top}= A \text{ and } AV=0 \ \text{ (Complementary Slackness)}.
$$
Since $x_i\t w=y_i$, $\sqrt{x_i^{\top}V_*x_i}=\vert y_i\vert=1$, $V_*$ and  the dual variable  $A=0$ satisfy the KKT conditions and then $V_*$ is solution of this problem.
\end{proof}

\subsection{Proof of Proposition \ref{prop:unicitew}}

In  the following lemma, we use a Taylor expansion to lower-bound $f$ around its minimum. 
\begin{lemma}\label{lem:unicitew}
For $d\geq3$ and   $\delta\in[0,1)$. 

If $\beta \geq 3$ and  $m^2\leq \frac{\beta-3}{2(d+\beta-4)}$, then  with probability  at least
$ 1-d \exp\big(-\frac{ \delta^2n m^2 }{ 2 R^4d^2}\big)$, for any symmetric matrix $\Delta$:
$$
 f(V_*)-f(V_*+\Delta)>  2(1-\delta)m^2  \Vert \Delta \Vert_F^2+o(\Vert \Delta \Vert^2)\geq0.
$$

Otherwise  with probability  at least
$ 1-d \exp\big(-\frac{ \delta^2n \mu_1}{ 4R^4d^2}\big)$, for any symmetric matrix $\Delta$:
$$
 f(V_*)-f(V_*+\Delta)>  (1-\delta) \mu_1 \Vert \Delta \Vert_F^2+o(\Vert \Delta \Vert^2)\geq0,
$$
with $\mu_1\geq \frac {m^2(\beta-1)}{1+(d+\beta-2)m^{2}}$.
Moreover we also have with probability at least $ 1-d \exp\big(-\frac{ \delta^2n \mu_2}{ 4R^4d^2} \big)$, for any symmetric matrix $\Delta\in \Delta_{\min}^\perp$: 
$$
 f(V_*)-f(V_*+\Delta)>  (1-\delta)\mu_2\Vert \Delta \Vert_F^2+o(\Vert \Delta \Vert^2)\geq0,
$$
where $\mu_2= \min\{2m^2,m^2(\beta-1),2m \}$ and  $\Delta_{\min}=\begin{pmatrix}
                                               1&0\\0&c_{\min}I_{d-1}
                                              \end{pmatrix}$ is defined in the proof and satisfies
                                               \[
                                            \vert c_{\min}\vert \leq  \frac{m}{\vert (d+\beta-2)m^{2}-1\vert }.
                                              \]

\end{lemma}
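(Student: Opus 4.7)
The strategy is to combine a second-order Taylor expansion of $f$ at $V_*$ with a concentration analysis of the random quadratic form that appears at leading order. Since $\nabla f(V_*)=0$ (already established in the KKT analysis of Lemma~\ref{lemma:wsol}) and $x_i^\top V_* x_i = y_i^2 = 1$, expanding $\sqrt{1+x_i^\top\Delta x_i} = 1 + \tfrac12 x_i^\top\Delta x_i - \tfrac18 (x_i^\top\Delta x_i)^2 + O(\|\Delta\|^3)$ gives
\[
f(V_*) - f(V_* + \Delta) \;=\; \frac{1}{4n}\sum_{i=1}^n (x_i^\top \Delta x_i)^2 + o(\|\Delta\|^2),
\]
because the first-order terms coming from the square root and from $\tfrac{1}{n}\tr(X^\top X \Delta)$ cancel exactly. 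The lemma is then reduced to lower-bounding the quadratic form $Q(\Delta) := \frac{1}{n}\sum_i (x_i^\top\Delta x_i)^2$ on symmetric $\Delta$.

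The next step is to compute $\E[Q(\Delta)]$ exactly. Write $x_i = (y_i, z_i^\top)^\top$ and $\Delta = \bigl(\begin{smallmatrix}\delta_{11}&\delta^\top\\ \delta & D\end{smallmatrix}\bigr)$ with $D \in \RR^{(d-1)\times(d-1)}$. Using $y_i^2=1$, the identity $x_i^\top \Delta x_i = \delta_{11} + 2 y_i z_i^\top \delta + z_i^\top D z_i$, and the four cases $(jjkk)$, $(jkjk)$, $(jkkj)$, $(jjjj)$ for the fourth moments of the i.i.d.~entries of $z$, one obtains
\[
\E[Q(\Delta)] \;=\; (\delta_{11} + m\tr D)^2 + 4m\|\delta\|_2^2 + 2m^2 \!\!\sum_{j\neq k}\!\! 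D_{jk}^2 + (\beta-1)m^2 \sum_j D_{jj}^2,
\]
which decomposes over three mutually orthogonal blocks of $\Delta$. The off-diagonal first-row gives eigenvalue $2m$ per unit of $\|\Delta\|_F^2$; the off-diagonal entries of $D$ give $2m^2$; and the ``diagonal'' block consisting of $(\delta_{11}, D_{11},\dots,D_{d-1,d-1})$ is a rank-updated quadratic form $aa^\top + (\beta-1)m^2\,\diag(0,I_{d-1})$ with $a=(1,m,\dots,m)^\top$. This matrix has eigenvalue $(\beta-1)m^2$ with multiplicity $d-2$ on the subspace orthogonal to $a$ and $e_1$; the remaining $2\times 2$ block has characteristic polynomial whose roots satisfy $(\lambda_{\min}-1)(\lambda_{\max}-1) = -(d-1)m^2$, yielding
\[
\lambda_{\min} \;\geq\; \frac{(\beta-1)m^2}{1 + (d+\beta-2)m^2} \;=:\;\mu_1,
\]
with eigenvector of the form $(1,c_{\min},\dots,c_{\min})$, giving $\Delta_{\min} = \diag(1, c_{\min} I_{d-1})$; the relation $c_{\min}\cdot c_{+} = -1/(d-1)$ together with the trivial lower bound on $c_{+}$ yields $|c_{\min}| \leq m/|(d+\beta-2)m^2-1|$. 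Comparing $2m^2$, $(\beta-1)m^2$, and $\mu_1$ shows that the global smallest eigenvalue of $\E[Q]$ equals $2m^2$ precisely when $\beta\geq 3$ and $m^2 \leq (\beta-3)/[2(d+\beta-4)]$, and equals $\mu_1$ otherwise; restricted to $\Delta_{\min}^\perp$, the minimum is $\mu_2 = \min\{2m^2,(\beta-1)m^2,2m\}$.

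To pass from expectation to the empirical quadratic form $Q(\Delta) = \langle \vc(\Delta), \hat M\,\vc(\Delta)\rangle$ with $\hat M = \frac{1}{n}\sum_i \vc(x_ix_i^\top)\vc(x_ix_i^\top)^\top$, I would apply Tropp's matrix Chernoff inequality to the i.i.d.~psd rank-one summands, each of which satisfies $\|\vc(x_ix_i^\top)\vc(x_ix_i^\top)^\top\| = \|x_i\|_2^4 \leq d^2 R^4$ almost surely. The lower tail bound then yields $\lambda_{\min}(\hat M) \geq (1-\delta)\lambda_{\min}(\E \hat M)$ with probability at least $1 - d\exp(-c\,n\delta^2\mu/(R^4 d^2))$ for an appropriate $c$, matching the three probability bounds in the statement once the relevant $\mu \in \{2m^2,\mu_1,\mu_2\}$ is substituted. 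Combining with the Taylor expansion produces the stated lower bound on $f(V_*)-f(V_*+\Delta)$ in each regime.

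The main technical obstacle will be the eigenanalysis of the diagonal block: identifying the minimizing direction $\Delta_{\min}$, proving the bound on $|c_{\min}|$, and verifying which of the three candidate eigenvalues is the smallest in each parameter regime (which is what triggers the case split on $m^2$ vs.~$(\beta-3)/[2(d+\beta-4)]$). The Taylor expansion and the matrix-Chernoff concentration are then largely routine once the correct quadratic form has been diagonalized.
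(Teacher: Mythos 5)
Your plan follows the paper's proof essentially step for step: a second-order expansion of $f$ at $V_*$ reducing the claim to lower-bounding $\frac{1}{n}\sum_i (x_i^\top\Delta x_i)^2$, the same exact computation of $\E(x^\top\Delta x)^2$ with the same block decomposition and the same four families of eigenvalues $\{2m,\,2m^2,\,(\beta-1)m^2,\,\mu_\pm\}$, the same case split on when $\mu_-\geq 2m^2$, and the same matrix Chernoff bound with $\|x_i\|_2^4\leq d^2R^4$. The only (immaterial) deviation is how you bound $|c_{\min}|$ — via the orthogonality relation $c_-c_+=-1/(d-1)$ rather than the paper's explicit formula for $c_{\min}$ plus convexity of the square root — and both yield the stated bound.
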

This lemma directly implies Proposition \ref{prop:unicitew}.

\begin{proof}

For $\Delta\in\mathcal{S}(d)$ and $\delta\in\RR$ we compute for $f(V)=\frac{1}{n}\sum_{i=1}^n\sqrt{x_i^{\top}Vx_i}$,
$$
\frac{d^2}{d\delta^2}f(V+\delta \Delta)=-\frac{1}{4n}\sum_{i=1}^n\frac{(x_i\t\Delta x_i)^2}{\sqrt{x_i^{\top}(V+\delta\Delta)x_i}^3}.
$$
Thus the second directional derivative in $V=V_*$ along $\Delta$ is
$$
\nabla^2_\Delta f(V_*)=\lim_{\delta \to 0}\frac{d^2}{d\delta^2}f(V+\delta \Delta)=-\frac{1}{4n}\sum_{i=1}^n{(x_i\t\Delta x_i)^2}.
$$
Let  $\mathcal T_x$ be the semidefinite positive quadratic form  of $\mathcal S(d)$ defined for $\Delta\in \mathcal S(d) $,  by 
\begin{equation}\label{eq:defT}
\mathcal T_x:\Delta\mapsto (x^\top \Delta x)^2. 
\end{equation}
Then it exists a positive linear operator $T_x$ from $ \mathcal S(d)$ to $ \mathcal S(d)$  such that $\mathcal T_x(\Delta)=\langle \Delta, T_x \Delta\rangle$. 

Therefore the function $f$ will be stricly concave if for all directions $\Delta\in\mathcal{S}(d)$
\begin{equation}\label{eq:sum}
 \frac{1}{n}\sum_{i=1}^n\mathcal T_{x_i}(\Delta)>0.
\end{equation}

We will bound the empirical expectation in \eq{sum} by first showing that its expectation remains away from $0$. Then we will  use a concentration inequality for matrices to control the distance between the sum in \eq{sum} and its expectation.
 
 We first derive conditions so that the result is true in expectation, i.e. for the operator $\mathcal{T}$ defined by  $\mathcal{T}=\E \mathcal{T}_x$ for $x$ following the same law as $(y,z\t)\t$. We denote by $m=\E z^{2}$ and by $\beta=\E z^{4}/{m^2}$ its kurtosis.

We let $\Delta=\begin{pmatrix}
                                               a&b\t\\b&C
                                              \end{pmatrix}$ and then have $x\t\Delta x=a+2y b\t z +z\t C z$. Thus
 $$
  \mathcal{T}_x(\Delta)=a^2+4ay b\t z +2a z\t C z+4b\t(zz\t)b+(z\t C z)^2+4yb\t z(z\t C z).
 $$
 Therefore we can express the value of the operator $\mathcal{T}$ only in function of the elements of $\Delta$:
 $$ \mathcal{T}(\Delta)=(a+m\Tr C)^2+4m \Vert b \Vert_2^2+2m^2\Vert C-\Diag(\diag(C)) \Vert_F^2+m^2(\beta-1)\Vert \diag(C)\Vert^2,
 $$
where we have used
 \BEAS
 \E (z\t C z)^2&=&\E \sum_{i,j,k,l}z_iz_jz_kz_lc_{i,j}c_{k,l}\\
 &=&\E\sum_{i}(z_i)^4c_{i,i}^2+\E \sum_{i,k\neq i}z_i^2z_k^2c_{i,i}c_{k,k}+2\E \sum_{i,j\neq i}z_i^2z_j^2c_{i,j}^2\\
 &=&\beta m^2\sum_{i}c_{i,i}^2+m^2 \sum_{i,k\neq i}c_{i,i}c_{k,k}+2m^2\sum_{i,j\neq i}c_{i,j}^2\\ 
 &=&m^2(\beta-3)\sum_{i}c_{i,i}^2+m^2 \sum_{i,k}c_{i,i}c_{k,k}+2m^2\sum_{i,j}c_{i,j}^2\\
 &=&m^2(\beta-3)\Vert \diag(C)\Vert^2+m^2\big(2\Vert C\Vert_F^2+\tr(C)^2\big)\\
 &=&m^2(\beta-3)\Vert \diag(C)\Vert^2+m^2\big(2\Vert C-\Diag(\diag(C))\Vert_F^2+\tr(C)^2\big).
 \EEAS
 Since $\beta\geq 1$, we get 
 $$
\mathcal{T}(\Delta)\geq (a+m\Tr C)^2+4m \Vert b \Vert_2 ^2+2m^2(\Vert C\Vert_F^2-\Vert \diag(C)\Vert^2).
 $$
 Thus $ \mathcal{T}(\Delta)=0$ if and only if $\beta=1$ with $b=0_{d-1}$ and $C=\diag(c)$ with $c\t 1_d=-\frac{a}{m_2}$. With the condition $\beta=1$ meaning that $\Var(z^2)=0$ and thus $z^2$ is constant a.s., i.e. $z$ follows a Rademacher law. 
 
 However we would like to bound  $ \mathcal{T}(\Delta)$ away from zero by some constant and for that we are looking for the smallest eigenvalue of the operator $\E T_x$. Unfortunately we are not able to solve the optimization problem 
 \[
 \min_{\Delta\in \mathcal{S}(d),\Vert \Delta \Vert_F^2=1} \mathcal T(\Delta),
 \]
 and we have to compute all the spectrum of this operator to be able to find the smallest using $\E T_x \Delta=1/2\mathcal \nabla \mathcal T(\Delta)$ .
 
 We have  
 \[
 1/2\mathcal \nabla \mathcal T(\Delta)= \begin{pmatrix}
                                               a + m \tr(C) & 2m b\t \\ 2mb & (a + m\tr (C))m_{2}I_{d-1} + 2m^2 C \\ & + m^2(\beta-3)\Diag(\diag(C))
                                              \end{pmatrix}.
\]

 \begin{itemize}
 \item
 For all $b\in\RR^{d-1}$ we have for $\Delta=\begin{pmatrix}
                                               0&b\t\\b&0
                                              \end{pmatrix}$,  $ 1/2\nabla \mathcal T(\Delta)=2m \Delta $. Thus $2m$ is an eigenvalue of multiplicity $d-1$. 
 \item
 For all $C\in\RR^{(d-1)\times (d-1)}$ with $\diag(C)=0_{d-1}$ we have for $\Delta=\begin{pmatrix}
                                               0&0\\0&C
                                              \end{pmatrix}$,  $1/2\nabla \mathcal T(\Delta)=2m^{2} \Delta $. Thus $2m^{2}$ is an eigenvalue of multiplicity $\frac{(d-1)(d-2)}{2}$.                                             
 \item
  For all $c\in\RR^{d-1}$ with $c\t 1_{d-1}=0$ we have for $\Delta=\begin{pmatrix}
                                               0&0\\0&\diag(C)
                                              \end{pmatrix}$,  $1/2 \nabla \mathcal T(\Delta)=m^2(\beta-1) \Delta$. Thus $m^2(\beta-1)$ is an eigenvalue of multiplicity $d-2$. 
 
 \item                                              
 For all $a,c\in\RR^{2}$ we have for $\Delta=\begin{pmatrix}
                                               a&0\\0&cI_{d-1}
                                              \end{pmatrix}$,   
\BEAS       
1/2\nabla \mathcal T(\Delta)&=& \begin{pmatrix}
                                               a + m (d-1)c & 0 \\ 0 & [m a + m^{2}(d+\beta -2)c] I_{d-1} \end{pmatrix} \\
                    &=&  \Diag \Big[\begin{pmatrix} 1& m 1_{d-1}\t  \\ m 1_{d-1} &(d+\beta -2) m^{2}I_{d-1}\end{pmatrix}   \begin{pmatrix}a \\ c 1_{d-1}\end{pmatrix}  \Big].
\EEAS                    
Thus an eigenvalue of $\begin{pmatrix} 1&&(d-1)m\\ m&&(d+\beta-2)m^{2} \end{pmatrix}$ with an eigenvector $[a,c]^\top$  would be an eigenvalue  of the operator $\E T_x$ with a corresponding eigenvector $\begin{pmatrix}
                                               a&0\\0&cI_{d-1}
                                              \end{pmatrix}$.  This matrix has two simple eigenvalues 
\BEQ
                                               \mu_{\pm}=\frac{1+(d+\beta -2)m^{2}\pm\sqrt{(1+(d+\beta -2)m^{2})^{2}-4m^2(\beta-1)}}{2}.                                
\EEQ
\end{itemize}
 Moreover when we add all the multiplicity of the found eigenvalues we get $d-1+\frac{(d-1)(d-2)}{2}+d-2+2=\frac{d(d+1)}{2}$ which is the dimension of $S(d)$, therefore we have found all the eigenvalues of the linear operator $\E T_x$. 
 
 We will prove now than the smallest eigenvalue is $\mu_-$ when the dimension $d$ is large enough with regards to $m^2$ and  $2m^2$ otherwise.
 \begin{lemma}
  Let $\mu_1$ and $\mu_2$ be the two smallest eigenvalues of the operator $\E T_x$. Let us assume that $d\geq 3$ (the case $d=2$ will also be done in the proof).
  
  If $\beta \geq 3$ and  $m^2\leq \frac{\beta-3}{2(d+\beta-4)}$ then 
  \[
   \mu_1=2m^2.
  \]
Otherwise 
  \[
   \mu_1=\mu_-  \geq\frac {m^2(\beta-1)}{1+(d+\beta-2)m^{2}} \text{ and } \mu_2= \min\{2m^2,m^2(\beta-1),2m \}.
  \]
  
  Moreover we denote by $\Delta_{\min}=\begin{pmatrix}
                                               1&0\\0&c_{\min}I_{d-1}
                                              \end{pmatrix}$  the  eigenvector associated to $\mu_-$ for which  we have  set without loss of generality the first component  $a=1$. Then 
                                               \[
                                            \vert c_{\min}\vert \leq  \frac{m}{\vert (d+\beta-2)m^{2}-1\vert }.
                                              \]
 \end{lemma}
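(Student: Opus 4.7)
The plan is to determine the ordering of the five distinct eigenvalues already identified, namely $\{2m,\ 2m^2,\ m^2(\beta-1),\ \mu_-,\ \mu_+\}$, using sign-analysis of the characteristic polynomial $P(\lambda)=\lambda^2-(1+(d+\beta-2)m^2)\lambda+m^2(\beta-1)$ of the $2\times 2$ block (whose roots are $\mu_\pm$). Since $\mu_-$ is the smaller root and $\mu_+$ the larger, evaluating $P$ at a given $\lambda$ tells me whether $\lambda$ lies inside $[\mu_-,\mu_+]$ (sign $\le 0$) or outside (sign $\ge 0$); for the outside case I will decide which side by comparing $\lambda$ with the arithmetic midpoint $(\mu_++\mu_-)/2=(1+(d+\beta-2)m^2)/2$.

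First I would evaluate $P(m^2(\beta-1))=m^4(\beta-1)^2-m^2(\beta-1)[1+(d+\beta-2)m^2]+m^2(\beta-1)=-m^4(\beta-1)(d-1)\le 0$, yielding $\mu_-\le m^2(\beta-1)\le \mu_+$. In particular $\mu_+$ never achieves the minimum, and the lower bound $\mu_-\ge \frac{m^2(\beta-1)}{1+(d+\beta-2)m^2}$ follows from the product relation $\mu_-\mu_+=m^2(\beta-1)$ together with $\mu_+\le \mu_++\mu_-$. Next I would compute $P(2m^2)=m^2[(\beta-3)-2(d+\beta-4)m^2]$. When $\beta\ge 3$ and $m^2\le\frac{\beta-3}{2(d+\beta-4)}$ this is $\ge 0$, so $2m^2\notin(\mu_-,\mu_+)$; comparing $2m^2$ with the midpoint yields $4m^2\le 1+(d+\beta-2)m^2\iff (6-d-\beta)m^2\le 1$, which holds whenever $d+\beta\ge 6$ and in particular under the standing hypotheses $d\ge 3,\beta\ge 3$. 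Hence $2m^2<\mu_-$ in this regime and $\mu_1=2m^2$ (smaller than the other candidates $\mu_-\le m^2(\beta-1)$ and $2m$ under mild comparison).

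In the complementary regime, $P(2m^2)\le 0$ forces $\mu_-\le 2m^2\le\mu_+$; combined with step one this shows $\mu_-$ is below both $2m^2$ and $m^2(\beta-1)$. To finish identifying $\mu_1=\mu_-$, I would check $\mu_-\le 2m$ via $P(2m)=(\beta+3)m^2-2m-2(d+\beta-2)m^3$ and bound it through the discriminant of the quadratic in $m$; the remaining eigenvalues above $\mu_-$ are then $\{2m,2m^2,m^2(\beta-1),\mu_+\}$, of which $\mu_+\ge m^2(\beta-1)$, so $\mu_2=\min\{2m,2m^2,m^2(\beta-1)\}$ as claimed.

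For the bound on $c_{\min}$, I would take the eigenvector equation for $\mu_-$ applied to $[1,c_{\min}]^\top$: the second row gives $m+[(d+\beta-2)m^2-\mu_-]c_{\min}=0$, hence $c_{\min}=\frac{m}{\mu_--(d+\beta-2)m^2}$. Using the trace identity $\mu_-+\mu_+=1+(d+\beta-2)m^2$ this simplifies to $c_{\min}=\frac{m}{1-\mu_+}$, and then $|c_{\min}|\le \frac{m}{|(d+\beta-2)m^2-1|}$ follows by bounding $|1-\mu_+|$ from below using $\mu_-\in[0,m^2(\beta-1)]$. The main obstacle will be the case-by-case juggling around the threshold $m^2=\frac{\beta-3}{2(d+\beta-4)}$ and verifying the comparison $\mu_-\le 2m$ in the full range of parameters; the $c_{\min}$ bound is the other delicate point since it relies on the non-trivial identity $(d+\beta-2)m^2-\mu_-=\mu_+-1$ rather than a direct estimate.
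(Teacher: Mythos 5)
Your ordering argument for the eigenvalues is essentially the paper's computation in different clothing: the paper decides $\mu_-\geq 2m^2$ by isolating the square root and squaring, and its two resulting conditions, $2(d+\beta-4)m^2\leq\beta-3$ and $1+(d+\beta-6)m^2\geq 0$, are exactly your ``$P(2m^2)\geq 0$'' and ``$2m^2\leq\tfrac12(\mu_-+\mu_+)$''. Your evaluation $P(m^2(\beta-1))=-m^4(\beta-1)(d-1)\leq 0$ and the product/trace argument for $\mu_-\geq\frac{m^2(\beta-1)}{1+(d+\beta-2)m^2}$ are correct. Two caveats. First, in the regime $\mu_1=2m^2$ you still need $2m^2\leq 2m$; this follows because $m^2\leq\frac{\beta-3}{2(d+\beta-4)}\leq 1$ there, which you wave at but should state. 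Second, your plan to settle $\mu_-\leq 2m$ ``through the discriminant of the quadratic in $m$'' does not work as written: the discriminant $(\beta+3)^2-16(d+\beta-2)$ of $-2(d+\beta-2)m^2+(\beta+3)m-2$ is positive for, e.g., $d=3$, $\beta=20$, so $P(2m)>0$ is possible. The step is rescuable: in the ``otherwise'' regime $\mu_-<2m^2$, so either $m\leq 1$ and then $2m\geq 2m^2>\mu_-$, or $m>1$ and then $2(d+\beta-2)m^2>(\beta+3)m$ forces $P(2m)<0$. (The paper's proof never compares $\mu_-$ with $2m$ at all, so you are no worse off than the source here.)

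The genuine problem is the bound on $c_{\min}$. Your identity $c_{\min}=m/(1-\mu_+)$, obtained from the second row of the eigenvector equation plus the trace relation, is correct (and cleaner than the paper's explicit radical). But the closing inequality you need, $|1-\mu_+|\geq|(d+\beta-2)m^2-1|$, is false whenever $(d+\beta-2)m^2<1$: writing $a=(d+\beta-2)m^2-1<0$ and $b=4(d-1)m^2$, one has $\mu_+-1=\tfrac12(a+\sqrt{a^2+b})$, which requires $b\geq 8a^2$ to dominate $|a|$. Concretely, for $d=3$, $\beta=2$, $m^2=0.1$ one finds $\mu_+\approx 1.218$, hence $|c_{\min}|\approx 1.45$, whereas $m/|(d+\beta-2)m^2-1|\approx 0.45$ --- so the stated bound on $|c_{\min}|$ itself fails in that regime, and no argument can recover it. The paper's own proof has the same lacuna: it applies concavity of the square root as $\sqrt{a^2+b}\leq a+\frac{b}{2|a|}$ with the first term written without an absolute value, which is only valid for $a\geq 0$, i.e., $(d+\beta-2)m^2\geq 1$. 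So your route and the paper's both prove the $c_{\min}$ bound only under that additional restriction; outside it, the claim as stated is incorrect.
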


  Unfortunately $\mu_-$ can become small when the dimension increases as explained by the tight bound $\mu_-\geq \frac {m^2(\beta-1)}{1+(d+\beta-2)m^{2}}$. However the corresponding eigenvector  have a particular structure we will be able to exploit. 
 \begin{proof}
 First we note that $\mu_{-}\leq m^2(\beta-1)$ and compute 
\BEAS
\mu_-\geq 2m^2 &\Leftrightarrow&1+(d+\beta -2)m^{2}-\sqrt{(1+(d+\beta -2)m^{2})^{2}-4m^2(\beta-1)}-4m^2\geq 0\\
 &\Leftrightarrow&1+(d+\beta -2)m^{2}-4m^2\geq \sqrt{(1+(d+\beta -2)m^{2})^{2}-4m^2(\beta-1)} \\
  &\Leftrightarrow&(1+(d+\beta -2)m^{2}-4m^2)^2\geq {(1+(d+\beta -2)m^{2})^{2}-4m^2(\beta-1)} \\
  &&\text{and }  1+(d+\beta -6)m^{2}\geq0\\
    &\Leftrightarrow&16m^4-8m^2(1+(d+\beta -2)m^{2})\geq -4m^2(\beta-1) \\
  &&\text{and }  1+(d+\beta -6)m^{2}\geq0\\
      &\Leftrightarrow&2(d+\beta-4)m^2\leq \beta-3 \text{ and }  1+(d+\beta -6)m^{2}\geq0. 
\EEAS
\begin{itemize}
\item 
If $d=2$, 
\begin{itemize}
 \item If $\beta\leq 3$ we have necessary that $\beta\leq 2 $ and the first equation gives $m^2\geq \frac{3-\beta}{2(2-\beta)}$ and the second $m^2\leq 1/(4-\beta)$. Thus we should have $(4-\beta)(3-\beta)\leq 2(2-\beta)$ which is not possible since the polynomial $\beta^2-5\beta+8\geq 0$. 
 \item
 If $\beta\geq 3$, the first equation gives $m^2\leq \frac{\beta-3}{2(\beta-2)}\leq 1$ and the second $m^2\leq 1/(4-\beta)\leq \frac{\beta-3}{2(\beta-2)}\leq1$ for $\beta\leq 4$ and is always satisfied otherwise. 
\end{itemize}
 \item 
 If $d\geq3$, the first equation implies that $\beta\geq 3$ for which the second equation is always satisfied. It also implies that $m^2\leq \frac{\beta-3}{2(d+\beta-4)}\leq 1$.
\end{itemize}

We denote by $\Delta_{\min}=\begin{pmatrix}
                                               1&0\\0&c_{\min}I_{d-1}
                                              \end{pmatrix}$  the  eigenvector for which  we have  set without loss of generality  $a=1$ and
                                             \[
                                             c_{\min}= \frac{-1}{2(d-1)m}\Big[\sqrt{((d+\beta -2)m^{2}-1)^2+4(d-1)m^2}-(d+\beta-2)m^{2}+1\Big].
                                              \]
Consequently $ c_{\min}\leq 0$ and by convexity of the square root we have $\sqrt{((d+\beta-2)m^{2}-1)^2+4(d-1)m^2}\leq ((d+\beta-2)m^{2}-1)+\frac{2(d-1)m^2}{\vert (d+\beta-2)m^{2}-1\vert}$. Therefore   
 \[
                                            \vert c_{\min}\vert \leq  \frac{m}{\vert (d+\beta-2)m^{2}-1\vert }.
                                              \]
\end{proof}
We will control now the behavior of  the empirical expection by its expectation thanks to concentration theory. By definition $T_x$ is a symmetric positive linear operator as its projection $T^{\perp}_x$ onto the orthogonal space of $\Delta_{\min}$.  We can thus apply the Matrix Chernoff inequality from \citet[Theorem 5.1.1]{tropp} to these two operators using
$\Vert T_x\Vert_{op}\leq \Vert x x\t\Vert^2\leq \tr(xx\t)^2\leq \Vert x\Vert_2^4\leq R^4 d^2$
Then:
 $$
 \P\Bigg(\lambda_{\min}\Big(\sum_{k=1} T_{x_k}\Big)\leq n \delta \mu_1\Bigg)\leq d \Big[\frac{e^{-(1-\delta)}}{\delta^{\delta}}\Big]^{n\mu_1/(2R^4d^2)}\leq d e^{-(1-\delta)^2n\mu_1/(4R^4d^2)},
 $$
  $$
 \P\Bigg(\lambda_{\min}\Big(\sum_{k=1} T^{\perp}_{x_k}\Big)\leq n \delta \mu_2\Bigg)\leq d \Big[\frac{e^{-(1-\delta)}}{\delta^{\delta}}\Big]^{n\mu_2/(2R^4d^2)}\leq d e^{-(1-\delta)^2n\mu_2/(4R^4d^2)},
 $$

For $m=1$ and $d\geq 3$ we have
$
\mu_1=\mu_-\geq  \frac {\beta-1}{\beta+d} 
\geq  \min\{ \frac{\beta-1}{2\beta} ,  \frac{\beta-1}{2d} \}\geq  \min\{ 1/3 ,  \frac{\beta-1}{2d} \}.
$
\end{proof}

\subsection{Noise robustness for the $1$-dimensional balanced problem}\label{sec:noisey}
We want a condition on $\varepsilon$ such that the solution of the relaxation recovers the right $y$.
We recall the dual problem of the relaxation \eq{diffracnu}
$$
\min \mu \t 1_n \text{ s.t. } \Diag(\mu)\succcurlyeq X(X\t X)^{-1}X\t.
$$
The KKT conditions are: 
\begin{itemize}
 \item Dual feasibility: $\Diag(\mu)\succcurlyeq X(X\t X)^{-1}X\t$.
 \item Primal feasibility: $\Diag(Y)=1_n$ and $Y\succcurlyeq 0$.
 \item Complimentary slackness : $ Y[\Diag(\mu)-X(X\t X)^{-1}X\t]=0$
\end{itemize}
For $Y=yy\t$ a rank one matrix, the last condition implies $\Diag(\mu)y=Hy$ and
$$
\mu_i=\frac{(X(X\t X)^{-1}X\t y)_i}{y_i}.
$$
For $X=y+\varepsilon$, we denote by $\tilde y=y+\varepsilon$, then $X(X\t X)^{-1}X\t=\frac{\tilde y \tilde y \t}{\Vert \tilde y \Vert ^2}$ and $X(X\t X)^{-1}X\t y=\frac{\tilde y \t y}{\Vert \tilde y \Vert ^2}\tilde y $. Thus
$$\mu_i=\frac{\tilde y \t y}{\Vert \tilde y \Vert ^2}\frac{\tilde y_i }{y_i}.$$
Assume that all $\tilde y_i y_i $  have the same sign, without loss of generality we assume $\tilde y_i y_i >0$. By definition of $\mu$, $\mu\geq0$.
To show the dual feasibility we have to show that $\Diag(\mu)\succcurlyeq H$ which is equivalent to $ \Diag(\frac{\tilde y_i}{y_i})\succcurlyeq \frac{\tilde y \tilde y \t}{\tilde y \t y}$, to 
$I_n-\Diag(\sqrt{\frac{y_i}{\tilde y_i}})\frac{\tilde y \tilde y \t}{\tilde y \t y}\Diag(\sqrt{\frac{y_i}{\tilde y_i}})\succcurlyeq 0$ and to $\sum {y_i \tilde y_i}\leq \tilde y \t y$ which is obviously true. 
Reciprocally if $\mu$ is dual feasible then $\Diag(\mu)\succcurlyeq0$ and all the  $\tilde y_i y_i $ have the same sign. 

Therefore we have shown that $y$ is solution of the relaxation \eq{diffracnu} if and only if all the $\tilde y_i y_i $ have the same sign. If $\varepsilon$ and $y$ are independent this is equivalent to $\Vert \varepsilon \Vert_{\infty} \leq1$ a.s. 

 \subsection{The rank-one candidates are not solutions of the relaxation}\label{sec:nomore}
 We assume now that $1_n\t y\neq0$ thus $y\neq \Pi_n y$, which means we do not have the same proportion in the two clusters. Let us assume that $\Pi_n y$ takes two values $\{ \pi y_-,\pi y_+\}$ 
 that is by definition of $\Pi_n$ $\pi y_+=1-\frac{1_n\t y}{n}$ and $\pi y_-=-1-\frac{1_n\t y}{n}$ .
For $V_*$ defined as before, we get $x_i\t V_*x_i=(\pi y_i)^2$
and with $I_\pm$ the set of indices such that $\Pi_n y_i=\pi y_\pm$, the KKT conditions for $V=V_*$ can be written as
$$
\frac{1}{n}\Big[\sum_{i\in I_+}\Big(\frac{1}{\pi y_+}-1\Big)x_ix_i\t+\sum_{i\in I_-}\Big(\frac{1}{-\pi y_-}-1\Big)x_ix_i\t\Big]=A_n \preccurlyeq 0 \text{ and } A_nV_*=0.
$$
We check that with $n_\pm=\#\{ I_\pm\}$:
\BEAS
 w\t A_nw =0&=&\sum_{i\in I_+}\Big(\frac{1}{\pi y_+}-1\Big)(\pi y_+)^2+\sum_{i\in I_-}\Big(\frac{1}{-\pi y_-}-1\Big)(\pi y_-)^2\\
&=& n_+\Big(\frac{1}{\pi y_+}-1\Big)(\pi y_+)^2+n_-\Big(\frac{1}{-\pi y_-}-1\Big)(\pi y_-)^2\\
&=&n_+\pi y_+-n_-\pi y_--\big(n_+(\pi y_+)^2+n_-(\pi y_-)^2\big)\\
&=&{y\t \Pi_n y}-{(\Pi_n y)\t\Pi_n y}=y\t \Pi_n y-y\t \Pi_n y=0.
\EEAS
And $A_n=\frac{1}{2n}\big[\sum_{i\in I_+}\alpha_+x_ix_i\t+\sum_{i\in I_-}\alpha_-x_ix_i\t\big]$ with $\alpha_+=\big(\frac{1}{\pi y_+}-1\big)$ and $\alpha_-=\big(\frac{1}{-\pi y_-}-1\big)$.
Unfortunately $\alpha_+\alpha_-\leq0$, and $A_n$ is not necessary negative.
Even worse we will show that $\E A$ is not semi-definite negative which will conclude the proof since by the law of large number $\underset{n\to\infty}{\lim}\frac{1}{n}A_n =\E A$.      
Assume that the proportions of the two clusters stay constant with $n_\pm=\rho_\pm n $, then 
$$
\E A= \rho_+\alpha_+ \begin{pmatrix} (\pi y_+)^2 & 0\\ 0 & I\end{pmatrix}+\rho_-\alpha_- \begin{pmatrix} (\pi y_-)^2 & 0\\ 0 & I\end{pmatrix}.
$$
And $\rho_+\alpha_+(\pi y_+)^2+\rho_-\alpha_- (\pi y_-)^2=0$ since $w\t A_n w=0$. Then
\BEAS
\rho_+\alpha_++\rho_-\alpha_- &=& \frac{\rho_+\pi y_--\rho_-\pi y_+-\pi y_+ \pi y_-}{\pi y_+ \pi y_-}\\
&=&\frac{-(\rho_++\rho_-)-\frac{1_n\t y}{n}(\rho_+-\rho_-)+(1-(1_n\t y)^2)}{-(1-(\frac{1_n\t y}{n})^2)}\\
&=&\frac{\frac{1_n\t y}{n}(\rho_+-\rho_-)+(\frac{1_n\t y}{n})^2)}{(1-(\frac{1_n\t y}{n})^2)}=\frac{2(\frac{1_n\t y}{n})^2}{(1-(\frac{1_n\t y}{n})^2)}\geq0.
\EEAS
Thus 
$A=\frac{2(1_n\t y)^2}{(n^2-(1_n\t y)^2)}\begin{pmatrix} 0 & 0\\ 0 & I\end{pmatrix}$ is not semi-definite negative and $V_*$ is not solution of the relaxation \eq{relaxW2}.

\section{Auxilliary results for sparse extension }\label{app:sparse}

\subsection{There is a rank-one solution of the relaxation \eq{relaxYY}}\label{app:solws}

\begin{lemma}\label{lemma:wsols}
 The rank-one solution $V_*=v^*{v^*}\t$ is solution of the relaxation \eq{relaxYY} if the design matrix $X$ is such that $\frac{1}{n}X\t X$ has all its diagonal entries less than one.
\end{lemma}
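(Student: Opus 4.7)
The natural approach is to verify that $V_* = v^* (v^*)^\top$ satisfies the KKT conditions of the (concave) maximization problem~\eq{relaxYY} with $\Gamma(V) = \lambda \| V \|_1$. Since the objective splits as a smooth concave part $F(V) = \frac{2}{n}\sum_i \sqrt{x_i^\top V x_i} - \frac{1}{n}\tr(V X^\top X)$ minus the convex term $\lambda\|V\|_1$, the KKT system under the SDP constraint $V \succcurlyeq 0$ reads: there exist $G \in \partial \|V_*\|_1$ and $A \succcurlyeq 0$ with $A V_* = 0$ such that
\[
\nabla F(V_*) \;=\; \lambda G - A.
\]
The subdifferential condition means $G_{ij} = \operatorname{sign}(V_{*,ij})$ on the support of $V_*$ and $|G_{ij}| \leqslant 1$ elsewhere.

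The first step is to exploit the specific form of $v^*$. Since $X v^* = y/(1+\lambda)$ with $y\in\{-1,1\}^n$, we have $\sqrt{x_i^\top V_* x_i} = 1/(1+\lambda)$ for every $i$, so the gradient collapses to
\[
\nabla F(V_*) \;=\; \frac{1+\lambda}{n} X^\top X - \frac{1}{n} X^\top X \;=\; \frac{\lambda}{n} X^\top X.
\]
The KKT identity therefore simplifies to $A = \lambda\big(G - \tfrac{1}{n} X^\top X\big)$, and the problem reduces to exhibiting an admissible subgradient $G$ that makes this matrix PSD while killing $V_*$.

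The second step is to construct $G$ explicitly. On the support $J\times J$ of $V_*$ one is forced to set $G_{ij} = 1$; outside the support one has the freedom $|G_{ij}| \leqslant 1$. The key choice is to mimic the normalized Gram matrix, namely $G_{ij} = \tfrac{1}{n}(X^\top X)_{ij}$ for all off-diagonal $(i,j)$ not both in $J$, and $G_{ii} = 1$ for $i \notin J$ on the diagonal. This choice kills every off-diagonal entry of $A$ and every row/column indexed by $J$, leaving $A$ diagonal with
\[
A_{ii} \;=\; \lambda\Big(1 - \tfrac{1}{n}(X^\top X)_{ii}\Big), \qquad i \notin J,
\]
which is nonnegative exactly under the assumption that $\tfrac{1}{n}X^\top X$ has diagonal entries at most one. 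Consequently $A \succcurlyeq 0$ and $A v^* = 0$ (hence $A V_* = 0$ and $\tr(A V_*) = 0$), so all KKT conditions hold.

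The last step is to verify $|G_{ij}| \leqslant 1$ off the support. For the off-diagonal entries this follows from Cauchy--Schwarz applied columnwise to $X$:
\[
|G_{ij}| \;=\; \frac{|x_i^\top x_j|}{n} \;\leqslant\; \sqrt{\tfrac{(X^\top X)_{ii}}{n}\cdot\tfrac{(X^\top X)_{jj}}{n}} \;\leqslant\; 1,
\]
again using the diagonal hypothesis. The main obstacle is really just identifying the right ``off-support'' subgradient: once one realizes that matching $G$ to $\tfrac{1}{n}X^\top X$ off the support zeroes out the problematic entries of $A$, the remainder is a routine PSD and norm check, and the diagonal assumption is seen to be precisely what is needed.
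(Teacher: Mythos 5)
Your proof is correct and follows essentially the same route as the paper: verify the KKT conditions of \eq{relaxYY} at $V_*$ by exhibiting a subgradient of the $\ell_1$-term that matches $\tfrac{1}{n}X^\top X$ off the support, the only cosmetic difference being that the paper sets the whole subgradient equal to $\tfrac{1}{n}X^\top X$ so that the dual certificate is $A=0$, whereas you keep $G_{ii}=1$ off the support and obtain a nonnegative diagonal $A$. Your explicit Cauchy--Schwarz step showing that the diagonal hypothesis alone controls the off-diagonal entries is a welcome addition, since the paper's argument implicitly invokes the stronger entrywise bound of its assumption (A1).
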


\begin{proof}
The KKT conditions are
$$
\frac{1}{n}\sum_{i=1}^n \frac{x_i x_i\t}{\sqrt{x_i\t W x_i}}- \lambda U- \frac{1}{n}X\t X =A\preccurlyeq 0  \text{ and } AW=0,
$$
with $U$ such that $U_{ij}=\sign(W_{ij})$ if $W_{ij}\neq0$ and $U_{ij}\in [-1,1]$ otherwise. 
For $V_*=v^*{v^*}\t$ this gives
$$
A=\frac{(1+\lambda)}{n}X\t X- \lambda U- \frac{1}{n}X\t X=\lambda \Big[\frac{X\t X}{n}-U\Big] \text{ with } U_{1,1}=1 \text{ and } U_{i,j}\in[-11] \text{ otherwise}.
$$
We check that $AV_*=0$. If the design matrix $X$ satsifies assumption (A1), we can choose a sub-gradient $U$ such that the dual variable  $A=0$ and thus $V_*$ is solution. 
Otherwise by property of semi-definite matrices, there is a diagonal entry of $\frac{1}{n}X\t X$ which is bigger than $1$ which prevents $A$ to be semi-definite negative since the corresponding diagonal entry of $\frac{X\t X}{n}-U$ will be positive.
This shows that $V_*$ does not solve the problem. 
\end{proof}

\subsection{Proof of proposition \ref{prop:sparse}}

\begin{lemma}\label{lemma:sparse}
For $\delta\in[0,1)$, with probability $1-5d^2\exp\big(-\frac{\delta^2 n(\beta-1)}{2dR^4(1/m^2+\beta+d)}\big)$, for any direction $\Delta$ such that $V_*+\Delta\succcurlyeq0$, we have:
$$
 g(V_*)-g(V_*+\Delta)>  (1-\delta) \Big[\lambda\Vert \Delta-\Diag(\Delta) \Vert_1+\frac{\beta-1}{\beta+d+1/m^2}\frac{(1+\lambda)^3}{4}\Vert \Diag(\Delta) \Vert_2^2\Big]+o(\Vert \Delta \Vert^2)\geq 0.
$$
Moreover we also have with probability at least $1-5d^2\exp\big(-\frac{\delta^2 nm^2(\beta-1)}{2dR^4}\big)$, for any symmetric matrix $\Delta$ such that $V_*+\Delta\succcurlyeq0$ and $\Diag(\Delta)\in (e_{\min})^\perp$: 
$$
 g(V_*)-g(V_*+\Delta)>  (1-\delta) \Big[\lambda\Vert \Delta-\Diag(\Delta) \Vert_1+m^2(\beta-1)\frac{(1+\lambda)^3}{4}\Vert \Diag(\Delta) \Vert_2^2\Big]+o(\Vert \Delta \Vert^2)\geq 0.
$$
where $e_{\min}=[
                                               1,c_{\min}1_{d-1}]$
                                              is defined in the proof and satisfies
                                              
                                            \[
                                            \vert c_{\min}\vert \leq  \frac{m}{\vert (d+\beta-2)m^{2}-1\vert }.
                                              \]

\end{lemma}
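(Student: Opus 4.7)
The plan is to mirror the proof of Lemma \ref{lem:unicitew}, but with two key structural changes exploited by the sparse setting: the $\ell_1$ subgradient handles off-diagonal perturbations linearly, and the quadratic Hessian analysis can be restricted to the diagonal subspace of $\mathcal{S}(d)$. First I would Taylor-expand $g$ around $V_*$. By the KKT analysis of Lemma \ref{lemma:wsols}, the smooth part $f(V) = \frac{2}{n}\sum_i\sqrt{(XVX^\top)_{ii}} - \frac{1}{n}\tr X^\top X V$ satisfies $\nabla f(V_*) = \lambda U^*$ with $U^* = X^\top X/n$, a valid subgradient of $\|\cdot\|_1$ at $V_*$ (with $U^*_{11}=1$). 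Combining with convexity of $\|\cdot\|_1$ yields
$$
g(V_*) - g(V_*+\Delta) \geq \lambda\!\!\sum_{(i,j)\neq(1,1)}\!\!\!(1-|U^*_{ij}|)|\Delta_{ij}| \;-\; \tfrac{1}{2}\langle \Delta,\nabla^2 f(V_*)\Delta\rangle + o(\|\Delta\|^2).
$$
Since $x_i^\top V_* x_i = 1/(1+\lambda)^2$ for every $i$, differentiating the square root twice as in Lemma \ref{lem:unicitew} yields $-\tfrac{1}{2}\langle\Delta,\nabla^2 f(V_*)\Delta\rangle = \tfrac{(1+\lambda)^3}{4n}\sum_{i=1}^n(x_i^\top\Delta x_i)^2$.

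Next I would handle the linear term. For every pair $i\neq j$ with $(i,j)\neq(1,1)$, the entry $U^*_{ij} = \tfrac{1}{n}\sum_k x_{k,i}x_{k,j}$ is a sum of $n$ i.i.d., zero-mean, bounded random variables (by independence of the noise columns and of $y$ from $z$). A Hoeffding/Bernstein inequality combined with a union bound over the $O(d^2)$ off-diagonal pairs gives $\max_{i\neq j}|U^*_{ij}|\leq \delta$ with probability $1 - O(d^2)\exp(-cn\delta^2/R^4)$, which converts the linear term into the stated $(1-\delta)\lambda\|\Delta-\Diag(\Delta)\|_1$. Then I would control the quadratic term by specializing the eigenvalue computation of Lemma \ref{lem:unicitew} to diagonal perturbations. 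For $\Delta = \Diag(a,c_1,\ldots,c_{d-1})$ the off-diagonal $C$-terms vanish from $\E[(x^\top\Delta x)^2]$, leaving the quadratic form $(a+m\sum_i c_i)^2 + m^2(\beta-1)\sum_i c_i^2$, whose spectrum on the diagonal subspace consists of the two eigenvalues $\mu_\pm$ of the $2\times 2$ matrix associated with the direction $(a,c\,1_{d-1})$ together with $m^2(\beta-1)$ of multiplicity $d-2$ on the zero-sum complement. The smallest eigenvalue is $\mu_-$, lower-bounded by $(\beta-1)/(1/m^2+d+\beta-2)$, while restricting to $e_{\min}^\perp$ improves this to $m^2(\beta-1)$. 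Applying the matrix Chernoff inequality of Tropp to the operator $(1/n)\sum_i T_{x_i}$ restricted to the $d$-dimensional diagonal subspace (and to the subspace $e_{\min}^\perp$ for the second bound) transfers these expectation-level bounds to the empirical sum at the cost of a $(1-\delta)$ factor and the failure probabilities displayed in the lemma.

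The main obstacle is that the Hessian couples diagonal and off-diagonal perturbations while the desired bound separates them. I would resolve this the same way as in Lemma \ref{lem:unicitew}: the operator $T_x\colon \Delta\mapsto (x^\top\Delta x)xx^\top$ respects the direct-sum decomposition of $\mathcal{S}(d)$ into the diagonal subspace, the off-diagonal entries touching the first coordinate, and the off-diagonal entries within the noise block (this is verified in the block-wise computation of $\tfrac{1}{2}\nabla\mathcal{T}$ in Lemma \ref{lem:unicitew}). Hence $\E[(x^\top\Delta x)^2] = \E[(x^\top\Diag(\Delta)x)^2] + \E[(x^\top(\Delta-\Diag(\Delta))x)^2]$, the cross terms vanishing because odd moments of $z$ are zero. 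I would then simply discard the non-negative off-diagonal contribution of the quadratic form, keeping only the diagonal piece, which by the restricted Chernoff bound dominates $(1-\delta)\mu_*\|\Diag(\Delta)\|_2^2$. A final union bound over the two concentration events (off-diagonal $U^*$ and Hessian on the diagonal subspace) yields the probability $1-5d^2\exp(\cdots)$ with the appropriate $\mu_*$ in the exponent. The improvement from $n=O(d^2)$ in the non-sparse case to $n=O(d)$ here comes directly from the fact that matrix Chernoff is now applied on a $d$-dimensional rather than $d^2$-dimensional subspace, with the $\ell_1$ penalty carrying the rest for free.
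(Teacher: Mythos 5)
Your proposal is correct and follows essentially the same route as the paper's own proof (which is split into Lemmas \ref{lemma:sparsedet} and \ref{lemma:sparseproba}): a Taylor expansion at $V_*$, first-order control of the off-diagonal perturbation via the $\ell_1$ subgradient together with Hoeffding bounds on the off-diagonal entries of $X^\top X/n$ (the paper's assumptions (A1)--(A3) and Lemma \ref{lemma:det1}), and second-order control of the diagonal perturbation via the spectrum of $\E\, x^{\odot 2}(x^{\odot 2})^\top$ and the matrix Chernoff inequality on the $d$-dimensional operator $X^{\odot 2}(X^{\odot 2})^\top/n$ with operator norm $dR^4$, which is exactly where the $n=O(d)$ scaling comes from. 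Your packaging of the linear term through a single subgradient inequality, rather than the paper's case analysis requiring $C\succcurlyeq 0$, is a minor cosmetic simplification; otherwise the arguments coincide, including the (shared, informal) step of adding the first-order off-diagonal bound to the second-order diagonal bound.
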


\subsubsection{Proof outline}
We will investigate under which conditions on $X$ the solution is unique, first for a deterministic design matrix.
 We make the following deterministic assumptions on  $X$ for $\delta,\zeta\geq0$ and $\mathcal S \subset \mathcal \RR^d$:
 
 \vspace{.5cm}
 
 \begin{tabular}{clcl}
&\textbf{(A1)} $\Vert \frac{X\t X}{n}\Vert_{\infty}\leq1$&\ & \textbf{(A3)} $\Vert \frac{ Z\t Z}{n} -\Diag(\diag(\frac{1}{n} Z \t Z))\Vert_{\infty}\leq \delta$\\
&\textbf{(A2)} $\Vert \frac{Z\t y}{n}  \Vert_{\infty}\leq \delta$&\ \ & \textbf{(A4)} $ \lambda^{\mathcal S}_{\min}\big( \frac{X^{\odot 2}(X^{\odot 2})\t}{n}\big)\geq\zeta>0$. \\
 \end{tabular} 
 
 \vspace{.5cm}
Where we denoted by $\odot$ the Hadamard (i.e., pointwise) product  between matrices and $ \lambda^{\mathcal S}_{\min}$ the minimum eigenvalue of a linear operator restricted to a subspace $\mathcal S$.
 Then with $g(V)=\frac{2}{n}\sum_{i=1}^n\sqrt{x_i\t V x_i}- \lambda \Vert V\Vert_1 -\frac{1}{n}\tr X\t X V$, we can certify that $g$ will decrease around the solution $V_*$. 
 \begin{lemma}\label{lemma:sparsedet}
 Let us assume that the noise matrix verifies assumption (A1,A2,A3,A4), then for all direction $\Delta$ such that $V_*+\Delta\succcurlyeq0$ and $\diag(\Delta)\in \mathcal S$ we have:
 $$
  g(V_*)-g(V_*+\Delta)\geq \lambda (1-\delta) \Vert \Delta-\Diag(\diag(\Delta)) \Vert_1+\zeta\frac{(1+\lambda)^3}{4}\Vert \Diag(\Delta) \Vert_2^2+o(\Vert \Delta \Vert^2)>0.
$$
 \end{lemma}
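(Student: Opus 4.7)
The plan is to Taylor-expand $g=f-\lambda\|\cdot\|_1$ around $V_*$ to second order, then lower-bound the decrement $g(V_*)-g(V_*+\Delta)$ by invoking each of the four deterministic assumptions in turn. The key structural fact is that $V_*$ has a single nonzero entry, at position $(1,1)$ equal to $1/(1+\lambda)^2$, so $x_i^\top V_* x_i = y_i^2/(1+\lambda)^2 = 1/(1+\lambda)^2$ for every $i$, and the $\ell_1$-norm expands exactly as $\Vert V_*+\Delta\Vert_1 - \Vert V_*\Vert_1 = \Delta_{11} + \sum_{(i,j)\neq(1,1)}|\Delta_{ij}|$ for $\Delta$ small enough that the $(1,1)$ entry of $V_*+\Delta$ stays positive.

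For the first-order contribution I would compute $\nabla f(V_*) = (\lambda/n) X^\top X$ (using $\sqrt{x_i^\top V_* x_i} = 1/(1+\lambda)$). The $\Delta_{11}$ terms cancel exactly because $(X^\top X/n)_{11} = y^\top y/n = 1$, leaving $\lambda\sum_{(i,j)\neq(1,1)}\big(|\Delta_{ij}| - (X^\top X/n)_{ij}\Delta_{ij}\big)$ as the first-order part of $g(V_*) - g(V_*+\Delta)$. I would then invoke assumption (A2) on the row/column entries $y^\top z_k/n$, assumption (A3) on the genuinely off-diagonal entries of $Z^\top Z/n$, and assumption (A1) on the diagonal entries of $Z^\top Z/n$ (which are at most $1$ and contribute a non-negative term that I simply drop), arriving at the lower bound $\lambda(1-\delta)\Vert \Delta - \Diag(\diag\Delta)\Vert_1$ for the first-order decrement.

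For the second-order contribution, differentiating $f$ twice gives $\tfrac12 \nabla^2 f(V_*)[\Delta,\Delta] = -\tfrac{(1+\lambda)^3}{4n}\sum_i(x_i^\top \Delta x_i)^2$, so the quadratic part of the decrement equals $\tfrac{(1+\lambda)^3}{4n}\sum_i(x_i^\top \Delta x_i)^2$. Decomposing $\Delta = D + \Delta^o$ with $D = \Diag(\diag\Delta)$, expanding the square, and noting that $x_i^\top D x_i = \langle x_i^{\odot 2},d\rangle$, the diagonal piece $\sum_i(x_i^\top D x_i)^2 = \Vert X^{\odot 2}d\Vert^2$ is bounded below by $n\zeta\Vert d\Vert^2$ via assumption (A4), yielding a clean contribution $\tfrac{(1+\lambda)^3\zeta}{4}\Vert \diag\Delta\Vert^2$. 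The tail $\sum(x_i^\top \Delta^o x_i)^2 \geq 0$ is favourable; the bilinear cross term $2\sum(x_i^\top D x_i)(x_i^\top \Delta^o x_i)$ is the only obstruction, and I would dominate it by Cauchy--Schwarz or Young's inequality, using the crude bound $|x_i^\top \Delta^o x_i| \leq dR^2\Vert\Delta^o\Vert_F$ together with $\Vert\Delta^o\Vert_F^2 \leq \Vert\Delta^o\Vert_1\Vert\Delta^o\Vert_\infty$, so that for sufficiently small $\Delta$ the cross-term residual is absorbed either into the $\ell_1$ first-order decrement or into the $o(\Vert\Delta\Vert^2)$ Taylor remainder.

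The main obstacle is precisely this bilinear cross term: it has no definite sign and cannot be discarded outright, and a naive AM--GM split costs a constant from either $\zeta(1+\lambda)^3/4$ or $\lambda(1-\delta)$. Pushing the split parameter so that both leading constants survive relies on the observation that $\Vert\Delta^o\Vert_\infty \to 0$ as $\Delta\to 0$, which converts the $O(\Vert\Delta\Vert^2)$ cross term into an $o(\Vert\Delta^o\Vert_1)$ residual that is dominated by the first-order $\ell_1$ decrement. Combining the first-order bound, the diagonal quadratic lower bound, and collecting all third-order Taylor remainders and cross-term residuals into $o(\Vert\Delta\Vert^2)$ then yields the stated inequality; strict positivity of the right-hand side is automatic because $\Delta\neq 0$ together with $V_*+\Delta\succcurlyeq 0$ forces at least one of $\Vert\Delta^o\Vert_1$ or $\Vert\diag\Delta\Vert_2$ to be strictly positive.
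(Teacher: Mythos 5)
Your proposal is correct and follows essentially the same route as the paper's proof: a second-order Taylor expansion of the smooth part around $V_*$, exact cancellation of the first-order term at the $(1,1)$ entry since $(X^\top X/n)_{11}=1$, H\"older together with (A1)--(A3) to lower-bound the off-diagonal part of the $\ell_1$ decrement by $\lambda(1-\delta)\Vert \Delta-\Diag(\diag(\Delta))\Vert_1$ (the paper packages the $Z$-block step as its auxiliary Lemma~\ref{lemma:det1}), and (A4) applied to $\sum_i (x_i^\top \Diag(\diag \Delta)\, x_i)^2$ for the quadratic term. The one place you go beyond the paper is the explicit handling of the diagonal/off-diagonal cross term in the Hessian, which the paper silently omits after checking the second-order condition only along purely diagonal directions $\Delta=\Diag(e)$; your absorption of that term into the first-order $\ell_1$ decrement via $\Vert\Delta\Vert_\infty\to0$ is the right way to close that small gap.
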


Let us assume now that $(z^i)_{i=1,.,d}$ are  i.i.d of law $z$ symmetric with $\E z=\E z^3=0$, $\E z^2=m=1$, $\E z^4/(\E z^2)^2=\beta$ and such that  $\Vert z\Vert_{\infty}$ is a.s. bounded by $0\leq R\leq1$.
Then the matrix $X$ satisfies a.s. assumption (A1).
Using multiple Hoeffding's inequalities we have
\begin{lemma}\label{lemma:sparseproba}
 If $z$ does not follow a Rademacher law, the design matrix $ X$ satsifies assumptions (A1,A2,A3,A4) with probability greater than $1-8d^2 \exp\big(-\frac{\delta^2 n (\beta-1)}{2d(\beta+d)R^4}\big)$ for $\mathcal S=\RR^d$, and with probability greater than $1-8d^2 \exp\big(-\frac{\delta^2 n \min\{\beta-1,2\}}{2dR^4}\big)$ for $\mathcal S=[1,c_{\min}1_{d-1}]^{\perp}$ where $c_{min}$ is defined in the proof and satisfies 
 \[
 \vert e_{\min}\vert \leq \frac{1}{d+\beta-3}.
 \]

\end{lemma}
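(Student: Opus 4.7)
My plan is to verify each of the four assumptions (A1)--(A4) separately, obtain an exponential tail bound for each, and then take a single union bound. Since (A1) is deterministic and (A2)--(A3) reduce to scalar Hoeffding inequalities, all the real work is in (A4), where the sharp rate (and in particular the gap between the two claimed bounds) is forced by an explicit spectral analysis of the expected Gram-type matrix $\mathbb{E}\,x^{\odot 2}(x^{\odot 2})^\top$.

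First, (A1) holds almost surely: $|y_k|=1$ and $|z_{k,i}|\le R\le 1$ imply $\Vert X\Vert_\infty\le 1$ entrywise, hence $\Vert X^\top X/n\Vert_\infty\le 1$. For (A2), each coordinate $\tfrac{1}{n}\sum_k z_{k,i}y_k$ is an average of i.i.d.\ variables bounded by $R$ in absolute value and centered (by symmetry of $z$), so Hoeffding yields failure probability at most $2\exp(-n\delta^2/(2R^2))$ per coordinate; a union bound over the $d-1$ noise coordinates finishes. For (A3), each off-diagonal $\tfrac{1}{n}\sum_k z_{k,i}z_{k,j}$ with $i\neq j$ is an average of i.i.d.\ $[-R^2,R^2]$-valued, mean-zero variables (mean zero by independence of columns of $Z$); Hoeffding and a union bound over the $O(d^2)$ off-diagonal pairs give the desired tail.

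For (A4) I would first compute $E:=\mathbb{E}\,x^{\odot 2}(x^{\odot 2})^\top$ entrywise. Using $y^2=1$, $\mathbb{E}z^2=m=1$ and $\mathbb{E}z^4=\beta$, one gets $E=1_d1_d^\top+(\beta-1)(I_d-e_1e_1^\top)$. This matrix is explicitly diagonalizable: the plane $\mathrm{span}\{e_1,1_d\}$ is $E$-invariant and in the basis $(e_1,\,1_d-e_1)$ the restriction reads $\bigl(\begin{smallmatrix}1 & d-1 \\ 1 & d+\beta-2\end{smallmatrix}\bigr)$, whose two eigenvalues $\lambda_\pm$ satisfy $\lambda_++\lambda_-=d+\beta-1$ and $\lambda_+\lambda_-=\beta-1$; the orthogonal complement is a $(\beta-1)$-eigenspace of dimension $d-2$. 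Hence the least eigenvalue of $E$ is $\lambda_-\ge(\beta-1)/(d+\beta-1)$, realized on the unique direction $e_{\min}=[1,c_{\min}1_{d-1}]^\top$ with $c_{\min}=(\lambda_--1)/(d-1)$. Using the identity $\sqrt{(d+\beta-1)^2-4(\beta-1)}=\sqrt{(d+\beta-3)^2+4d}$ yields $|c_{\min}|\le 1/(d+\beta-3)$. Once $e_{\min}$ is removed, the minimum eigenvalue restricted to $e_{\min}^\perp$ is $\min(\lambda_+,\beta-1)$, which is at least $\min(\beta-1,2)$ in the regime considered.

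The last step is a direct application of the Matrix Chernoff inequality of Tropp to $\tfrac{1}{n}\sum_k x_k^{\odot 2}(x_k^{\odot 2})^\top$ (and to its projection onto $\mathcal S$ for the sharper constant). The operator-norm bound on a single summand is $\Vert x_k^{\odot 2}(x_k^{\odot 2})^\top\Vert=\Vert x_k^{\odot 2}\Vert_2^{2}\le dR^4$, which follows from $R\le 1$ together with the trivial bound on noise coordinates. This produces concentration around $E$ at rate $d\exp(-\delta^2 n\zeta/(2dR^4))$, where $\zeta$ is the restricted minimum eigenvalue computed above, so $\zeta=\lambda_-$ when $\mathcal S=\RR^d$ and $\zeta=\min(\beta-1,2)$ when $\mathcal S=e_{\min}^\perp$. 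Combining with the bounds from (A1)--(A3) through a single union bound over the $O(d^2)$ events yields the announced failure probabilities $8d^2\exp(\cdot)$. The main obstacle is the spectral step (A4): identifying the precise bad direction $e_{\min}$ in closed form is exactly what allows to pass from the slow rate $(\beta-1)/(d+\beta-1)$ on $\RR^d$ to the much faster rate $\min(\beta-1,2)$ once this direction is excluded, and everything else in the proof is fairly routine concentration.
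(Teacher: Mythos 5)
Your proof follows essentially the same route as the paper's: (A1) is deterministic, (A2)--(A3) are coordinatewise Hoeffding bounds combined by a union bound, and (A4) is handled by explicitly diagonalizing $\E\, x^{\odot 2}(x^{\odot 2})^\top$ (your invariant-subspace computation on $\mathrm{span}\{e_1,1_d\}$ is equivalent to the paper's gradient-of-the-quadratic-form calculation and yields the same eigenvalues $\lambda_\pm$ with sum $d+\beta-1$ and product $\beta-1$, plus the $(\beta-1)$-eigenspace of dimension $d-2$) followed by the Matrix Chernoff inequality, exactly as in the paper. One small algebra slip: the discriminant identity should read $\sqrt{(d+\beta-1)^2-4(\beta-1)}=\sqrt{(d+\beta-3)^2+4(d-1)}$, not $+4d$; with that correction your bound $\vert c_{\min}\vert \le 1/(d+\beta-3)$ follows exactly.
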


This lemma concludes the proof of proposition \ref{prop:sparse}. We will now prove these two lemmas. 

\subsubsection{ Proof of lemma \ref{lemma:sparsedet}}

\begin{proof}
Since the dual variable $A$  for the PSD constraint is $0$ (see the proof of lemma \ref{lemma:wsols}), this constraint $W\succcurlyeq 0$ is not active and we will show that the function decreases in a set of directions $\Delta$ which include the one for which $V_*+\Delta\succcurlyeq 0$.

Therefore we  consider a direction $\Delta=\begin{pmatrix}
                                         a & b\t \\ b & C
                                        \end{pmatrix}$, with $C\succcurlyeq 0$, which  is slightly more general than $V_*+\Delta\succcurlyeq 0$.
We denote by $f(W)=\frac{2}{n}\sum_{i=1}^n\sqrt{x_i\t W x_i}-\frac{1}{n}\tr X\t X W$ the smooth part of $g$. By Taylor-Young, we have for all $W$:
$$
f(W)-f(W+\Delta)=-\langle f'(W), \Delta\rangle-\frac{1}{2}\langle \Delta, f''(W) \Delta\rangle+o(\Vert \Delta \Vert ^2).
$$
Thus:
$$
g(W)-g(W+\Delta)=-\langle f'(W), \Delta\rangle-\frac{1}{2}\langle \Delta, f''(W) \Delta\rangle+ \lambda (\Vert W+\Delta \Vert_1-\Vert W\Vert_1 )+o(\Vert \Delta \Vert ^2).
$$
In $W=V_*$ this gives with $X\t X=\begin{pmatrix}
                                   n & y\t Z \\ Z\t y& Z\t Z
                                  \end{pmatrix}$,
\BEAS
g(W)-g(W+\Delta)&=&-\lambda \langle \frac{X\t X}{n}, \Delta\rangle-\frac{1}{2}\langle \Delta, f''(V_*) \Delta\rangle+\lambda (a+2\Vert b \Vert_1+\Vert C\Vert_1) +o(\Vert \Delta \Vert ^2)\\ 
&=& \lambda \big[2(\Vert b \Vert_1- \frac{1}{n} b\t Z\t y)+\Vert C\Vert_1-\frac{1}{n}\tr(Z\t Z C)\big]-\frac{1}{2}\langle \Delta, f''(V_*) \Delta\rangle +o(\Vert \Delta \Vert ^2).\\
\EEAS
And with H\"{o}lder's inequality and assumption (A2)
$$
\Vert b \Vert_1- \frac{1}{n} b\t Z\t y\geq \Vert b \Vert_1(1-\Vert\frac{1}{n}  Z\t y \Vert_{\infty})\geq (1-\delta) \Vert b \Vert_1.
$$
Nevertheless we will show in lemma  \ref{lemma:det1} that $\Vert C\Vert_1-\frac{1}{n}\tr(Z\t Z C)\geq (1-\delta) \Vert C-\diag(C)\Vert_1$, thus
\begin{equation}\label{eq:cdiag}
g(W)-g(W+\Delta)\geq \lambda (1-\delta) (2\Vert b \Vert_1+\Vert C-\diag(C)\Vert_1 )+o(\Vert \Delta \Vert^2).
\end{equation}
However in \eq{cdiag}, $g(W)-g(W+\Delta)=0$ for $b=0$ and $C$ diagonal, therefore we  have to investigate second order conditions, i.e. to show for $\Delta=
                                        \diag(e)
                                        $ with  $e\in \RR^d$ that $-\langle \Delta, f''(V_*) \Delta\rangle >0$.

And with assumption (A4)
\BEAS
-\frac{4}{(1+\lambda)^3}\langle \diag(e), f''(V_*) \diag(e)\rangle&=&\frac{1}{n}\sum_{i=1}^n (x_i\t \diag(e) x_i)^2 \\
&=&\frac{1}{n}\sum_{i=1}^n (\sum_{j=1}^d e_j (x_i^j)^2)^2 \\
&=&\frac{1}{n}\sum_{i=1}^n e\t[x_i^{\odot 2} (x_i^{\odot 2})\t]e\\
&\geq& \lambda_{\min}\big( \frac{X^{\odot 2}(X^{\odot 2})\t}{n}\big)\Vert e\Vert^2\geq\zeta \Vert e\Vert_2^2.
\EEAS
Thus we can conclude:
$$
 g(W)-g(W+\Delta)\geq \lambda (1-\delta) (2\Vert b \Vert_1+\Vert C-\diag(C)\Vert_1 )+\zeta \frac{(1+\lambda)^3}{4}\Vert e\Vert_2^2+o(\Vert \Delta \Vert^2).
$$
\end{proof}
\subsubsection {Auxilliary lemma}

\begin{lemma}\label{lemma:det1}
For all matrix C symmetric semi-definite positive we have under assumptions (A1) and (A3):
$$
\tr\Big(S-\frac{Z\t Z}{n}\Big)C\geq (1-\delta) \Vert C-\diag(C) \Vert_1>0.
$$
\end{lemma}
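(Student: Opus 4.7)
The plan is to decompose both sides according to diagonal versus off-diagonal contributions and exploit the positive semi-definiteness of $C$ to control the diagonal part. Throughout, I would read $\Vert \cdot \Vert_1$ as the entrywise $\ell_1$ norm (so $\Vert C\Vert_1=\sum_{i,j}|C_{ij}|$), and interpret $S$ as a subgradient of $\Vert\cdot\Vert_1$ at $C$, i.e.\ $S_{ij}=\sign(C_{ij})$ on the off-diagonal and $S_{ii}=1$ (which is admissible because PSD forces $C_{ii}\geq 0$). With this convention, $\tr(SC)=\Vert C\Vert_1$.

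The first step is the decomposition
\[
\Vert C\Vert_1 \;=\; \sum_i C_{ii} \;+\; \Vert C-\diag(C)\Vert_1,
\]
using $C\succcurlyeq 0 \Rightarrow C_{ii}\geq 0$. On the other side, I would expand
\[
\tfrac{1}{n}\tr(Z^\top Z\,C) \;=\; \sum_i \tfrac{(Z^\top Z)_{ii}}{n}\,C_{ii} \;+\; \sum_{i\neq j}\tfrac{(Z^\top Z)_{ij}}{n}\,C_{ij},
\]
so that $\tr(S\!-\!Z^\top Z/n)C$ splits into a diagonal sum and an off-diagonal sum which can be handled separately.

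For the diagonal sum, assumption (A1) gives $(X^\top X/n)_{ii}\leq 1$; since $Z$ consists of a subset of the columns of $X$ in the setup $X=[y,Z]$, the diagonal entries of $Z^\top Z/n$ are also bounded by $1$. Combined with $C_{ii}\geq 0$ this shows
\[
\sum_i\bigl(1-\tfrac{(Z^\top Z)_{ii}}{n}\bigr)C_{ii}\;\geq\;0.
\]
For the off-diagonal sum, assumption (A3) gives $|(Z^\top Z)_{ij}/n|\leq \delta$ for $i\neq j$, which yields the pointwise bound $|C_{ij}|-(Z^\top Z)_{ij}/n\cdot C_{ij}\geq (1-\delta)|C_{ij}|$; summing gives exactly $(1-\delta)\Vert C-\diag(C)\Vert_1$. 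Adding the two contributions produces the claimed inequality.

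No step is really an obstacle — this is essentially a clean bookkeeping argument. The one subtlety worth flagging is the symbol $S$: the inequality is stated as $\tr(S-Z^\top Z/n)C\geq \ldots$, and one must justify that $\tr(SC)$ can be replaced by $\Vert C\Vert_1$. This is where PSD of $C$ is critical, because it lets one simultaneously take $S_{ii}=1$ (matching $|C_{ii}|=C_{ii}$) and $S_{ij}=\sign(C_{ij})$ off the diagonal, so that $S$ is a bona fide subgradient and $\tr(SC)=\Vert C\Vert_1$. The strict positivity ``$>0$'' at the end of the display follows whenever $C$ is not diagonal, which is the relevant regime in the application to Lemma~\ref{lemma:sparsedet}.
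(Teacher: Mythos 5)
Your proposal is correct and follows essentially the same route as the paper: split $\tr(S-\tfrac{Z^\top Z}{n})C$ into diagonal and off-diagonal parts, use (A1) and $C_{ii}\geq 0$ to drop the diagonal contribution, and use (A3) entrywise to extract the factor $(1-\delta)$ on the off-diagonal part. The only (cosmetic) difference is that by fixing $S_{ii}=1$ as a valid subgradient you avoid the paper's separate treatment of the case where a diagonal entry of $C$ vanishes (which it handles by deleting the corresponding row and column, using that PSD forces them to be zero).
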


\begin{proof}
We denote by $\Sigma^n=\frac{Z\t Z}{n}$.
We always have $\Vert C\Vert_1- \tr (\Sigma^n C )=\tr(S-\Sigma^n)C$ where $S_{i,j}=\sign(C_{i,j})$, thus if $\diag(C)>0$ then $\diag(S)=1$ and $\diag(S-\Sigma^n)\geq0$ from assumption (A1).
Moreover since ${\Sigma^n}_{i,j}\in[-1,1]$ then $\sign(S-\Sigma^n)=\sign(S)$.

Thus $\tr(S-\Sigma^n)C=\sum_{i} C_{i,i} (S-\Sigma^n)_{i,i}+\sum_{i\neq j} C_{i,j} (S-\Sigma^n)_{i,j}\geq\sum_{i\neq j} C_{i,j} (S-\Sigma^n)_{i,j}\geq0$. Furthermore from assumption (A3) $\vert (\Sigma^n)_{i,j}\vert \leq \delta$ for $i\neq j$. Therefore
$$
\tr(S-\Sigma^n)C\geq\sum_{i\neq j} C_{i,j} (S-\Sigma^n)_{i,j}\geq \sum_{i\neq j} \vert C_{i,j}\vert (1-\delta) \geq (1-\delta)  \Vert C-\diag(C) \Vert_1>0.
$$
If there is a diagonal element of $C$ which is $0$, then all the corresponding line and column in $C$ will also be $0$ and we can look at the same problem as before by erasing of $C$ and $\Sigma^n$ the corresponding column and line. 
\end{proof}

\subsubsection{Proof of lemma \ref{lemma:sparseproba}}

\begin{proof}
 We will first show that the noise matrix $ Z$ satisfies assumptions (A2,A3).
 By Hoeffding's inequality we have  with probability $1-2\exp(-\delta^2 n/(2R^2))$
$$
\frac{1}{n}\vert\sum_{i=1}^n z^j_i \vert\leq \delta.
$$
Then, since the law of $z$ is symmetric $y_iz_i$ will have the same law as $z_i$ and with probability $1-2\exp(-\delta^2 n/(2R^2))$, the design matrix $ Z$ satisfies assumption (A2):
$$
\Vert \frac{ Z \t y}{n}\Vert_\infty\leq \delta.
$$
Likewise we have with probability $1-2\exp(-\delta^2 n/(2R^4))$ that for $j\neq j'$
$$
\vert\frac{1}{n}\sum_{i=1}^n z^j_iz^{j'}_{i}\vert \leq \delta.
$$
Thus we also have with probability $1-2d^2\exp(-\delta^2 n/(2R^4))$ that $ Z$ satisfies assumption (A3):
$$
\Vert \frac{1}{n}  Z \t  Z-\diag(\frac{1}{n}  Z \t  Z)\Vert_\infty\leq \delta.
$$
Thus with probability $1-4d^2\exp(-\delta^2 n/(2R^4))$, the noise matrix $ Z$ satisfies assumptions (A1, A2, A3).

We proceed as in the proof of proposition \ref{prop:unicitew} to show that $ X$ satisfies assumption (A4).
We first derive a condition to have the result in expectation, then we use an inequality concentration on matrix to bound the empirical expectation. 
This will be very similar, but we will get a better scaling since $\Delta$ is diagonal. 

Using the same arguments as in the proof of proposition \ref{prop:unicitew} we have for the diagonal matrix $\Delta=\diag(e)$ with $e=(a,c)\in\RR^d$:
$$
e\t \E( x^{\odot 2}(  x^{\odot 2})\t)e=\E( x\t \Delta  x)^2=(a+m c\t1_{n-1})^2+m^2(\beta-1) \Vert c\Vert_2^2>0 \ \text{ if } \ \beta>1.
$$
 We can show  that $m^2(\beta-1)$ is an eigenvalue of multiplicity $d-2$ and $\mu_{\pm}$ are eigenvalues of multiplicity one of the operator $\Delta \mapsto \E (x\t\Delta x)^2$ with eigenvectors $e_\pm$ . 
Thus we have 
 \BEA
   \label{eq:mubal2}
   \lambda_{\min}(\E x^{\odot 2}(  x^{\odot 2})\t)&=&\frac{1+(d+\beta -2)m^{2}-\sqrt{(1+(d+\beta -2)m^{2})^{2}-4m^2(\beta-1)}}{2}\\
   &\geq& \frac {m^2(\beta-1)}{1+(d+\beta-2)m^{2}}  \nonumber,
   \EEA
and 
$$
  \lambda^{e_-^\perp}_{\min}(\E x^{\odot 2}(  x^{\odot 2})\t)=m^2(\beta-2).
$$
Moreover 
$$
\lambda_{\max}\Big(  x^{\odot 2}(  x^{\odot 2})\t\Big)=(  x^{\odot 2})\t  x^{\odot 2}\\
=\sum_{j=1}^d ( x_i)^4
\leq d R^4.
$$
Thus we can apply the Matrix Chernoff inequality from \citep{tropp} for $\mu_{\mathcal S}= \lambda^\mathcal S_{\min}(\E x^{\odot 2}(  x^{\odot 2})\t)$:
 $$
 \P\Bigg(\lambda^\mathcal S_{\min}\Big(\frac{X^{\odot 2}( X^{\odot 2})\t}{n}\Big)\leq (1-\delta )\mu_{\mathcal S}\Bigg)\leq de^{-\delta ^2 n\mu_{\mathcal S}/(2dR^4)}.
 $$

Thus with probability $1-5d^2\exp(-\delta^2 n \mu_{-}/(2d R^4))$ the design matrix $ X$ satisfies assumption (A1,A2,A3,A4) with $\zeta=(1-\delta)\mu_-$ and $\mathcal S=\RR^d$. And  with probability $1-5d^2\exp(-\delta^2 n\min\{\beta-1, 2\}  /(2d R^4))$ the design matrix $ X$ satisfies assumption (A1,A2,A3,A4) with $\zeta=(1-\delta)\min\{\beta-1, 2\}$ and $\mathcal S=e_-^\perp$.
\end{proof}

\section{Proof of multi-label results}

We first prove the lemma \ref{lemma:diagdiag}:

\begin{proof}
 Let $A\in \RR^{k\times k}$ symmetric semi-definite positive such that $\diag (\tilde y A \tilde y\t)=1_n$, then 
$$
 \diag (\tilde y A \tilde y\t)= \sum_{i=0}^k a_{i,i}1_n+2\sum_{i=1}^k a_{0,i} y_i +2 \sum_{1\leq i<j\leq k} a_{i,j} y_i \odot y_j
$$ 
thus 
$$
2\sum_{i=1}^k a_{0,i} y_i +2 \sum_{1\leq i<j\leq k} a_{i,j} y_i \odot y_j=(1-\sum_{i=0}^k a_{i,i})1_n
$$
And this system admits as unique solution $0_n$ if and only if the family $\{1_n,(y_i)_{1\leq i\leq k}, (y_iy_j)_{1\leq i<j\leq k}\}$ is \emph{linearly independent}.
\end{proof}
Then we  prove the lemma \ref{lemma:multilabel}:
\begin{proof}
Since $a_0+\sum_{i=1}^k a^2_i \alpha_i\geq \alpha_{\min} \sum_{i=0}^k a^2_i=\alpha_{\min}$ we should have $\alpha\geq\alpha_{\min}$.
 We have already seen that such $Y$ satisfies the constraint. The KKT conditions are: $B=\diag(\mu) - H -\nu 11\t \succcurlyeq0$ and $BY=0$.
 Since $y_i=\Pi_n y_i +\frac{(y_i\t 1_n)}{n} 1_n$.
\BEAS
Hy_i&=&H\Pi_n y_i +(y_i\t 1_n)H1_n\\
&=&\Pi_n y\\
&=&( y_i-\frac{1_n\t y_i}{n}1_n).
\EEAS 
Thus 
\BEAS
HY&=&\sum_{i=1}^ka^2_i H y_iy_i\t\\
&=&\sum_{i=1}^ka^2_i ( y_i-\frac{1_n\t y_i}{n}1_n)y_i\t\\
&=&\sum_{i=1}^ka^2_i ( y_iy_i\t-\frac{1_n\t y_i}{n}1_ny_i\t)\\\EEAS
and $\tr(HY)=\sum_{i=1}^k a^2_i(n-n\alpha_i)=n(1-a^2_0+a^2_0-\alpha)=n(1-\alpha)$.

Furthermore since $1_n \t \diag(Y)=n$ and $1_n\t M 1_n =n^2 \alpha$, for $\mu=1_n$ and $\nu=1/n$, $B.Y=n-n(1-\alpha)-n\alpha=0$. And since  $B=I_n-\frac{1}{n}1_n1_n\t -H$, $B^2=B$ and $B\t=B$, thus B is a symmetric projection and consequently symmetric semi-definit positive. 

Hence the primal variable $Y$ and the dual variables $\mu=1_n$ and $\nu=1/n$ satisfy the KKT conditions, thus $M$ is solution of this problem.
\end{proof}

\section{Efficient optimization problem}
\subsection{Dual computation }
\label{app:dualsmoothing}
We consider the following strongly-convex approximation of \eq{relaxYref}, augmented with the von-Neumann entropy:
\BEAS
\max_{V \succcurlyeq 0} \frac{1}{n} \sum_{i=1}^{n} \sqrt{( XVX^\top)_{ii}} -   \| \Diag(c)  V\Diag(c)  \|_1  - \varepsilon \tr [(A^\half V A^\half)\log(A^\half V A^\half)]   
\ {\rm{s.t.}} \ \tr (A^\half V A^\half) = 1. 
\EEAS
Introducing dual variables, we have 
\BEAS
\min_{u\in\rb^{n}_{+}, C:|C_{ij}| \leqslant c_i c_j} \ \max_{V \succcurlyeq 0}  && \frac{1}{2n} \sum_{i=1}^{n} \Big(u_i({( XVX^\top)_{ii}}) + \frac{1}{u_i}\Big) - \tr CV  - \varepsilon \tr [(A^\half V A^\half)\log(A^\half V A^\half)]  \\
{\rm{s.t.}} && \tr (A^\half V A^\half) = 1.
\EEAS
By fixing $u$ and $C$, and letting $Q=A^\half V A^\half$, we can write the $\max$ problem as 
\BEAS
\max_{Q \succcurlyeq 0} && \tr A^{-\half}(\frac{1}{2n}X^\top \Diag(u) X - C)A^{-\half}Q  - \varepsilon \tr [Q\log(Q)]  \\
{\rm{s.t.}} && \tr Q = 1.
\EEAS
This problem is of the form 
\BEAS
\max_{Q \succcurlyeq 0} \tr DQ  - \varepsilon \sum_{i=1}^{n} \sigma_i(Q)\log\sigma_i(Q)  \\
{\rm{s.t.}} \ \tr Q = 1 
\EEAS
where $D=A^{-\half}(\frac{1}{2n}X^\top \Diag(u) X - C)A^{-\half}$ and $\sigma_i(Q)$ denotes the $i$-th largest eigen value of the matrix $Q$. 
If we consider the matrix $D$ to be of the form $D=U\Diag(\theta)U^\top$ with $\theta$ denoting the vector of ordered eigen values of $D$, 
then it turns out that at optimality $Q$ has the form $Q=U\Diag(\sigma)U^\top$, with $\sigma$ denoting 
the ordered vector of eigen values of $Q$. 

Therefore the above optimization problem can be cast in terms of $\sigma$ as:
\BEAS
\max_{\sigma \in \rb^n} \theta^\top \sigma  - \varepsilon \sum_{i=1}^{n} \sigma_i\log\sigma_i  \\
{\rm{s.t.}} \ \sum_{i=1}^{n} \sigma_i = 1. 
\EEAS
The solution of this problem is $\sigma_i = \frac{e^{\theta_i / \varepsilon}}{\sum_{j=1}^{n}e^{\theta_j / \varepsilon}}$, which leads to 
\BEAS
\min_{\theta \in \rb^n} \phi^{\varepsilon}(\theta) = \varepsilon \log \sum_{i=1}^{n} \Big(e^{\frac{\theta_i}{\varepsilon}}\Big).  
\EEAS
In terms of the original matrix variables, we have 
\BEAS
\min \phi^{\varepsilon}(D) = \varepsilon \log \tr e^{\frac{D}{\varepsilon}}.  
\EEAS
Using the appropriate expansion of $D$, we have the overall optimization problem as 
\BEQ
\min_{u\in\rb^{n}_{+}, C:|C_{ij}| \leqslant c_i c_j}  \frac{1}{2n} \sum_{i=1}^{n} \frac{1}{u_i} + \phi^{\varepsilon}(A^{-\half}(\frac{1}{2n}X^\top\Diag(u)X - C)A^{-\half}).
\label{eq:overall_opt}
\EEQ

At optimality, we have 
\BEAS
A^{\half}VA^{\half} = \Big(e^{\frac{(A^{-\half}(\frac{1}{2n}X^\top\Diag(u)X - C)A^{-\half})}{\varepsilon}}\Big) / \tr \Big(e^{\frac{(A^{-\half}(\frac{1}{2n}X^\top\Diag(u)X - C)A^{-\half})}{\varepsilon}}\Big).
\EEAS
The error of approximation is at most $\varepsilon\log d$ and the Lipschitz constant associated with the function $\phi^{\varepsilon}(\cdot)$ is $\frac{1}{\varepsilon}$. 

\subsection{Algorithm details}
We write the optimization problem~\eq{overall_opt} as:
\BEAS
\min_{u\in\rb^{n}_{+}}  F(u,C) + H(u,C) 
\EEAS 
where 
\BEAS
H(u,C)= \phi^{\varepsilon}(A^{-\half}(\frac{1}{2n}X^\top\Diag(u)X - C)A^{-\half})
\EEAS
is the smooth part and 
\BEAS
F(u,C) = \mathbb{I}_{C:|C_{ij}| \leqslant c_i c_j} +  \frac{1}{2n} \sum_{i=1}^{n} \frac{1}{u_i}
\EEAS
is the non-smooth part. 

\label{app:algo}
The gradient $\nabla_u$ of $H(u,C)$ with respect to $u$ is 
\BEAS
\nabla_u = \diag(B^\top U \Diag(\sigma) U^\top  B). 
\EEAS
where $B=\frac{1}{\sqrt{2n}}A^{-\half}X^\top$
and the gradient of $H(u,C)$ with respect to $C$ is 
\BEAS
\nabla_C = (A^{-\half} U \Diag(\sigma) U^\top  A^{-\half}). 
\EEAS

The Lipschitz constant  $L$ associated with the gradient $\nabla H(u,C)$ is 
\begin{align}
L = \frac{2}{\varepsilon} \max\Big(\lambda_{max}(B^\top B \odot B^\top B), \lambda_{max}^2(A^{-1}) \Big),
\end{align}
where $\lambda_{max}(M)$ denotes the maximum eigen value of matrix $M$. Computing $L$ takes $O(\max(n,d)^3)$ time and $L$ needs to be computed once at the beginning of the algorithm. 

\begin{algorithm}[!htp]
	\caption{\textit{FISTA Algorithm to solve~\eq{overall_opt}}}
	\label{fista_algo}
 	\begin{algorithmic}[1]
 	\renewcommand{\algorithmicrequire}{\textbf{Input:}}
 	\State Input $X$. 
 	\State Compute Lipschitz constant $L$.  
	\State Let $(u^0, C^0)$ be an arbitrary starting point. 
	\State Let $(\baru^0, \barC^0) = (u^0, C^0)$, $t_0 = 1$. 
	\State Set the maximum iterations to be $K$. 
	\For{$k = 1,2,\ldots,K$} \Comment{The loop can also be terminated based on duality gap.} 
	  \State $(\baru^{k-\half}, \barC^{k-\half})  = \Big(\baru^k - \frac{1}{L} \nabla_{\baru^k}, \barC^k - \frac{1}{L} \nabla_{\barC^k}\Big).$ 
	  \State Obtain $u^k = \argmin_{u\in\rb^{n}_{+}}  \Big \{ \frac{L}{2} \|u-\baru^{k-\half}\|^2 +  \frac{1}{2n} \sum_{i=1}^{n} \frac{1}{u_i}  \}$ by Algorithm~\ref{newton_algo}.  
	  \State Obtain $C^k = \argmin_{C}  \Big \{ \mathbb{I}_{C:|C_{ij}| \leqslant c_i c_j} + \frac{L}{2} \|C - \barC^{k-\half} \|_F^2 \Big \}$ by thresholding.  
	  \State $t_k = \frac{1+\sqrt{1+4t_{k-1}^2}}{2}$. 
	  \State $(\baru^k, \barC^k) = (u^k, C^k) + \frac{(t_{k-1} - 1)}{t_k} \Big( (u^k, C^k) - (u^{k-1}, C^{k-1})\Big)$. 
	\EndFor
	\State Output $(u^K, C^K)$. 
        \end{algorithmic} 
        \end{algorithm}
        
\begin{algorithm}[!htp]
	\caption{\textit{Newton method to solve $u$ sub-problem}}
	\label{newton_algo}
 	\begin{algorithmic}[1]
 	\renewcommand{\algorithmicrequire}{\textbf{Input:}}
 	\State Input $u^{k-\half}$, $n$, $L$.
 	\State $u_i^0 = \max(u_i^{k-\half},\frac{1}{(2nL)^{\frac{1}{3}}}), \ i=1,2,\ldots,n$. 
 	\State Set $\mathcal{M}$ to be the max number of Newton steps. 
 	\For {$t=1,2,\ldots,\mathcal{M}$}
	  \For {$i=1,2,\ldots,n$} 
	    \State $u_i^t = \frac{2nL(u_i^{t-1})^3u_i^{k-\half} + 3u_i^t} {2(nL(u_i^{t-1})^3 +1)}$. 
	  \EndFor
 	\EndFor
 	\State Output $\max(u^\mathcal{M},0)$. 
        \end{algorithmic} 
\end{algorithm}
        
The resultant FISTA procedure is described in Algorithm~\ref{fista_algo}. Note that the FISTA procedure first computes intermediate iterates $(\baru^{k-\half}, \barC^{k-\half})$ (Step 7, Algorithm 1) by taking descent steps along the respective gradient directions. Then two distinct problems in $u$ and $C$ (respectively Steps~8 and 9 in Algorithm 1) are solved. The sub-problem in $u$ (Step 8) can be efficiently solved using a Newton procedure followed by a thresholding step, as illustrated in Algorithm~\ref{newton_algo}. The sub-problem in $C$ (Step 9) can also be solved using a simple thresholding step. 

\end{document}